\pgfplotsset{compat=1.17}
\newenvironment{breakablealgorithm}
  {
   \begin{center}
     \refstepcounter{algorithm}
     \hrule height.5pt depth0pt \kern-3pt
     \renewcommand{\caption}[2][\relax]{
       {\raggedright\textbf{\ALG@name~\thealgorithm} ##2\par}%
       \ifx\relax##1\relax 
         \addcontentsline{loa}{algorithm}{\protect\numberline{\thealgorithm}##2}%
       \else 
         \addcontentsline{loa}{algorithm}{\protect\numberline{\thealgorithm}##1}%
       \fi
       \kern2pt\hrule\kern2pt
     }
  }{
     \kern2pt\hrule\relax
   \end{center}
  }
\newcommand{\N}{\mathbb{N}}
\newcommand{\R}{\mathbb{R}}
\newcommand{\RR}{\mathbb{R}}
\newcommand{\Gcal}{\mathcal{G}}
\renewcommand{\epsilon}{\varepsilon}
\DeclareMathOperator{\rank}{rank}
\DeclareMathOperator{\soft}{soft}
\DeclareMathOperator{\pa}{pa}
\DeclareMathOperator{\ch}{ch}
\DeclareMathOperator{\an}{an}
\DeclareMathOperator{\de}{de}
\newcommand{\Mns}{\mathord{\begin{tikzpicture}[baseline=0ex, line width=2, scale=0.15] \draw (0,0.6) -- (2.2,0.6); \end{tikzpicture}}}
\theoremstyle{definition}
\newtheorem{theorem}{Theorem}[section]
\newtheorem{proposition}[theorem]{Proposition}
\newtheorem{corollary}[theorem]{Corollary}
\newtheorem{definition}[theorem]{Definition}
\newtheorem{assumption}[theorem]{Assumption}
\newtheorem{problem}[theorem]{Problem}
\newtheorem{remark}[theorem]{Remark}
\newenvironment{example}
{\pushQED{\qed}\examplex}
{\popQED\endexamplex}
\title{Linear causal disentanglement 
\\ via higher-order cumulants}
\author[1]{Paula Leyes Carreno}
\author[2]{Chiara Meroni
}
\author[3]{Anna Seigal}
\affil[ ]{\footnotesize \url{pleyescarreno@college.harvard.edu}, \, \url{chiara.meroni@eth-its.ethz.ch}, \, \url{aseigal@seas.harvard.edu}}
\affil[1,3]{\normalsize Harvard University, Cambridge, MA, United States}
\affil[2]{ETH-ITS, Zurich, Switzerland}
\date{}
\begin{document}
\maketitle

\begin{abstract}
Linear causal disentanglement is a recent method in causal representation learning to describe a collection of observed variables via latent variables with causal dependencies between them. It can be viewed as a generalization of both independent component analysis and linear structural equation models. We study the identifiability of linear causal disentanglement, assuming access to data under multiple contexts, each given by an intervention on a latent variable. We show that one perfect intervention on each latent variable is sufficient and in the worst case necessary to recover parameters under perfect interventions, generalizing previous work to allow more latent than observed variables. We give a constructive proof that computes parameters via a coupled tensor decomposition. For soft interventions, we find the equivalence class of latent graphs and parameters that are consistent with observed data, via the study of a system of polynomial equations. Our results hold assuming the existence of non-zero higher-order cumulants, which implies non-Gaussianity of variables. 
\end{abstract}

\paragraph{Keywords.} 
Causal inference, disentanglement, higher-order cumulants, tensor decomposition, causal representation learning, interventions.
\vspace{-2ex}
\paragraph{MSC classes.} 
13P15, 
15A69, 
62H22, 
62R01, 
68Q32. 

\paragraph{Acknowledgements.} 
We thank Kexin Wang for helpful discussions.
CM was supported by Dr. Max R\"ossler, the Walter Haefner Foundation and the ETH Z\"urich Foundation. 
AS was partially supported by the NSF (DMR-2011754).

\section{Introduction}

A key challenge of data science is to find useful and interpretable ways to model complex data, such as those collected from a biological experiment or a physical system. 
In this paper, we study \emph{linear causal disentanglement} (LCD), a framework to model such data. LCD generalizes two 20th century data analysis models: \emph{independent component analysis} (ICA)~\cite{jutten1987calcul,ComonICA,comon2010handbook} and \emph{linear structural equation models} (LSEMs)~\cite{bollen1989structural,sullivant2018algebraic}. Before defining it, we briefly recall these older models.

ICA is a blind source separation method that expresses observed variables $X = (X_1, \ldots, X_p)$ as a linear mixture
\begin{equation}
    \label{eqn:ica}
    X = A \epsilon,
\end{equation} 
where $A \in \RR^{p \times q}$ is a mixing matrix and $\epsilon = (\epsilon_1, \ldots, \epsilon_q)$ is a vector of independent latent variables. ICA has been used in applications including brain dynamics~\cite{jung2001imaging} and astrophysics~\cite{berne2007analysis}. 
LSEMs are another linear model to describe collections of variables. They model variables $Z = (Z_1, \ldots,Z_q)$ as
\begin{equation}\label{eq:lsem}
    Z=\Lambda Z+\epsilon,
\end{equation}
where $\Lambda \in \RR^{q \times q}$ is a matrix whose entry $\lambda_{i,j}$ encodes the dependence of $Z_i$ on $Z_j$ and $\epsilon$ is a vector of noise variables, often assumed to be independent. 
The variables are typically assumed to relate via the recursive structure of a directed acyclic graph (DAG); that is, 
fixing a DAG $\Gcal$ on nodes $[q] = \{ 1, \ldots, q\}$, with directed edges denoted $j \to i$, we have 
\begin{equation}
\label{eqn:DAG}
    \lambda_{i,j} \neq 0 
    \qquad \iff \qquad (j \to i) \in \Gcal.
\end{equation}
Equation~\eqref{eq:lsem} can be re-written as $Z = (I - \Lambda)^{-1} \epsilon$, 
where acyclicity of $\Gcal$ ensures that the matrix $I - \Lambda$ is invertible. This places LSEMs in the context of ICA, since the variables $Z$ are a linear mixing of independent latent variables~\cite{shimizu2014lingam}.
LSEMs appear in applications including epidemiology~\cite{sanchez2005structural} and causal inference~\cite{pearl2000models}.
In causal inference, the quantity $\lambda_{i,j}$ is interpreted as the causal effect of $Z_j$ on $Z_i$. 

The idea of linear causal disentanglement~\cite{SSBU23:LinearCausalDisentanglementInterventions} 
 is that the assumptions of ICA and LSEMs may be too strict: interpretable latent variables may not be independent, and variables that relate via a graph may not have been directly measured. 
To get around this, LCD is defined as follows. 
As in ICA, we observe variables $X = (X_1, \ldots, X_p)$ that are a linear 
mixing of latent variables. However, unlike ICA, the latent variables are not independent, instead they follow the structure of an LSEM; that is,
\begin{equation}\label{eq:XandZ}
X = FZ , \qquad \text{where} \qquad  Z=\Lambda Z+\epsilon, 
\end{equation}
for $F \in \RR^{p \times q}$ a linear transformation, $\Lambda$ a matrix that encodes causal dependencies among the latent variables $Z = (Z_1, \ldots, Z_q)$, and $\epsilon = (\epsilon_1, \ldots, \epsilon_q)$ a vector of independent noise variables. As often the case in ICA and LSEMs, variables $\epsilon$ are assumed to be mean-centered.
LCD specializes to ICA when $\Lambda$ is the zero matrix (i.e. when $\Gcal$ is the empty graph) and to an LSEM when $F = I$.

LCD falls into the setting of \emph{causal representation learning}~\cite{scholkopf2021toward}, an area of machine learning that aims to describe and explain the structure of a complex system by learning variables together with the causal dependencies among them. The idea is that 
learned latent representations of data~\cite{bengio2014representation} can be difficult to interpret and analyze, and may not generalize well, but that they improve by using latent representations with causal structure~\cite{yang2021causalvae}.
Central to interpretability and downstream analysis is the identifiability of a representation. The LCD model~\eqref{eq:XandZ} is identifiable if the mixing matrix $F$ and matrix of dependencies  $\Lambda$, and therefore also the latent DAG $\Gcal$, can be recovered uniquely (or up to a well-described set of possibilities) from observations of $X$. 

In this paper, we study the identifiability of LCD and develop algorithms to recover the parameters $F$ and $\Lambda$ using 
 tensor decomposition of higher-order cumulants. 
Higher-order cumulants have been used to recover parameters in both ICA and LSEMs~\cite{shimizu2014lingam,wang2020high,ComonICA,de2007fourth,wang2024identifiability}. We build on these insights to use it for LCD. 
For ICA and LSEMs, parameters can be recovered from tensor decomposition of a single higher-order cumulant. 
For LCD one tensor decomposition no longer suffices to recover parameters and we will instead use a coupled tensor decomposition. 
Identifiability of LCD from covariance matrices (that is, second-order cumulants) was studied in~\cite{SSBU23:LinearCausalDisentanglementInterventions}. Our results extend these insights to identifiability via higher-order cumulants.

 \paragraph{The setup.}
Our goal in this paper is to use observations of $X$ to recover the parameters $F$ and $\Lambda$ in an LCD model~\eqref{eq:XandZ}.
We assume access to observations of $X$ under multiple contexts.  
The contexts differ from an observational context by an intervention.
Interventions appear in biological applications such as~\cite{sachs2005causal,dixit2016perturb,mhmpvb2016,Triantafillou2017,xi2024propensity}. 
An intervention at a variable affects the downstream variables but not those that are upstream. It thus enables one to learn the direction of a causal dependency between two variables.
We study multiple contexts for two reasons: inferring causal dependencies in general necessitates interventions and 
one context is insufficient for recovery of parameters in the model.
We consider two types of interventions.

\begin{definition}\label{def:types_interventions}
Let variables $Z_i$ relate via a linear structural equation model. 
A {\em soft} (resp. {\em perfect}) {\em intervention} at $Z_i$ changes (resp. zeros out) all non-zero weights $\lambda_{i,j}$ and changes the error distribution $\epsilon_i$. 
\end{definition}

A third widely-studied type of intervention is a {\em do-intervention}, which sets a variable to a deterministic value.
We focus on soft interventions and perfect interventions, so that we do not assume access to a fixed value of an unobserved variable. 
For related results for do-interventions, see \cite{yang2021causalvae,AMWB23:InterventionalCausalReprLearning}.

We denote the set of contexts by $K$. 
Each context $k \in K$ is assumed to be an intervention at a single latent variable, 
as in~\cite{SSBU23:LinearCausalDisentanglementInterventions}.
The target of each intervention is unknown: context $k$ is an intervention on $Z_{i_k}$ for some $i_k \in [q]$.
The observational setting, in which no variable is intervened on, is indexed by $k =0$ and assumed to be known.
The intervention changes the latent LSEM but not the mixing map $F$. 
Under context $k$, we denote the matrix of causal effects by $\Lambda^{(k)}$, the latent variables by $Z^{(k)}$, and the error distributions by $\epsilon^{(k)}$.
Error distributions $\epsilon^{(k)}$ and $\epsilon^{(0)}$ agree, except at the $i_k$-th entry.
From Definition~\ref{def:types_interventions}, we see that a perfect intervention sets the $i_k$-th row of $\Lambda^{(k)}$ to zero while a soft intervention satisfies $\lambda^{(k)}_{i_k,j} \neq \lambda^{(0)}_{i_k,j}$ whenever $\lambda^{(0)}_{i_k,j} \neq 0$, i.e. for all $j$ with edge $j \to i_k$ present in $\Gcal$. 
Our setup can now be summarized as follows.

Fix $p \geq 2$ observed variables.
We observe distributions $X^{(k)}$ on $\RR^p$ for $k \in K \cup \{ 0 \}$ of the form 
\begin{equation}\label{eq:our_setup}
X^{(k)} = F Z^{(k)}, \qquad \text{where} \qquad Z^{(k)} = \Lambda^{(k)} Z^{(k)} + \epsilon^{(k)} ,
\end{equation}
for $Z^{(k)}$ some distributions on $\RR^q$ where $q \geq 2$ is the  number of latent variables. The variables $Z^{(0)}$ on $\RR^q$ follow a linear structural equation model on an unknown DAG $\Gcal$ on $q$ nodes,
and $Z^{(k)}$ relates to $Z^{(0)}$ via a single-node perfect or soft intervention with unknown target. See Figure \ref{fig:cartoon} for a cartoon of our setup. We make the following genericity assumptions.

\begin{assumption}\label{assumption:main} 
~
\begin{enumerate}
    \item[(a)] All noise variables $\epsilon_i^{(k)}$ are non-Gaussian. 
    \item[(b)] Matrix $F \in \RR^{p \times q}$ is unknown and generic; matrices $\Lambda^{(k)} \in \RR^{q \times q}$, $k \in K \cup \{0\}$ are unknown with generic non-zero entries.
    \item[(c)] For all contexts $k \in K$ there exists a large enough $d$  ($d\geq 3(q-1)$ is sufficient) such that the $d$-th order cumulant of $\epsilon^{(i_k)}_{i_k}$ is not $0$ or $\pm1$.
\end{enumerate}
\end{assumption}
\begin{figure}[th]
    \centering
    \fbox{
    \begin{tikzpicture}
    \node at (-3,2) {latent variables:};
    \node at (-3,-1) {observed variables:};
    \node at (-3,0.5) {mixing map:};
    \begin{scope}[every node/.style={circle,thick,draw}]
        \node (Z1) at (4,2) {$Z_1$};
        \node (Z2) at (2,2) {$Z_2$}; 
        \node (Z3) at (0,2) {$Z_3$}; 
        \node (X2) at (1,-1) {$X_2$};
        \node (X1) at (3,-1) {$X_1$};
    \end{scope}
    \begin{scope}[>={Stealth[PineGreen]},
                  every node/.style={fill=white,circle},
                  every edge/.style={draw=PineGreen,very thick}]
        \path [->] (Z3) edge (Z2);
        \path [->] (Z3) edge[bend right = -30] (Z1);
    \end{scope}
    \node[PineGreen] at (1,1.73) {$\lambda_{2,3}$};
    \node[PineGreen] at (0.7,2.75) {$\lambda_{1,3}$};
    \begin{scope}[>={Stealth[RedViolet]},
                  every node/.style={fill=white,circle},
                  every edge/.style={draw=RedViolet,very thick}]
        \path [->] (Z3) edge (X2);
        \path [->] (Z3) edge (X1);
        \path [->] (Z2) edge (X2);
        \path [->] (Z2) edge (X1);
        \path [->] (Z1) edge (X2);
        \path [->] (Z1) edge (X1);
    \end{scope}
    \begin{scope}[every node/.style={RedViolet,scale=0.85,thick}]
        \node at (0.3,0) {$f_{2,3}$};
        \node at (0.8,0.9) {$f_{1,3}$};
        \node at (1.9,0.8) {$f_{2,2}$};
        \node at (2.55,1.4) {$f_{1,2}$};
        \node at (3,0.6) {$f_{2,1}$};
        \node at (3.6,-0.2) {$f_{1,1}$};
    \end{scope}
    \end{tikzpicture}
    }
    \caption{A cartoon of the setup for $p=2$ observed variables and $q=3$ latent variables.}
    \label{fig:cartoon}
\end{figure}
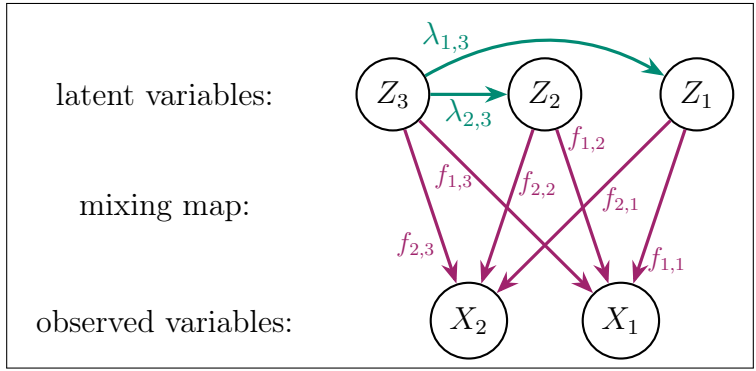

\begin{problem}
\label{prob:main}
    In the setup \eqref{eq:our_setup} under Assumption~\ref{assumption:main}, learn the number of latent variables $q$, the latent DAG $\Gcal$, the mixing matrix $F$ and the matrices of dependencies $\{ \Lambda^{(k)} \,|\, k \in K \cup \{ 0 \} \}$.
\end{problem}

We can rearrange~\eqref{eq:our_setup} to 
write variables $X^{(k)}$ as a linear mixture of independent latent variables
\[ X^{(k)} = F (I - \Lambda^{(k)})^{-1} \epsilon^{(k)}. \]
This relates LCD to ICA. 
Just as for ICA, we have the following non-identifiability.

\begin{remark}[Benign non-identifiability]
\label{rmk:benign}
    Uniqueness of $F$ and $\Lambda^{(k)}$ is impossible in LCD, since one can rescale or reorder the latent variables without affecting membership in the model. 
    That is, 
    for a diagonal matrix $D \in \RR^{q \times q}$ and a permutation matrix $P \in \RR^{q \times q}$,
    setting
    \begin{equation}
    \label{eqn:equiv}
        \widetilde{F} = FM, \quad \widetilde{\Lambda}^{(k)} = M^{-1} \Lambda^{(k)} M, \quad \widetilde{\epsilon}^{(k)} = M^{-1} \epsilon^{(k)}, \quad \text{where } M = DP,
    \end{equation} we have
    \begin{equation}
        \label{eqn:benign}
        \widetilde{F} (I - \widetilde{\Lambda}^{(k)} )^{-1} \widetilde{\epsilon}^{(k)} = F (I - \Lambda^{(k)})^{-1} \epsilon^{(k)}. 
    \end{equation} 
    Hence such rescaling and reordering does not affect $X^{(k)}$. Such transformations do not change the latent graph $\Gcal$ except by a relabelling of its nodes under the permutation $P$. When we discuss identifiability, we mean uniqueness up to the benign rescaling and reordering transformations in~\eqref{eqn:equiv}.  
    Given multiple contexts $k \in K \cup \{ 0 \}$, the scaling and ordering  transformations $D$ and $P$ are the same for all $k$. 
\end{remark}

\paragraph{Main results.} 

We find the perfect interventions needed for identifiability of LCD. 

\begin{theorem}\label{thm_intro_perfect}
Consider LCD under Assumption~\ref{assumption:main} with perfect interventions. Then one perfect intervention on each latent node is sufficient and, in the worst case, necessary to recover the latent DAG $\Gcal$ and the parameters $F$ and $\Lambda^{(k)}$ from observations of $X^{(k)}$. 
\end{theorem}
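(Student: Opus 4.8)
The plan is to read everything off the higher-order cumulant tensors. Since $X^{(k)} = A^{(k)}\epsilon^{(k)}$ with $A^{(k)} = F(I-\Lambda^{(k)})^{-1}$ and the entries of $\epsilon^{(k)}$ independent, the $d$-th cumulant tensor of $X^{(k)}$ is the symmetric tensor $\mathcal{C}^{(k)}_d = \sum_{j=1}^q \kappa_d(\epsilon^{(k)}_j)\,(a^{(k)}_j)^{\otimes d}$, where $a^{(k)}_j\in\RR^p$ is the $j$-th column of $A^{(k)}$. Recovering the summands is a symmetric tensor decomposition into $q$ rank-one terms in $p$ variables. Because the columns are generic (Assumption~\ref{assumption:main}(b)) and the noises are non-Gaussian (Assumption~\ref{assumption:main}(a)), I would argue that for $d$ large this decomposition is unique up to scaling and permutation via Kruskal's condition $d\cdot\min(p,q)\ge 2q+d-1$. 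This is precisely where the bound $d\ge 3(q-1)$ enters: it secures Kruskal's condition for every $p\ge 2$, including the over-complete regime $q>p$ where the $q$ columns of $A^{(k)}$ are linearly dependent in $\RR^p$ yet still individually identifiable.

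The heart of sufficiency is the \emph{coupled} decomposition across contexts, which rests on one structural fact: a perfect intervention at node $i$ leaves the $i$-th column of $(I-\Lambda^{(k)})^{-1}$, hence of $A^{(k)}$, unchanged (that column records how $\epsilon_i$ propagates to its descendants and uses none of the deleted incoming edges of $i$), while it alters the column indexed by $b$ exactly when $b$ is a strict ancestor of $i$. I would therefore recover the columns in each context, match them across contexts by their directions in $\PP^{p-1}$, and read off the ancestor relation of $\Gcal$ — hence a topological order — from which columns move between the observational and each interventional context. The unknown target $i_k$ is then pinned down as the unique matched column whose direction is unchanged but whose associated cumulant differs from the observational one, a difference that Assumption~\ref{assumption:main}(c) guarantees is detectable after fixing the common scale. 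With one intervention per node this labels all $q$ columns and identifies every target.

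Next I would recover $F$ and the matrices $B^{(k)}:=(I-\Lambda^{(k)})^{-1}$ by peeling in reverse topological order. A sink node $s$ satisfies $a^{(0)}_s = Fe_s = f_s$, so its observational column is literally the $s$-th column of $F$; comparing the observational columns to those of the context targeting $s$ isolates the $s$-th row of $B^{(0)}$, since the differences are scalar multiples of the already-known $f_s$. Subtracting the contribution of $f_s$ from all remaining columns yields the analogous problem on $\Gcal\setminus\{s\}$, and one recurses. This returns all columns $f_1,\dots,f_q$ of $F$ and all entries of each $B^{(k)}$ by pure linear algebra, crucially \emph{without ever inverting} $F$ — which is what lets the argument survive the case $q>p$. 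Setting $\Lambda^{(k)}=I-(B^{(k)})^{-1}$ recovers every dependency matrix, and the support of $\Lambda^{(0)}$ gives the edges of $\Gcal$, all determined up to the benign transformations of Remark~\ref{rmk:benign}.

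For necessity I would exhibit a worst-case instance: fix a node $i$ and suppose the collection omits a context with target $i$. Then neither $\epsilon_i$ nor the incoming edges of $i$ are ever perturbed, and I would use this slack to construct two LCD models that satisfy Assumption~\ref{assumption:main}, are not related by any transformation~\eqref{eqn:equiv}, yet induce identical $X^{(k)}$ in every available context — for instance by re-routing the edges incident to $i$ and absorbing the change into $F$ and the other, unchanged, columns, which is possible exactly because $q\ge p$ leaves freedom in the factorization $A^{(0)}=FB^{(0)}$ that only an intervention at $i$ removes. The main obstacle, I expect, is the coupled tensor-decomposition step: proving simultaneous uniqueness of the per-context decompositions in the over-complete regime and showing that the cross-context matching by direction and cumulant value is unambiguous, so that targets and ancestor relations are read off correctly. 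Once the columns are correctly labelled, the peeling and reconstruction are routine, and the necessity direction reduces to a degrees-of-freedom count identifying what an intervention at $i$, and only at $i$, can fix.
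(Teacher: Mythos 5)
Your skeleton matches the paper's: per-context cumulant decomposition, cross-context comparison keyed on the structural fact that a perfect intervention at a node changes exactly the columns indexed by its strict ancestors, then linear-algebraic reconstruction. Your uniqueness step via the higher-order Kruskal condition is an acceptable substitute for the paper's flattening argument (Proposition~\ref{prop:non_zero_coefficients}). But there are two genuine gaps. The first is the cross-context alignment, which you yourself flag as ``the main obstacle'' and then do not resolve; it is not routine, and it is where most of the paper's Section~\ref{sec:tensor_dec} lives. Tensor decomposition returns each column of $A^{(k)}$ only up to an unknown scalar. Direction-matching in $\PP^{p-1}$ pairs the target column and the columns $j \notin \an(i_k)$ (and fixes their scales up to sign when $d$ is even), but the ancestor columns of $A^{(k)}$ are parallel to \emph{nothing} in $A^{(0)}$: their pairing with the ancestor columns of $A^{(0)}$ and, crucially, their scales are left completely undetermined by your procedure. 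Yet your reconstruction needs the honest matrix difference $A^{(0)} - A^{(k)}$, whose only nonzero columns are precisely those ancestor columns. The paper closes this hole with Proposition~\ref{prop:rank1} and Corollary~\ref{coroll:rank1}: the missing pairing, signs, and scales are pinned down by requiring $\rank\bigl( A^{(0)} - A^{(k)} D^{(k)} P^{(k)}\bigr) = 1$, which yields Proposition~\ref{prop:upshot}. Without some such criterion your differences are ill-defined.

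The second gap is that your peeling recursion fails exactly in the overcomplete regime $q > p$ that the theorem is meant to cover. The first step is fine: for a sink $s$, the difference $A^{(0)} - A^{(k)}$ with target $s$ is rank one with column space spanned by the known $f_s = a^{(0)}_s$. But to recurse you must reduce \emph{all} context matrices to $\Gcal \setminus \{s\}$, which requires the rows $[(I-\Lambda^{(k)})^{-1}]_{s,\cdot}$ of the interventional contexts — quantities you do not know. Alternatively, comparing the reduced observational matrix against an unreduced $A^{(k')}$ gives a difference of the form $f_{s'} r_{s'} + \sum_{u \text{ peeled}} f_u r_u$ with unknown row vectors, and isolating $r_{s'}$ needs linear independence of $\{f_{s'}\} \cup \{f_u\}$, which necessarily fails once more than $p$ nodes have been peeled. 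The paper needs no recursion and no topological order: for a perfect intervention at $k$, the difference factorizes through the target, $\bigl[(I-\Lambda^{(0)})^{-1} - (I-\Lambda^{(k)})^{-1}\bigr]_{\ell j} = (I-\Lambda^{(0)})^{-1}_{\ell k}\,(I-\Lambda^{(0)})^{-1}_{k j}$, whence $(A^{(0)} - A^{(k)})_{1,j} = A^{(0)}_{1,k}\,(I-\Lambda^{(0)})^{-1}_{k,j}$; the rank-one factor is the \emph{known} column $a^{(0)}_k$, not the unknown $f_k$, so every row of $(I-\Lambda^{(0)})^{-1}$ is read off directly, after which $\Lambda^{(0)}$ and $F = A^{(0)}(I-\Lambda^{(0)})$ follow. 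Your structural observation should be upgraded to this factorization. Finally, your necessity argument is a plan rather than a proof (``I would use this slack to construct two LCD models''), and it is incorrectly tied to $q \ge p$; the paper obtains worst-case necessity for all $p,q$ by invoking \cite{SSBU23:LinearCausalDisentanglementInterventions}.
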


For $p$ observed variables and $q$ latent variables, Theorem~\ref{thm_intro_perfect} says that we need $q$ interventions for identifiability of LCD. We do not impose the injectivity of the mixing map $F : \RR^q \to \RR^p$; the pair $(p,q)$ can take any values provided $p,q \geq 2$.
Our proof is constructive: we carry out a coupled tensor decomposition of higher-order cumulants of the distributions $X^{(k)}$, and compare the factors recovered to learn the parameters. This extends~\cite{shimizu2014lingam,wang2020high} from observed to latent causal variables, and extends~\cite{ComonICA,de2007fourth,wang2024identifiability} from independent to dependent latent variables.
It relates to~\cite{eberhardt2005number}, which says that $q-1$ interventions are sufficient and in the worst case necessary to recover a DAG on $q$ observed variables.
It builds on~\cite[Theorem 1]{SSBU23:LinearCausalDisentanglementInterventions}, which says that one intervention on each latent node is sufficient and in the worst case necessary when the mixing $F$ is injective.
When the mixing map is injective, Theorem~\ref{thm_intro_perfect} is weaker than~\cite[Theorem 1]{SSBU23:LinearCausalDisentanglementInterventions}, since it requires non-Gaussian errors.
 When $F$ is not injective non-Gaussianity is necessary for identifiability, see Proposition \ref{prop:cov_not_enough}.

We present two algorithms for the recovery of the model parameters using $q$ perfect interventions. The first algorithm can be used for any $(p,q)$. It takes as input a tuple of $q+1$ cumulants, and returns the parameters $F$ and $\Lambda^{(k)}$. 
The second algorithm applies to the setting $q\leq p$. Here Moore-Penrose pseudo-inverses can be used to simplify the recovery. 
We illustrate the performance of the algorithms in Figures \ref{fig:recovery_F} and \ref{fig:graph}.
Both are implemented in \texttt{Python}, version $3.12.2$. The code is available at \url{https://github.com/paulaleyes14/linear-causal-disentanglement-via-cumulants}.

We now turn to soft interventions.
The transitive closure $\overline{\Gcal}$ of a DAG $\Gcal$ is the DAG with all edges $j \to i$ whenever $j \to \cdots \to i$ is a path in $\Gcal$.
We can recover the transitive closure $\overline{\Gcal}$ of a latent DAG $\Gcal$ in LCD from the second-order cumulants, see \cite[Theorem 1]{SSBU23:LinearCausalDisentanglementInterventions}.
We show that, if the errors are non-Gaussian, we can distinguish certain DAGs with the same transitive closure. 
We define the set of soft-compatible DAGs $\soft(\Gcal)$. It is a set of DAGs with the same transitive closure, which also satisfy additional compatibility conditions coming from ranks of matrices. Define the set of children of node $j$ by $\ch_\Gcal(j) = \{ i \,|\, (j\to i)\in \Gcal \}$ and the descendants by $\de_\Gcal(j) = \{ i \,|\, (j\to \cdots \to i)\in \Gcal \}$. Then,
\begin{multline*}
    \soft(\Gcal) = \Big\lbrace \Gcal'  \;\Big|\; \overline{\Gcal'} = \overline{\Gcal} \, \text{ and} \,  \rank [(I-\Lambda_{\Gcal})^{-1}]_{\mathscr{r}_j,\,\mathscr{c}_j} = \rank [(I-\Lambda_{\Gcal})^{-1}]_{\mathscr{r}_j,\,\mathscr{c}_j \cup \{ j \}} \hbox{ for all } j \in [q]  \Big\rbrace,
\end{multline*}
where $\mathscr{r}_j := \de_{\Gcal'}(j)\setminus\ch_{\Gcal'}(j)$, $\mathscr{c}_j := \ch_{\Gcal'}(j)$, and $\Lambda_{\Gcal}$ is a generic matrix of dependencies in an LSEM on DAG $\Gcal$, and $[M]_{\mathscr{r},\mathscr{c}}$ denotes the submatrix of $M$ with row indices in $\mathscr{r}$ and column indices in $\mathscr{c}$. See Definition \ref{def:softG} for more details.

\begin{theorem}\label{thm:soft1}
Consider LCD under Assumption~\ref{assumption:main} with soft interventions. Then one soft intervention on each latent node is sufficient and, in the worst case, necessary to recover the set of DAGs $\soft(\Gcal)$. Given $\Gcal' \in \soft(\Gcal)$, the set of parameters $F$ and $\Lambda^{(k)}$ that are compatible with the observations is a positive dimensional linear space.
\end{theorem}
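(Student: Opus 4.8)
The plan is to lift the coupled tensor decomposition used for Theorem~\ref{thm_intro_perfect} to the soft setting and to read off the graph and parameters from the way the rank-one factors move between contexts. Fix an order $d$ large enough that Assumption~\ref{assumption:main}(c) applies (it gives $d\geq 3(q-1)$, which makes the symmetric decomposition of each cumulant unique up to permutation and scaling of its rank-one terms). The $d$-th cumulant of $X^{(k)}$ is the symmetric tensor $\sum_{i=1}^q \kappa_i^{(k)}\,(a_i^{(k)})^{\otimes d}$, where $a_i^{(k)}$ is the $i$-th column of $A^{(k)}=F(I-\Lambda^{(k)})^{-1}$ and $\kappa_i^{(k)}$ is the $d$-th cumulant of $\epsilon_i^{(k)}$. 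First I would invoke uniqueness of the symmetric decomposition under the genericity of $F$ from Assumption~\ref{assumption:main}(b) to recover, from each context $k$, the unordered set of lines $\{\operatorname{span}(a_i^{(k)})\}$ together with the rank-one tensors $\kappa_i^{(k)}(a_i^{(k)})^{\otimes d}$; equivalently, each column $a_i^{(k)}$ up to an unknown scalar.

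The structural core is to track how these columns change under a single soft intervention at $i_k$. Writing $B^{(k)}=(I-\Lambda^{(k)})^{-1}$ and using the identity $b_j=e_j+\sum_{c\in\ch(j)}\lambda_{c,j}\,b_c$ for its columns, I would show that the intervention leaves the target column $b_{i_k}$ and every column not indexed by a proper ancestor of $i_k$ unchanged, whereas for each ancestor $s\in\an(i_k)$ the column shifts by a multiple of the target column, $b_s^{(k)}=b_s^{(0)}+c_s\,b_{i_k}^{(0)}$, and only the scalar $\kappa_{i_k}^{(k)}$ differs from $\kappa_{i_k}^{(0)}$. Applying $F$, the same relations hold for the recovered factors, so matching factors across the observational context and the $q$ interventional contexts identifies each target $i_k$, its shift direction $a_{i_k}^{(0)}$, and the pattern of linear dependencies among the columns.

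To recover $\soft(\Gcal)$ I would then determine which DAGs are compatible with this information. The second-order cumulants already fix the transitive closure by \cite[Theorem~1]{SSBU23:LinearCausalDisentanglementInterventions}, so any compatible $\Gcal'$ has $\overline{\Gcal'}=\overline{\Gcal}$. The remaining higher-order content is exactly the set of linear dependencies among the columns of the path matrix, and a candidate $\Gcal'$ is consistent precisely when its own children relation reproduces them: restricting $b_j=e_j+\sum_{c\in\ch_{\Gcal'}(j)}\lambda_{c,j}b_c$ to the rows $\mathscr{r}_j=\de_{\Gcal'}(j)\setminus\ch_{\Gcal'}(j)$, where $e_j$ vanishes, turns consistency into exactly the rank equalities defining $\soft(\Gcal)$. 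I would prove both inclusions: every $\Gcal'$ meeting the transitive-closure and rank conditions yields cumulants indistinguishable from those of $\Gcal$, and every $\Gcal'$ that violates a rank equality forces a column relation incompatible with the recovered factors. For worst-case necessity I would exhibit a DAG on which dropping the intervention at a single node strictly enlarges the compatible set beyond $\soft(\Gcal)$, in the spirit of the necessity construction for Theorem~\ref{thm_intro_perfect}.

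For the second claim, fix $\Gcal'\in\soft(\Gcal)$. The conditions on $(F,\{\Lambda^{(k)}\})$ are that $A^{(k)}=F(I-\Lambda^{(k)})^{-1}$ reproduce the recovered columns up to their undetermined scalars, with $\Lambda^{(k)}$ supported on $\Gcal'$ and following the single-node soft-intervention pattern. Parametrizing by the column scalings and the shift coefficients $c_s$, these conditions become linear; after quotienting by the benign rescaling and reordering of~\eqref{eqn:equiv} a strictly positive number of scalars---the intervention weights the data cannot separate---remain free, so the compatible set is a positive-dimensional linear space, and I would confirm positivity by a dimension count exhibiting at least one free direction. I expect the main obstacle to be precisely the passage between the observable linear relations among the factors $a_i^{(k)}\in\RR^p$ and the intended rank conditions on submatrices of $B^{(k)}\in\RR^{q\times q}$: showing that these match up exactly to cut out $\soft(\Gcal)$, and handling the case $q>p$ where $F$ is not injective and spurious dependencies among the $a_i^{(k)}$ must be separated from genuine ones among the $b_i^{(k)}$.
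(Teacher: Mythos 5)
Your plan follows the same broad architecture as the paper's proof: recover $A^{(k)} = F(I-\Lambda^{(k)})^{-1}$ by coupled tensor decomposition (Proposition~\ref{prop:upshot}), exploit the fact that a soft intervention at $i_k$ shifts each ancestor column of the path matrix by a multiple of the target column (this is exactly Proposition~\ref{prop:difference_lambdas_rank1}), and characterize compatible DAGs by the rank equalities of Definition~\ref{def:softG}; your derivation of those equalities from the identity $b_j = e_j + \sum_{c \in \ch_{\Gcal'}(j)} \lambda_{c,j} b_c$ restricted to the rows $\de_{\Gcal'}(j)\setminus\ch_{\Gcal'}(j)$ is a nice way to motivate the definition. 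However, there is a genuine gap, and it is the one you flag in your final sentence without resolving: the rank conditions live on submatrices of the latent $q\times q$ matrix $(I-\Lambda^{(0)})^{-1}$, while the data only determines $A^{(k)} = F(I-\Lambda^{(k)})^{-1}\in\R^{p\times q}$. When $q>p$ (the case this paper specifically allows), any $p+1$ columns of $A^{(k)}$ are linearly dependent, so ``the pattern of linear dependencies among the columns'' of the recovered factors carries essentially no information about dependencies among the latent columns, and your step ``every $\Gcal'$ that violates a rank equality forces a column relation incompatible with the recovered factors'' breaks down; your argument is only viable for injective $F$, i.e.\ $q\leq p$. The paper never infers latent column dependencies from observed ones. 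Instead it rewrites compatibility as $F = A^{(k)}(I-\Lambda^{(k)})$, i.e.\ the ideal \eqref{eq:def_ideal}, which is linear in the unknowns, subtracts the $k=0$ equations, and uses the fact that $A^{(k)}-A^{(0)}$ is the outer product of the $k$-th column of $A^{(0)}$ with the row vector $\bigl(\Delta^{(k)}_{k,i}\bigr)_{i\in[q]}$ to reduce everything to the scalar equations \eqref{eq:lin_sys_reduction} in the observable quantities $\Delta^{(k)}_{k,i} = \bigl(A^{(k)}_{1,i}-A^{(0)}_{1,i}\bigr)/A^{(0)}_{1,k}$, valid for all $p,q\geq 2$. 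Your shift coefficients $c_s$ are precisely these scalars, so the repair is available inside your framework; what is missing is (i) phrasing compatibility of $\Gcal'$ as solvability of the systems $M[j]\lambda = b[j]$ of \eqref{eq:Mb} built from those scalars, and (ii) the genericity argument of Theorem~\ref{thm:soft-compatible} showing that the ranks of these matrices agree with the ranks of the submatrices of $(I-\Lambda_{\Gcal}^{(0)})^{-1}$ in Definition~\ref{def:softG}.

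Two further claims are asserted rather than derived. First, the theorem states that the compatible parameters $(F,\Lambda^{(k)})$ form a \emph{linear} space; linearity in your coordinates (``column scalings and shift coefficients'') does not give linearity in $(F,\Lambda^{(k)})$. It follows immediately from \eqref{eq:def_ideal} --- the same multiply-through-by-$(I-\Lambda^{(k)})$ observation as above --- and it cannot be waved through: Proposition~\ref{prop:nonlinear_covariance} shows that with only second-order data the solution space can be non-linear, so the argument must genuinely use the exact recovery of the $A^{(k)}$ that higher-order cumulants provide. Second, positive dimensionality needs an exhibited free direction, not an expected one; the paper's is Corollary~\ref{coroll:identifiable_soft}: choose a node $j$ with $\ch(j)\neq\emptyset$ all of whose descendants are children (such a node exists whenever $\Gcal$ has an edge), then $M[j]$ has no rows and the weights $\lambda^{(0)}_{i,j}$, $i\in\ch(j)$, are entirely unconstrained. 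Finally, a smaller gloss: for soft interventions, ``matching factors across contexts'' is not well defined until scalings, signs, and the permutation are fixed, since ancestor columns move and the remaining ones are recovered only up to sign; this alignment is exactly what Corollary~\ref{coroll:rank1} provides, and your sketch should invoke it or reprove it rather than treat matching as automatic.
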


The proof relies on the study of the solution space to a system of polynomial equations, encoding the conditions that parameters compatible with the observations must satisfy.
That space is linear and always positive dimensional, even if we allow multiple interventions on each latent node. This leads to a negative identifiability result, in the same spirit as \cite{locatello2019challenging}.
\begin{corollary}\label{coroll:identifiable_soft_more}
    Consider LCD under Assumption~\ref{assumption:main}. With any number of soft interventions, identifiability of all parameters in the model does not hold.
\end{corollary}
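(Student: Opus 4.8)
The plan is to reduce the statement to the structure of the solution space already identified in Theorem~\ref{thm:soft1} and to show that this structure is unchanged, in the relevant respect, when further interventions are added. Fix a DAG $\Gcal' \in \soft(\Gcal)$ and consider the system of polynomial equations in the unknowns $F$ and $\{\Lambda^{(k)}\}$ obtained by matching the higher-order cumulants of the modeled distributions $F(I-\Lambda^{(k)})^{-1}\epsilon^{(k)}$ to the observed cumulants of $X^{(k)}$. By Theorem~\ref{thm:soft1}, when $K$ consists of exactly one soft intervention per latent node, the solution set of this system is a positive-dimensional linear space $L$. I would first record that the benign transformations of Remark~\ref{rmk:benign} with $P = I$, i.e.\ the diagonal rescalings $M = D$, send solutions to solutions and preserve $\Gcal'$, so the scaling orbit $O_0$ of any fixed compatible tuple satisfies $O_0 \subseteq L$.

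The core step is to separate genuine from benign non-identifiability. The scaling action is $F \mapsto FD$ and $\Lambda^{(k)} \mapsto D^{-1}\Lambda^{(k)}D$, so a nonzero entry $\lambda^{(k)}_{i,j}$ is scaled by $d_j/d_i$ while the columns of $F$ are scaled by the $d_i$; eliminating the $d_i$ shows $O_0$ satisfies bilinear relations coupling the entries of $F$ to those of $\Lambda^{(k)}$. Since $\Gcal$ carries at least one edge (Assumption~\ref{assumption:main}(b) posits generic nonzero dependencies), $O_0$ is a nonlinear, irreducible variety, and an irreducible nonlinear variety cannot equal the linear space $L$. As $O_0 \subseteq L$, we conclude $O_0 \subsetneq L$, so $L$ contains compatible tuples that are not benign transforms of one another. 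This already rules out identifiability in the one-intervention-per-node regime.

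It remains to handle an arbitrary number of soft interventions, which is where I expect the only real work to lie. Adding a further context $k'$ that intervenes again on some node $i$ introduces a new unknown matrix $\Lambda^{(k')}$, whose rows outside $i$ coincide with those of $\Lambda^{(0)}$ and whose $i$-th row carries fresh generic weights, together with a new block of cumulant-matching equations. The claim to verify is that these equations do not cut the free directions of $L$: the new unknowns in the $i$-th row of $\Lambda^{(k')}$ should absorb the new constraints without forcing the pre-existing free direction in $(F,\Lambda^{(0)})$ to shrink. Concretely, I would exhibit a one-parameter family $t \mapsto (F_t, \{\Lambda^{(k)}_t\})$ inside $L$ that is simultaneously compatible with every context and whose members are pairwise unrelated by any $M = DP$, obtained by extending the explicit free direction produced in the proof of Theorem~\ref{thm:soft1} to each new context while adjusting only the intervened rows. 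The existence of such a family shows the solution set stays positive-dimensional and strictly larger than a single benign orbit for every $|K|$, so identifiability of all parameters fails. The main obstacle is exactly this persistence argument, namely ensuring that repeated interventions on the same node never exhaust the freedom; I would establish it by balancing the degrees of freedom added in each new intervened row against the number of genuinely new equations that context contributes.
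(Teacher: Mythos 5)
Your proposal has two genuine gaps; the second one is fatal to the argument as planned.

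First, the containment $O_0 \subseteq L$ is false. The linear space $L$ in Theorem~\ref{thm:soft1} is the solution set of the ideal \eqref{eq:def_ideal}, i.e.\ of the \emph{exact} equations $F(I-\Lambda^{(k)})^{-1} = A^{(k)}$, where the matrices $A^{(k)}$ have already been pinned down by the coupled tensor decomposition with the normalization $D^{(0)} = P^{(0)} = I$ (Propositions~\ref{prop:wlog_i} and~\ref{prop:upshot}). Applying a diagonal rescaling $F \mapsto FD$, $\Lambda^{(k)} \mapsto D^{-1}\Lambda^{(k)}D$ to a solution yields $F(I-\Lambda^{(k)})^{-1}D = A^{(k)}D \neq A^{(k)}$, so the rescaled tuple leaves $L$: the benign freedom is quotiented out \emph{before} $L$ is formed, via the reduction in Proposition~\ref{prop:matrices_vs_distributions}. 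Your nonlinear-orbit-versus-linear-space dichotomy therefore has nothing to compare; it is also unnecessary, because after that reduction positive dimension of $L$ by itself exhibits a continuum of genuinely distinct parameter tuples with identical observed distributions, which is the content of Corollary~\ref{coroll:identifiable_soft}.

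Second, the persistence step for arbitrarily many interventions, which you yourself flag as the main obstacle, is exactly what is missing, and the principle you propose to prove it with is false. It is not true that new contexts never cut the free directions of $L$, nor that the fresh unknowns in the intervened row always absorb the new equations: in Example~\ref{ex:more_int_yes} of the paper, a second intervention on node $1$ turns the previously free parameters $\lambda^{(0)}_{2,4}, \lambda^{(0)}_{3,4}$ into identifiable ones, so additional interventions can strictly shrink the solution space. Consequently no degree-of-freedom balancing argument can succeed in general. What is true, and what the paper uses, is that \emph{one particular} free direction can never be cut: by Proposition~\ref{prop:dim_ideal_I_any_pq}, the only equations constraining the weights $\lambda^{(0)}_{i,j}$, $i \in \ch(j)$, are the rows of $M[j]$ in \eqref{eq:Mb}, and those rows are indexed by interventions on nodes of $\de(j)\setminus\ch(j)$. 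Choosing, as in the proof of Corollary~\ref{coroll:identifiable_soft}, a node $j$ all of whose descendants are children (such a node exists in every DAG with at least one edge, e.g.\ the last node with a child in a topological order), one has $\de(j)\setminus\ch(j) = \emptyset$, so $M[j]$ is the empty matrix and remains empty no matter how many interventions are added and whatever their targets, since no intervention can land in an empty set. Hence $c_j = |\ch(j)| \geq 1$ and $\dim \mathcal{I} \geq 1$ for every collection of soft interventions; this one-line observation is the paper's entire proof of the corollary. Without isolating this structural mechanism, the one-parameter family you want to exhibit cannot be constructed uniformly over all intervention sets.
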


The non-Gaussianity assumption is required for the linear space of parameters in Theorem~\ref{thm:soft1}: with Gaussian errors, the space of parameters may be non-linear, see Proposition~\ref{prop:nonlinear_covariance}.

 \paragraph{Related work.}
Higher-order cumulants have been shown to lead to improved identifiability in related contexts. They extend principal component analysis, which requires an orthogonal transformation for identifiability, to ICA, which is identifiable for general linear mixings~\cite{comon2010handbook,ComonICA}. For LSEMs, they facilitate the recovery of a full DAG, rather than its Markov equivalence class~\cite{verma2022equivalence}, see~\cite{shimizu2014lingam,wang2020high}. They have been used to recover parameters in other latent variable models~\cite{anandkumar2014tensor}.

Identifiability of causal representation learning is an active area of study. 
It builds on work in the identifiability of representation learning~\cite{ahuja2021properties, zimmermann2022contrastive,khemakhem2020variational} and latent DAG models. These include work that imposes sparsity on the causal relations \cite{moran2022identifiable, ahuja2021properties, hyvärinen1999, NEURIPS2021_c0f6fb5d, silva2006, NEURIPS2019, xie2020, xie2022, liu2023identifying, NEURIPS2023_bdeab378, ZSGSSU23:IdentifiabilitySoftInterventions, lachapelle2024nonparametric, zhang2024causal} and latent variable models on discrete variables \cite{halpern2015anchored, kivva2021learning}. 
There are many works related to LCD, due in part to the many possible assumptions that one can make in a causal disentanglement model. These include the structure (polynomial, non-linear) of the maps involved \cite{NEURIPS2023_97fe251c, NEURIPS2023_8e5de4cb, varici2023scorebased, pmlr-v238-varici24a, liu2024identifiable, liu2024identifiable2, varıcı2024scorebased} and the choice of data generating process \cite{NEURIPS2023_4ef594af, bing2023invariance, kekic2024targeted, shamsaie2024disentangling}. In general, allowing more freedom on one side, implies more restrictions on the other side.

\paragraph{Outline.}
We cast LCD as the problem of aligning the outputs of a coupled tensor decomposition in Section \ref{sec:tensor_dec}.
We discuss the recovery of parameters for perfect and soft interventions in Section \ref{sec:recovery_interventions}. We prove our main results Theorem \ref{thm_intro_perfect} in Section \ref{subsec:perfect_int} and Theorem \ref{thm:soft1} in Section \ref{subsec:soft_int}. We discuss our algorithms in Section \ref{sec:algorithms} and future directions in Section~\ref{sec:outlook}. Appendix \ref{sec:Appendix} contains pseudo-code for our algorithms.

\section{Coupled tensor decomposition}\label{sec:tensor_dec}

 The cumulants are a sequence of tensors that encode a distribution~\cite{mccullagh2018tensor}. 
 The $d$-th cumulant 
 of a distribution $X$ on $\RR^p$
 is an order $d$ tensor, denoted by $\kappa_d(X)$, of format $p \times \cdots \times p$. The first and second order cumulants are the mean and covariance, respectively. 
Higher-order cumulants are those of order three and above.

We describe the higher-order cumulant tensors of distributions $X^{(k)}$ coming from LCD, as in~\eqref{eq:XandZ}, 
as $k$ ranges over contexts.
We study a coupled decomposition of these tensors. This will enable us to study the identifiability of LCD and to design tensor decompositions to recover parameters in the model. 
We first consider a single context.

\subsection{Decomposing cumulants}

Let $X$ be a distribution on $\RR^p$ and 
assume $X = A \epsilon$, where $\epsilon = (\epsilon_1, \ldots, \epsilon_q) $ is a vector of independent variables on $\RR^q$ and $A \in \RR^{p \times q}$ is a linear map, as in ICA~\eqref{eqn:ica}. 
Then the $d$-th cumulant of $X$ 
is the order $d$ tensor
\begin{equation}
\label{eqn:cumulant_decomp}
     \kappa_d(X) = \sum_{i = 1}^q \kappa_d(\epsilon_i) \mathbf{a}_i^{\otimes d},
\end{equation}
where the scalar $\kappa_d(\epsilon_i)$ is the $d$-th cumulant of variable $\epsilon_i$ and $\mathbf{a}_i$ is the $i$-th column of matrix $A$, as follows.
 The cumulants $\kappa_d(\epsilon)$
are order $d$ tensors of format $q \times \cdots \times q$.
Since the variables $\epsilon_i$ are independent, by assumption, their 
cross-cumulants vanish~\cite[Section 2.1]{mccullagh2018tensor}. 
Hence the tensor $\kappa_d(\epsilon)$ is diagonal: its entries vanish away from the $\kappa_d(\epsilon_1), \ldots, \kappa_d(\epsilon_q)$ on the main diagonal. 
A linear transformation of variables results in a multi-linear transformation of their cumulants. This gives the expression in~\eqref{eqn:cumulant_decomp}, which writes the cumulant as a sum of symmetric rank one tensors.

If $q \leq p$ then $\kappa_d(X)$ has a unique rank $q$ decomposition, whenever cumulants $\kappa_d(\epsilon_i)$ are all non-zero and the columns of $A$ are linearly independent, by~\cite{harshman70}. Hence the vectors $\mathbf{a}_i$ can be recovered uniquely, up to permutation and scaling. This extends to $q > p$, as follows.

\begin{proposition}
\label{prop:non_zero_coefficients}
Assume that no pair of columns of $A \in \RR^{p \times q}$ are collinear and that the $q$ entries of $\epsilon$ are independent. Then,
    for $d$ sufficiently large, 
    all columns $\mathbf{a}_i$ with
    $\kappa_d(\epsilon_i) \neq 0$
    can be uniquely recovered, up to permutation and scaling, from the $d$-th cumulant of $X = A\epsilon$. 
\end{proposition}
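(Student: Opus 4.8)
The plan is to reduce the statement to the order-three uniqueness result of~\cite{harshman70}, already invoked for the case $q \le p$, by exploiting the symmetry of the cumulant tensor. Write $S = \{ i \in [q] : \kappa_d(\epsilon_i) \neq 0 \}$ for the set of surviving indices, so that by~\eqref{eqn:cumulant_decomp} we have $\kappa_d(X) = \sum_{i \in S} \kappa_d(\epsilon_i)\, \mathbf{a}_i^{\otimes d}$. Only the columns $\mathbf{a}_i$ with $i \in S$ enter this expression, which is why these are the only ones we can hope to recover. The obstruction in the regime $q > p$ is that the columns $\mathbf{a}_i$ cannot be linearly independent, so the Harshman/Kruskal uniqueness does not apply to $A$ directly. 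The key observation is that passing to symmetric powers repairs this: although the $\mathbf{a}_i$ are merely pairwise non-collinear, their tensor powers $\mathbf{a}_i^{\otimes s}$ become linearly independent once $s$ is large.

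First I would establish the following linear independence lemma: if $\mathbf{v}_1,\dots,\mathbf{v}_m \in \RR^p$ are pairwise non-collinear, then $\mathbf{v}_1^{\otimes s}, \dots, \mathbf{v}_m^{\otimes s}$ are linearly independent for every $s \ge m-1$. Geometrically the $\mathbf{v}_i$ define $m$ distinct points of $\PP^{p-1}$, and $m$ distinct points impose independent conditions on forms of degree $s \ge m-1$; dually this is exactly the linear independence of the Veronese images $\mathbf{v}_i^{\otimes s}$. A hands-on proof applies a generic linear projection $\RR^p \to \RR^2$, which keeps the points pairwise non-collinear, and then reads off independence from a generalized Vandermonde determinant in the two homogeneous coordinates; linear independence of the projected powers forces linear independence of the original ones.

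Next I would reshape the order-$d$ symmetric tensor into an order-three tensor by grouping its $d$ modes into three consecutive blocks of sizes $a, b, c$ with $a+b+c = d$, giving
\[
\kappa_d(X) \;=\; \sum_{i \in S} \kappa_d(\epsilon_i)\,\bigl(\mathbf{a}_i^{\otimes a}\bigr) \otimes \bigl(\mathbf{a}_i^{\otimes b}\bigr) \otimes \bigl(\mathbf{a}_i^{\otimes c}\bigr),
\]
an order-three decomposition with $|S|$ terms whose three factor matrices have columns $\mathbf{a}_i^{\otimes a}$, $\mathbf{a}_i^{\otimes b}$, $\mathbf{a}_i^{\otimes c}$. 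Choosing $a = b = |S|-1$ and $c = 1$, so that $d \ge 2|S|-1$ (and in particular any $d \ge 2q-1$ works uniformly), the lemma makes the first two factor matrices of full column rank $|S|$, while the third has pairwise non-collinear columns since $\mathbf{a}_i^{\otimes 1} = \mathbf{a}_i$. These are precisely the hypotheses of the order-three uniqueness theorem of~\cite{harshman70}: the two full-rank factors together with a third factor of Kruskal rank at least two saturate the bound $2|S|+2$ needed for an essentially unique decomposition. Hence the order-three decomposition is unique up to permutation and scaling; since any competing symmetric decomposition of $\kappa_d(X)$ into $|S|$ rank-one terms reshapes to an order-three decomposition with the same number of terms, it must coincide with this one.

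Finally I would read off the columns from the third factor matrix, which returns the vectors $\mathbf{a}_i$ with $i \in S$ directly, each up to a nonzero scalar, the permutation reflecting the unlabelled nature of the latent coordinates. The main difficulty is the passage from ``pairwise non-collinear'' to genuine linear independence carried out in the lemma; once the symmetric powers are available, the remainder is a routine reshaping into the order-three setting already handled in the paper for $q \le p$.
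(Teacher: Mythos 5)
Your proposal is correct and follows essentially the same route as the paper's proof: both rest on the key lemma that pairwise non-collinear vectors have linearly independent tensor powers $\mathbf{a}_i^{\otimes s}$ for $s \ge q-1$ (which the paper cites from \cite[Proposition 4.3.7.6]{landsberg2011tensors} and you prove directly), followed by flattening the order-$d$ cumulant into an order-three tensor and invoking the uniqueness theorem of \cite{harshman70}. The only substantive difference is your unbalanced flattening with block sizes $(|S|-1,\,|S|-1,\,1)$, which needs two full-column-rank factors plus a Kruskal-rank-two factor rather than three full-rank factors, lets you read off the vectors $\mathbf{a}_i$ directly from the third factor instead of extracting them from $\mathbf{a}_i^{\otimes m}$, and yields the slightly better threshold $d \ge 2q-1$ in place of the paper's $d \ge 3(q-1)$.
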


\begin{proof}
For $m \geq q-1$, the tensors $\mathbf{a}_1^{\otimes m}, \ldots, \mathbf{a}_q^{\otimes m}$ are linearly independent, by~\cite[Proposition 4.3.7.6]{landsberg2011tensors}, since no pair of columns $\mathbf{a}_i$ are collinear. Let $d \geq 3m \geq 3(q - 1)$ and consider
$ \kappa_{d}(X) = \sum_{i=1}^q \lambda_i \mathbf{a}_i^{\otimes d}$,
where $\lambda_i := \kappa_{d}(\epsilon_i)$. Consider its flattening of size $q^m \times q^m \times q^{d-2m}$. 
The decomposition of this flattened tensor is unique, by~\cite{harshman70}, since the vectors that appear in it are linearly independent. Hence the tensors $\mathbf{a}^{\otimes m}_i$ and $\mathbf{a}_i^{\otimes (d- 2m)}$, and thus also the vectors $\mathbf{a}_i$, can be uniquely recovered, up to permutation and scaling, for all indices $i$ with $\lambda_i \neq 0$.
\end{proof}

For a sufficiently generic matrix $A$, one can recover the vectors uniquely, up to permutation and scaling, from the above tensor decomposition provided $q$ is strictly less than the generic rank of an order $d$ tensor of format $p \times \cdots \times p$, by~\cite{chiantini2017generic}. The generic rank is usually $\lceil \frac{1}{p} { \binom{p + d -1}{d} } \rceil$, see~\cite[Theorem 3.2.2.4]{landsberg2011tensors}. Since for fixed $p$ and large $d$, $\frac{1}{p} { \binom{p + d -1}{d} } \sim d^{p-1}$, this result allows for larger $q$ relative to $d$ than the condition $d \geq 3(q-1)$ coming from Proposition~\ref{prop:non_zero_coefficients}.

\begin{corollary}
\label{cor:ica}
    Assume that the entries of $\epsilon$ are independent and non-Gaussian and that no pair of columns of $A$ are collinear. Then tensor decomposition of the cumulants of $X$ recovers the matrix $A$, up to permutation and scaling of its columns.
\end{corollary}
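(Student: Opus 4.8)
The plan is to bootstrap Proposition~\ref{prop:non_zero_coefficients} using the classical fact that non-Gaussianity forces nonzero higher-order cumulants. Proposition~\ref{prop:non_zero_coefficients} already recovers, up to permutation and scaling, every column $\mathbf{a}_i$ whose noise cumulant $\kappa_d(\epsilon_i)$ is nonzero at a fixed admissible order $d \geq 3(q-1)$. Thus the hypothesis that no two columns are collinear is carried over directly, and the only gap to close is to guarantee that, as the order $d$ ranges over admissible values, every index $i$ is eventually \emph{active}, i.e. satisfies $\kappa_d(\epsilon_i) \neq 0$ for at least one such $d$.

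First I would invoke Marcinkiewicz's theorem: a distribution (with cumulants of all orders) whose cumulants vanish beyond some finite order is necessarily Gaussian. Applied contrapositively to each non-Gaussian $\epsilon_i$, this says that $\epsilon_i$ has nonzero cumulants of arbitrarily high order; in particular, for every $i \in [q]$ there exists an order $d_i \geq 3(q-1)$ with $\kappa_{d_i}(\epsilon_i) \neq 0$. Next, for each $i$ I would apply Proposition~\ref{prop:non_zero_coefficients} to the cumulant $\kappa_{d_i}(X)$: since $d_i$ is admissible and no two columns of $A$ are collinear, the proposition recovers $\mathbf{a}_i$ (together with every column active at order $d_i$) up to scaling. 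Ranging over $i \in [q]$, the union of the recovered directions is the complete set of column directions of $A$, and since these $q$ lines in $\RR^p$ are pairwise distinct, this determines $A$ up to a permutation and scaling of its columns, which is exactly the claimed conclusion.

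The main subtlety to handle carefully is the cross-order alignment. Different orders $d_i$ may recover different subsets of columns, each only up to its own permutation and scaling, so there is no a~priori matching between the outputs obtained at order $d_i$ and those at order $d_j$. The point is that this ambiguity is benign precisely because each column is recovered as a one-dimensional subspace of $\RR^p$: the collinearity hypothesis makes the $q$ directions pairwise distinct, so recovered lines can be identified across orders simply by equality of subspaces, and their union is unambiguously the set of all $q$ column directions. I do not expect the invocation of Marcinkiewicz's theorem or of Proposition~\ref{prop:non_zero_coefficients} to pose any real difficulty; the only care needed is this bookkeeping of the permutation-and-scaling equivalence across the different tensor orders, together with the standing assumption that the relevant cumulants exist so that Marcinkiewicz applies.
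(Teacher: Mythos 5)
Your proposal is correct and follows essentially the same route as the paper: invoke Marcinkiewicz's theorem to guarantee that each non-Gaussian $\epsilon_i$ has non-zero cumulants at arbitrarily high orders, then apply Proposition~\ref{prop:non_zero_coefficients} at suitable orders $d_i \geq 3(q-1)$ to recover every column. The paper's proof is terser and leaves the cross-order bookkeeping implicit; your explicit matching of recovered columns across orders via equality of the pairwise-distinct lines they span is exactly the detail that makes the paper's one-line argument rigorous.
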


\begin{proof}
The cumulant sequence $(\kappa_d(\epsilon_i))_d$ has infinitely many non-zero terms, since $\epsilon_i$ is non-Gaussian~\cite{marcinkiewicz1939propriete}. Hence there are non-zero cumulants at high enough $d$
to satisfy the hypotheses of Proposition~\ref{prop:non_zero_coefficients}. 
This is an alternative proof of~\cite[Theorems 1(i) and 3(i)]{eriksson2004identifiability}.
\end{proof}

The impossibility of recovering the columns without scaling ambiguity comes from the fact that we can extract or insert a global scalar from the factor $\mathbf{a}_i^{\otimes d}$. We have $(\lambda \mathbf{a})^{\otimes d} = \lambda^d \mathbf{a}^{\otimes d}$, hence
\begin{equation}
    \label{eqn:lambdas_from_tensor}
    \kappa_d(X) = 
\begin{cases}
    \sum_{i = 1}^q \left( \sqrt[d]{\kappa_d(\epsilon_i) } \mathbf{a}_i \right)^{\otimes d} & d \text{ odd}  \\
    \sum_{i = 1}^q  \operatorname{sign}\left(\kappa_d(\epsilon_i)\right) \left( \pm \sqrt[d]{| \kappa_d(\epsilon_i) |} \mathbf{a}_i \right)^{\otimes d} & d \text{ even.} 
\end{cases}
\end{equation}
Tensor decomposition will therefore recover the columns of $A$ up to the factors $\pm \sqrt[d]{| \kappa_d(\epsilon_i) |}$.

Consider the LCD setting of~\eqref{eq:XandZ}. We have $X = FZ = F(I - \Lambda)^{-1} \epsilon$. The discussion above shows that the product $F(I - \Lambda)^{-1} \in \RR^{p \times q}$ can be recovered (up to permutation and scaling), since the entries of the random vector $\epsilon$ are independent. 
However, it is not possible to recover the latent DAG $\Gcal$ from the product $F(I - \Lambda)^{-1}$: a solution with empty DAG (that is, independent $Z$ variables) is always consistent with the observations, since
$$ F(I - \Lambda)^{-1} \epsilon = \widetilde{F} \widetilde{Z}, $$
where $\widetilde{F} = F(I - \Lambda)^{-1}$ and $\widetilde{Z} = \epsilon$. This demonstrates the need for observations of $X$ under multiple contexts.

\subsection{Coupling contexts}\label{subsec:align_output}

Distributions $X^{(k)}$ 
are linear mixtures of independent variables, since
$X^{(k)} = F(I - \Lambda^{(k)})^{-1} \epsilon^{(k)}$, 
where the entries of $\epsilon^{(k)}$ are independent. 
Our goal is to recover the parameters $F$ and $\Lambda^{(k)}$ for all $k \in K \cup \{0\}$.
We first prove the following.

\begin{proposition}
\label{prop:with_DP}
    Consider LCD under Assumption~\ref{assumption:main}. Then we can recover $q$ and the matrices $F(I - \Lambda^{(k)})^{-1}$ up to scaling and permutation for all $k \in K \cup \{ 0 \}$; i.e., we can recover
\begin{equation}
\label{eqn:recovered_matrix}
    F (I-\Lambda^{(k)})^{-1} D^{(k)} P^{(k)} \in \RR^{p \times q},
\end{equation}
where $D^{(k)} \in \RR^{q \times q}$ is diagonal, with non-zero diagonal entries, and $P^{(k)} \in \RR^{q \times q}$ is a permutation matrix. The diagonal matrix $D^{(k)}$ can be assumed to have entries
\begin{equation}
    \label{eqn:formula_for_D}
    D^{(k)}_{i,i} = 
 \begin{cases} 
 \phantom{\pm} \sqrt[d_i]{\kappa_{d_i}(\epsilon^{(k)}_i)} & d_i \text{ odd}, \\ 
\pm \sqrt[d_i]{|\kappa_{d_i}(\epsilon^{(k)}_i)|} & d_i \text{ even}, 
\end{cases}
\end{equation}
where, for all $i\in [q]$, $d_i$ is large enough ($d_i\geq 3(q-1)$ suffices) and satisfies $\kappa_{d_i}(\epsilon^{(k)}_i)\neq 0$.
\end{proposition}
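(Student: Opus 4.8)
The plan is to apply Corollary~\ref{cor:ica} separately in each context $k \in K \cup \{0\}$ and then to argue that the recovered column-scaling can be taken to have the explicit form in~\eqref{eqn:formula_for_D}. First I would observe that, as noted in the discussion following Corollary~\ref{cor:ica}, each distribution factors as $X^{(k)} = F(I-\Lambda^{(k)})^{-1}\epsilon^{(k)} = A^{(k)} \epsilon^{(k)}$, where $A^{(k)} := F(I-\Lambda^{(k)})^{-1} \in \RR^{p \times q}$ and the entries of $\epsilon^{(k)}$ are independent. To invoke Corollary~\ref{cor:ica} I must check its two hypotheses. Non-Gaussianity of each $\epsilon_i^{(k)}$ is exactly Assumption~\ref{assumption:main}(a). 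The no-collinear-columns condition on $A^{(k)}$ should follow from genericity: by Assumption~\ref{assumption:main}(b), $F$ is generic and $\Lambda^{(k)}$ has generic non-zero entries on the support of $\Gcal$, and since $(I-\Lambda^{(k)})^{-1}$ is invertible, its columns are linearly independent, so the columns of the product $A^{(k)}$ are non-collinear for generic $F$. Corollary~\ref{cor:ica} then recovers $A^{(k)}$ up to permutation and scaling of columns, which is precisely the assertion that we recover $A^{(k)} D^{(k)} P^{(k)}$ for some diagonal $D^{(k)}$ with non-zero entries and permutation $P^{(k)}$.

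Next I would pin down the scaling constants. The key computation is already displayed in~\eqref{eqn:lambdas_from_tensor}: when we decompose $\kappa_d(X^{(k)}) = \sum_i \kappa_d(\epsilon_i^{(k)}) (\mathbf{a}_i^{(k)})^{\otimes d}$ into rank-one symmetric terms, the only thing tensor decomposition returns is the rank-one tensor $\kappa_d(\epsilon_i^{(k)})(\mathbf{a}_i^{(k)})^{\otimes d}$, and the $d$-th root absorbs the cumulant scalar into the vector. Reading off~\eqref{eqn:lambdas_from_tensor}, the recovered vector is $\sqrt[d]{\kappa_d(\epsilon_i^{(k)})}\,\mathbf{a}_i^{(k)}$ for odd $d$, and $\pm\sqrt[d]{|\kappa_d(\epsilon_i^{(k)})|}\,\mathbf{a}_i^{(k)}$ for even $d$ (with the sign of the cumulant an intrinsic feature of the term). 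This justifies choosing $D^{(k)}_{i,i}$ as in~\eqref{eqn:formula_for_D}. I would note that we are free to use a possibly different order $d_i$ for each latent index $i$: since each $\epsilon_i^{(k)}$ is non-Gaussian, its cumulant sequence has infinitely many non-zero terms by~\cite{marcinkiewicz1939propriete}, so for each $i$ we may pick some $d_i \geq 3(q-1)$ with $\kappa_{d_i}(\epsilon_i^{(k)}) \neq 0$, which is what makes the $d_i$-th root well-defined and non-zero, guaranteeing $D^{(k)}$ has non-zero diagonal.

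Finally I would address recovery of the number of latent variables $q$. Since the columns of $A^{(k)}$ are non-collinear and the relevant cumulants are non-zero, the rank of the decomposition equals $q$; thus $q$ is read off as the (generic, minimal) rank obtained in the coupled decomposition, independent of $k$. I expect the main obstacle to be the bookkeeping of the per-index orders $d_i$: Proposition~\ref{prop:non_zero_coefficients} and Corollary~\ref{cor:ica} as stated recover all columns from a single cumulant of one fixed large order, whereas the statement here allows a different $d_i$ per column. The cleanest way around this is to first fix one order $d$ large enough that $\kappa_d(\epsilon_i^{(k)}) \neq 0$ simultaneously for all $i$ (possible by genericity, or by taking $d$ among the infinitely many non-zero cumulant orders), recover the columns up to scaling from that single order, and only afterwards reinterpret the scaling index-by-index via~\eqref{eqn:formula_for_D}, choosing for each $i$ a convenient $d_i$ at which the $i$-th cumulant is non-zero. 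The genericity argument ensuring non-collinearity of the columns of $A^{(k)}$ also deserves care, since it requires that the generic choice of $F$ avoids the (measure-zero) locus where two columns of the product become collinear.
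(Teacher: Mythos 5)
Your overall strategy---per-context tensor decomposition of the cumulants, genericity of $F$ and $\Lambda^{(k)}$ to rule out collinear columns of $A^{(k)} = F(I-\Lambda^{(k)})^{-1}$, reading the scalings off \eqref{eqn:lambdas_from_tensor}, and reading $q$ off as the number of recovered columns---is the same as the paper's. The genuine gap is in your handling of the orders $d_i$. You propose to ``first fix one order $d$ large enough that $\kappa_d(\epsilon_i^{(k)}) \neq 0$ simultaneously for all $i$ (possible by genericity, or by taking $d$ among the infinitely many non-zero cumulant orders).'' Neither justification is valid. Assumption~\ref{assumption:main} places genericity only on the matrices $F$ and $\Lambda^{(k)}$, not on the error distributions, so you cannot argue that the cumulants $\kappa_d(\epsilon_i^{(k)})$ are generically non-zero at a common order. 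And Marcinkiewicz's theorem gives, for each fixed $i$, an infinite set $S_i \subseteq \{3,4,\ldots\}$ of orders at which $\kappa_d(\epsilon_i^{(k)}) \neq 0$, but nothing guarantees that the intersection $\bigcap_{i \in [q]} S_i$ of these finitely many infinite subsets of $\N$ is non-empty: for instance, a symmetric non-Gaussian error has all odd cumulants equal to zero, so its set $S_i$ contains only even orders, and in general the non-vanishing patterns of distinct errors need not overlap. The paper flags exactly this as the subtle point of the proof: non-Gaussianity of each $\epsilon_i^{(k)}$ does \emph{not} imply the existence of a common $d$ valid for all $i$ (and all contexts $k$) simultaneously.

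The fix, which is what the paper does, is to abandon the single common order: choose a finite set of orders $d_1, \ldots, d_m \geq 3(q-1)$ such that for every $i$ there is some $\ell \in [m]$ with $\kappa_{d_\ell}(\epsilon_i^{(k)}) \neq 0$ (this exists by applying Marcinkiewicz to each $i$ separately), decompose each cumulant $\kappa_{d_\ell}(X^{(k)})$, and note that by Proposition~\ref{prop:non_zero_coefficients} each such decomposition recovers, up to permutation and scaling, exactly those columns of $A^{(k)}$ whose coefficient at that order is non-zero. Collating the pairwise non-collinear columns found across the $m$ decompositions yields all $q$ columns, with the $i$-th scaling given by \eqref{eqn:formula_for_D} at whichever order $d_i$ recovered it. Relatedly, your reading of the earlier results is slightly off: Proposition~\ref{prop:non_zero_coefficients} does not ``recover all columns from a single cumulant of one fixed large order''---it recovers only the columns whose coefficient is non-zero at that order---and Corollary~\ref{cor:ica} already combines several cumulant orders for precisely this reason. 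Once your common-$d$ claim is replaced by this set-of-cumulants argument, the rest of your proof is correct and coincides with the paper's.
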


\begin{proof}
We have $X^{(k)} = A^{(k)} \epsilon^{(k)}$, where $A^{(k)} = F(I - \Lambda^{(k)})^{-1}$ and $k$ ranges over contexts $K \cup \{ 0 \}$. 
We first prove the result under an additional assumption, that there exists a single number $d \geq 3$ that satisfies: 
\begin{enumerate} \item[(a)]
the tensor decomposition  
\begin{equation}
\label{eqn:cumulant_decomp2}
     \kappa_d(X^{(k)}) = \sum_{i = 1}^q \kappa_d(\epsilon_i^{(k)}) (\mathbf{a}^{(k)})_i^{\otimes d}
\end{equation}
is unique for all contexts $k$, where $(\mathbf{a}^{(k)})_i$ is the $i$-th column of $A^{(k)} = F (I-\Lambda^{(k)})^{-1}$,
    \item[(b)] $\kappa_d(\epsilon^{(k)}_i)\neq 0$ for all contexts $k$ and all $i \in [q]$,
    \item[(c)] $\kappa_d(\epsilon^{(i_k)}_{i_k}) \neq \pm1$ for all contexts $k$.
\end{enumerate}   
Fix such a $d$.
No pair of columns of $A^{(k)}$ are collinear, since collinearity is a Zariski closed condition with non-empty complement and the entries of $F$ and the non-zero values $\lambda^{(k)}_{i,j}$ are generic, by Assumption~\ref{assumption:main}(b).
Hence Proposition~\ref{prop:non_zero_coefficients} applies, and we recover the mixing matrix $A^{(k)}$ up to permutation and scaling; i.e., we recover the matrices in~\eqref{eqn:recovered_matrix}. The number of columns of these matrices is $q$.
Absorbing the coefficients of the tensor decomposition into the vectors as in~\eqref{eqn:lambdas_from_tensor}, the diagonal matrices in~\eqref{eqn:recovered_matrix} satisfy~\eqref{eqn:formula_for_D}
for every $i \in [q]$, $k \in K$,
where $d_i = d$ for all $i$.

We now show why such a $d$ as above is not required.
Part (a) holds for any $d \geq 3(q-1)$, see the proof of Proposition~\ref{prop:non_zero_coefficients}. 
Part (b) is subtle: the existence of a sufficiently large $d$ with $\kappa_d(\epsilon_i^{(k)}) \neq 0$ is equivalent to Assumption~\ref{assumption:main}(a) that the distribution $\epsilon_i^{(k)}$ is non-Gaussian, by Marcinkiewicz's theorem~\cite{marcinkiewicz1939propriete}. 
However, this does not imply the existence of a \emph{common} $d$ with that property, as we assumed above. If such a common $d$ does not exist, we instead recover the columns of $F (I-\Lambda^{(k)})^{-1}$ up to permutation and scaling, as well as the entries of $D^{(k)}$, using a \emph{set} of large enough cumulants $\kappa_{d_1}(X^{(k)}), \ldots, \kappa_{d_m}(X^{(k)})$ such that for all $i$ there exists $\ell \in [m]$ with $\kappa_{d_\ell}(\epsilon_i^{(k)}) \neq 0$. The non-Gaussianity assures that such a set exists, and the number of non-collinear vectors recovered from these tensor decompositions is $q$.
Part (c) can be avoided in the same way as (b), using Assumption~\ref{assumption:main}(c).
\end{proof}

Column scaling and permutation as in Proposition~\ref{prop:with_DP} have natural interpretations in LCD: there is no natural order on the latent variables, and they can be re-scaled without affecting membership in the model, see Remark~\ref{rmk:benign}. 
The goal of this section is to show that it is possible to fix an order and scaling of latent variables that is consistent across contexts. The upshot is the following result. 

\begin{proposition}
\label{prop:upshot}
Consider LCD under Assumption~\ref{assumption:main}. Then we can recover the number of latent nodes $q$ and the matrices
\begin{equation}\label{eq:Ak}
    A^{(k)} := F(I-\Lambda^{(k)})^{-1} \quad \text{for all} \quad k \in K \cup \{0\}.
\end{equation}
\end{proposition}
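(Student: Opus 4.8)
The plan is to turn the per-context outputs of Proposition~\ref{prop:with_DP} into a single, globally consistent choice of scaling and ordering of the latent variables; by Remark~\ref{rmk:benign} this is precisely what recovering the matrices $A^{(k)}$ means. First I would fix the observational context as a reference frame: the matrix recovered in~\eqref{eqn:recovered_matrix} for $k=0$, namely $B^{(0)} := A^{(0)} D^{(0)} P^{(0)}$, is declared by fiat to define our labelling and scaling of the latent nodes. It then remains to re-order and re-scale each interventional output $B^{(k)} := A^{(k)} D^{(k)} P^{(k)}$ so that its columns align with those of $B^{(0)}$. Recovery of $q$ is immediate, since every $B^{(k)}$ has $q$ columns.

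The key structural input is how $A^{(k)}$ relates to $A^{(0)}$. Since context $k$ is a single-node intervention on $Z_{i_k}$, the matrices $\Lambda^{(k)}$ and $\Lambda^{(0)}$ agree outside row $i_k$, so $\Lambda^{(k)}-\Lambda^{(0)} = e_{i_k}\delta\T$ for some vector $\delta$. Writing $A^{(k)} = A^{(0)}(I-\Lambda^{(0)})(I-\Lambda^{(k)})^{-1}$ and expanding $(I-\Lambda^{(0)})(I-\Lambda^{(k)})^{-1} = I + e_{i_k}(v^{(k)})\T$ with $(v^{(k)})\T := \delta\T(I-\Lambda^{(k)})^{-1}$ gives the rank-one update
\[ A^{(k)} = A^{(0)} + \mathbf{a}^{(0)}_{i_k}\,(v^{(k)})\T \]
along the target column $\mathbf{a}^{(0)}_{i_k}$. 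One checks that $v^{(k)}$ is supported on the ancestors $\an_\Gcal(i_k)$. Hence column $j$ of $A^{(k)}$ equals column $j$ of $A^{(0)}$ whenever $j \notin \an_\Gcal(i_k)$, and otherwise $\mathbf{a}^{(k)}_j \in \operatorname{span}(\mathbf{a}^{(0)}_j, \mathbf{a}^{(0)}_{i_k})$; in particular the target column $\mathbf{a}^{(0)}_{i_k}$ itself is unchanged.

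With this in hand I would recover the alignment in two stages. The unchanged columns are matched to $B^{(0)}$ by testing collinearity: under Assumption~\ref{assumption:main}(b) the columns of $A^{(0)}$ are generically pairwise non-collinear and the moved columns are generically non-collinear with every column of $A^{(0)}$, so collinearity detects and uniquely matches exactly the unchanged columns, and rescaling each matched column of $B^{(k)}$ to its partner fixes its scale. For the moved columns I would use the planarity constraint above: for the correct matching the differences $\mathbf{a}^{(k)}_j - \mathbf{a}^{(0)}_j$ are all parallel to the single target direction $\mathbf{a}^{(0)}_{i_k}$ (which is one of the already-matched columns). I would therefore search over the finitely many candidate targets and matchings and, using genericity to exclude spurious coincidences, select the unique pairing making all these differences collinear; rescaling then fixes the remaining scales.

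The step I expect to be the main obstacle is this last one — locating the target $i_k$ and correctly placing the moved columns — together with the genericity arguments needed to guarantee uniqueness. When $F$ is injective ($p \ge q$) it is clean: the columns of $B^{(0)}$ are a basis of their span, each column of $B^{(k)}$ has a unique expansion in that basis, the moved columns are exactly the $2$-sparse ones, and the target is the index common to all their supports. The difficulty is the non-injective regime $p<q$, where the columns of $A^{(0)}$ are linearly dependent and such expansions are no longer unique; there one must argue directly with the rank (coplanarity) conditions, and the low-dimensional case (e.g.\ $p=2$, where any three vectors are automatically coplanar and so the ambient space cannot witness the rank-$3$ obstructions) requires combining information across several contexts rather than comparing each context to the observational one in isolation.
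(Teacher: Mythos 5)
Your overall strategy coincides with the paper's: fix the observational context as the reference frame (Proposition~\ref{prop:wlog_i}), match the columns with $j \notin \an(i_k)$ by collinearity (this is Proposition~\ref{prop:3cases_target_permutation}), and match the ancestor columns by demanding that their differences be parallel to the target column (this is Proposition~\ref{prop:rank1} and Corollary~\ref{coroll:rank1}). Your Sherman--Morrison-style identity $A^{(k)} = A^{(0)} + \mathbf{a}^{(0)}_{i_k}(v^{(k)})\T$ with $v^{(k)}$ supported on $\an(i_k)$ is a clean way to obtain the rank-one structure that the paper derives by path counting (Proposition~\ref{prop:difference_lambdas_rank1}), and is a genuine stylistic improvement. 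A minor simplification you miss: by Assumption~\ref{assumption:main}(c) the target is already identified in your first stage as the unique matched column whose scaling ratio is not $\pm 1$, so no search over candidate targets is needed.

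The gap is where you yourself place it, and your proposed escape route is wrong. All of your uniqueness claims (``genericity excludes spurious coincidences'') are exactly the content that needs proof, and in the regime $p<q$ you do not supply it: you assert that for $p=2$ the ambient space ``cannot witness the rank-$3$ obstructions'' and that one must ``combine information across several contexts,'' but no rank-$3$ condition is ever required and you give no cross-context argument (nor does one exist in the paper, nor is one needed). Every test in your two stages is a pairwise collinearity test: is $Fu$ parallel to $Fw$, where $u,w \in \RR^q$ are built from columns of $(I-\Lambda^{(0)})^{-1}$ and $(I-\Lambda^{(k)})^{-1}D^{(k)}$ under the finitely many candidate signed permutations? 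The decisive observation, which makes the per-context argument uniform in $p$, is that these $u,w$ do not depend on $F$, and for generic $F$ with $p \geq 2$ one has $Fu \parallel Fw$ if and only if $u \parallel w$: the condition $Fu \parallel Fw$ means the affine line $\{u - \alpha w : \alpha \in \RR\}$ meets $\ker F$; that line spans a plane, a generic kernel of dimension $q-p$ meets the plane only in $0$, and $0$ lies on the line only when $u \parallel w$. This transfers every collinearity test from $\RR^p$ to $\RR^q$, where $(I-\Lambda^{(0)})^{-1}$ is invertible and uniqueness of the matching becomes a combinatorial statement about paths in $\Gcal$; it is precisely the coefficient-matching step (``the equality holds if and only if it holds for the coefficient of every $f_{i,\ell}$ independently'') in the proofs of Proposition~\ref{prop:3cases_target_permutation} and Corollary~\ref{coroll:rank1}. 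So $p=2$ requires no special treatment, comparing each context to the observational one in isolation suffices for all $p,q \geq 2$, and without this step your proof is incomplete exactly in the case $p<q$ that distinguishes Proposition~\ref{prop:upshot} from prior work.
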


We fix a scaling of errors and an order on latent variables when $k=0$, as follows. 

\begin{proposition}
\label{prop:wlog_i}
Without loss of generality $P^{(0)} = D^{(0)} = I$.
\end{proposition}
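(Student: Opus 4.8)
The plan is to use the benign rescaling-and-reordering freedom of Remark~\ref{rmk:benign} to normalize the recovered parameters at the observational context $k=0$. By Proposition~\ref{prop:with_DP}, the tensor decomposition returns the matrix $A^{(0)}D^{(0)}P^{(0)}$, where $A^{(0)} = F(I-\Lambda^{(0)})^{-1}$, the matrix $P^{(0)}$ is a permutation, and $D^{(0)}$ is the diagonal matrix with entries given by~\eqref{eqn:formula_for_D}. The idea is that this recovered pair $(D^{(0)}, P^{(0)})$ is itself a benign transformation, so it can be absorbed into the ground-truth parameters, after which it is trivial.

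First I would apply the transformation~\eqref{eqn:equiv} of Remark~\ref{rmk:benign} with $M = D^{(0)}P^{(0)}$, setting $\widetilde F = FM$, $\widetilde\Lambda^{(k)} = M^{-1}\Lambda^{(k)}M$ and $\widetilde\epsilon^{(k)} = M^{-1}\epsilon^{(k)}$ for every context $k \in K\cup\{0\}$ simultaneously. This is legitimate because, as noted in Remark~\ref{rmk:benign}, the scaling and ordering matrices must be common to all contexts, and~\eqref{eqn:benign} guarantees that the observed distributions $X^{(k)}$ are unchanged. Using $(I-\widetilde\Lambda^{(0)})^{-1} = M^{-1}(I-\Lambda^{(0)})^{-1}M$, the new observational mixing matrix becomes $\widetilde A^{(0)} = \widetilde F(I-\widetilde\Lambda^{(0)})^{-1} = A^{(0)}M = A^{(0)}D^{(0)}P^{(0)}$, which is exactly the matrix returned by the decomposition. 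Hence, taking the transformed model as our representative, the recovery at $k=0$ returns $\widetilde A^{(0)}$ itself, i.e. with trivial scaling and permutation.

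To finish, I would check that this choice is consistent with the cumulant formula~\eqref{eqn:formula_for_D} applied to the new errors $\widetilde\epsilon^{(0)} = M^{-1}\epsilon^{(0)}$. Since each entry $\widetilde\epsilon^{(0)}_i$ is a scalar multiple $((D^{(0)})^{-1})_{j,j}$ of a single entry $\epsilon^{(0)}_j$ of $\epsilon^{(0)}$, and a $d$-th cumulant scales as $\kappa_d(c\,Y) = c^d\,\kappa_d(Y)$, the value $\sqrt[d]{\kappa_d(\widetilde\epsilon^{(0)}_i)}$ prescribed by~\eqref{eqn:formula_for_D} reduces to $1$, while the permutation is absorbed by relabelling the latent nodes; thus $\widetilde D^{(0)} = I$ and $\widetilde P^{(0)} = I$. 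The step I expect to require the most care is this verification in the even-order case, where~\eqref{eqn:formula_for_D} and~\eqref{eqn:lambdas_from_tensor} carry an unresolved sign: one must track the factor $\operatorname{sign}(\kappa_d(\epsilon^{(0)}_i))$ and fix the $\pm$ choice so that the normalized scaling is genuinely the identity. This is bookkeeping rather than a genuine obstruction, since the decomposition determines the columns only up to exactly these sign and scaling ambiguities, which are precisely what we are free to pin down.
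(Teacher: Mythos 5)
Your proof is correct and takes essentially the same route as the paper: it absorbs the recovered scaling and permutation into the model parameters via the benign transformation of Remark~\ref{rmk:benign}, exactly as the paper does (the paper treats $P^{(0)}$ as a relabelling and conjugates by $D^{(0)}$ alone, while you use $M = D^{(0)}P^{(0)}$ in one step, which is the same idea). Your extra verification via $\kappa_d(cY)=c^d\kappa_d(Y)$, including the $\pm1$ subtlety for even $d$, matches the paper's remark immediately after the proposition that this normalization sets the relevant cumulants to $\pm1$.
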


\begin{proof}
We have recovered $A DP$ for some scaling $D$ and permutation $P$, by Proposition~\ref{prop:with_DP}, 
where we drop the superscripts since we refer only to the observational context.
The permutation $P$ orders the latent variables. 
We fix it to be the identity, thereby fixing an order of latent variables. 
We now consider $D$. Define
$\widetilde{F} = F D$ and $\widetilde{\Lambda} = D^{-1} \Lambda D$. 
Then 
$$ F(I - \Lambda)^{-1} D = \widetilde{F}(I - \widetilde{\Lambda})^{-1} .$$
and matrices $\Lambda$ and $\widetilde{\Lambda}$ have the same support. Hence $\widetilde{F}$ and $\widetilde{\Lambda}$ are valid parameters in the model, so we can without loss of generality set $D = I$. 
\end{proof}

The choice in Proposition \ref{prop:wlog_i} sets a non-zero cumulant $\kappa_{d_i}(\epsilon_i^{(0)})$ to $\pm 1$ for each $i \in [q]$, see~\eqref{eqn:formula_for_D}.
Hence $D^{(k)}_{i,i} 
\pm 1$
for all $i\neq i_k$, by Proposition \ref{prop:with_DP}, since $\epsilon^{(k)}$ and $\epsilon^{(0)}$ differ only at the intervention target $i_k$.
We now compare $A^{(0)}$ and $A^{(k)}D^{(k)}P^{(k)}$. The parents of a node $j$ are the set $\pa_\Gcal(j) = \{ i \in \Gcal \, | \, i \to j \in \Gcal\}$ and the ancestors of $j$ are $\an_\Gcal(j) = \{ i \in \Gcal \, | \, i \to \cdots \to j \in \Gcal\}$. 
We drop the subscript since $\Gcal$ is fixed.

\begin{proposition}\label{prop:3cases_target_permutation}
    Recall that $i_k \in [q]$ is the intervention target of context $k$ and let $j \in [q]$. 
Assume that $F$ is generic and that the non-zero entries of $\Lambda^{(k)}$ are generic.
    Then one of three possibilities arises.
    \begin{enumerate}
    \item[(i)] $j=i_k$ and the $j$-th column of $A^{(0)}$ equals one of the columns of $A^{(k)}D^{(k)}P^{(k)}$ up to a scaling that is not $\pm 1$;
    \item[(ii)] $j \not\in \an(i_k)\cup \{i_k\}$ and the $j$-th column of $A^{(0)}$ equals one of the columns of $A^{(k)}D^{(k)}P^{(k)}$, up to sign;
    \item[(iii)] $j\in \an(i_k)$ and the $j$-th column of $A^{(0)}$ is not parallel to any of the columns of $A^{(k)}D^{(k)}P^{(k)}$. 
\end{enumerate}
\end{proposition}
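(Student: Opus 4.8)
The plan is to track how the columns of $(I-\Lambda^{(k)})^{-1}$ change when passing from the observational context to context $k$, and then transport this information through $F$. Write $\mathbf{b}_i^{(k)}$ for the $i$-th column of $(I-\Lambda^{(k)})^{-1}$, so the $i$-th column of $A^{(k)}$ is $F\mathbf{b}_i^{(k)}$. The entry $(\mathbf{b}_i^{(k)})_\ell = [(I-\Lambda^{(k)})^{-1}]_{\ell,i}$ is the sum over all directed paths $i \to \cdots \to \ell$ of the products of the edge weights; in particular the diagonal entries equal $1$, and $(\mathbf{b}_i^{(k)})_\ell \neq 0$ exactly when $\ell \in \de(i)\cup\{i\}$ in the corresponding DAG. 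The intervention modifies only row $i_k$ of $\Lambda$, i.e.\ the weights on edges into $i_k$. Such a modification changes $\mathbf{b}_i^{(k)}$ if and only if some directed path out of $i$ uses an edge into $i_k$, i.e.\ if and only if $i \in \an(i_k)$. This is the combinatorial heart of the statement: the column $i_k$ itself is unchanged (a path from $i_k$ cannot re-enter $i_k$ in a DAG), the columns indexed by $j \notin \an(i_k)\cup\{i_k\}$ are unchanged, and exactly the columns indexed by $j \in \an(i_k)$ are changed. This trichotomy is precisely (i)--(iii).

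For (i) and (ii) the column is unchanged, so $\mathbf{b}_j^{(k)} = \mathbf{b}_j^{(0)}$ and hence the $j$-th column of $A^{(0)}$ equals the $j$-th column of $A^{(k)}$. In the recovered matrix $A^{(k)}D^{(k)}P^{(k)}$ this column appears rescaled by $D^{(k)}_{j,j}$ and permuted, so it remains to read off the scalar. Using Proposition~\ref{prop:with_DP} together with the normalization $D^{(0)}=I$ from Proposition~\ref{prop:wlog_i}, I would note that for $j \neq i_k$ the error $\epsilon_j^{(k)}$ equals $\epsilon_j^{(0)}$, whence $D^{(k)}_{j,j} = \pm 1$, giving (ii); whereas for $j = i_k$ Assumption~\ref{assumption:main}(c) forces the relevant cumulant of $\epsilon_{i_k}^{(k)}$ to differ from $0$ and $\pm1$, so $D^{(k)}_{i_k,i_k} \neq \pm 1$, giving (i).

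For (iii) I must show that $F\mathbf{b}_j^{(0)}$ is not parallel to $F\mathbf{b}_i^{(k)}$ for any $i \in [q]$. The strategy is to first establish linear independence of $\mathbf{b}_j^{(0)}$ and $\mathbf{b}_i^{(k)}$ in $\RR^q$, and then transport this through the generic map $F$. For the independence I would use the unit diagonal together with the fact that reachability in the context-$k$ graph is contained in reachability in $\Gcal$. When the column $i$ is unchanged, $\mathbf{b}_i^{(k)} = \mathbf{b}_i^{(0)}$ is a column of the invertible matrix $(I-\Lambda^{(0)})^{-1}$ distinct from $\mathbf{b}_j^{(0)}$, hence independent. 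When $i = j$, a relation $\mathbf{b}_j^{(0)} = c\,\mathbf{b}_i^{(k)}$ would force $c = 1$ by the unit diagonal and hence equality of the two columns, contradicting that column $j$ is changed. When $i \in \an(i_k)$ with $i \neq j$, examining coordinate $i$ or coordinate $j$ of such a relation forces $c = 0$, again a contradiction. Finally, since $p \geq 2$, for each of the finitely many independent pairs the set of $F$ collapsing the spanned $2$-plane to a line is a proper Zariski-closed subset, so a generic $F$ keeps all images pairwise non-parallel; since scaling and permutation do not affect parallelism, this is exactly the assertion about the columns of $A^{(k)}D^{(k)}P^{(k)}$.

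The hard part is case (iii): ruling out accidental parallelism, especially when $q > p$ and $F$ is not injective, so that non-parallelism of the pre-images in $\RR^q$ does not by itself guarantee non-parallelism of the images in $\RR^p$. The genericity of $F$ is precisely what bridges this gap, while the unit-diagonal and reachability argument supplies the pre-image independence on which it acts. Some care is needed to combine the genericity of $F$ with that of the non-zero entries of $\Lambda^{(k)}$---on which the vectors $\mathbf{b}_i^{(k)}$ themselves depend---into a single proper subvariety that the assumed generic parameters avoid.
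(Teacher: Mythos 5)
Your proposal is correct and follows essentially the same route as the paper's proof: cases (i) and (ii) via the unchanged columns of $(I-\Lambda^{(k)})^{-1}$ together with the scaling $D^{(k)}_{j,j}$ being $\pm 1$ off the target and not $\pm1$ at the target, and case (iii) by showing the pre-image columns are non-parallel in $\RR^q$ (unit diagonal, DAG reachability, and the generic change of columns indexed by $\an(i_k)$) and then transporting non-parallelism through the generic $F$. The joint-genericity point you flag at the end is handled in the paper by the single assertion that, by genericity of $F$ and $\Lambda^{(k)}$, the parallelism equation must hold coefficient-wise in the $f_{i,\ell}$, which is the same standard argument you sketch.
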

\begin{proof}
(i) Assume $j = i_k$, and let $j_k = P^{(k)}(j)$. The $(i,j_k)$ entry of $A^{(k)}D^{(k)}P^{(k)}$ is 
\begin{equation}\label{eq:compare_columns_contexts}
    (A^{(k)}D^{(k)})_{i,j} = A^{(k)}_{i,j} D^{(k)}_{j,j} = A^{(0)}_{i,j} D^{(k)}_{j,j}.
\end{equation}
Therefore, the $j_k$-th column of $A^{(k)}D^{(k)}P^{(k)}$ is a non-trivial (not $0$ or $\pm 1$) multiple of the $j$-th column of $A^{(0)}$, since $D^{(k)}_{j,j}\neq \pm 1$ when $j$ is the intervention target.

(ii) Assume $j \not\in \an(i_k)\cup \{i_k\}$ and let $j_k = P^{(k)}(j)$. The  chain of equalities in \eqref{eq:compare_columns_contexts} holds true, but $D^{(k)}_{j,j}=\pm 1$. Hence, the $j_k$-th column of $A^{(k)} D^{(k)}P^{(k)}$ is the $j$-th column of $A^{(0)}$ up to sign.

(iii) Let $j\in \an(i_k)$. Assume for contradiction that there exists a column $r$ of $A^{(k)}D^{(k)}P^{(k)}$ that is parallel to the $j$-th column of $A^{(0)}$. Let $r_k = P^{(k)}(r)$. Then, there exists $\alpha$ such that for every $i\in [p]$, 
\[
\sum_{\ell \in [q]} f_{i,\ell} (I-\Lambda^{(0)})^{-1}_{\ell, j} = \alpha \sum_{\ell \in [q]} f_{i,\ell} (I-\Lambda^{(k)})^{-1}_{\ell, r_k} D^{(k)}_{r_k, r_k}.
\]
By genericity of $F$ and $\Lambda^{(k)}$, the equality holds if and only if it holds for the coefficient of every $f_{i,\ell}$ independently. It is therefore equivalent to 
\[
(I-\Lambda^{(0)})^{-1}_{\ell, j} = \alpha (I-\Lambda^{(k)})^{-1}_{\ell, r_k} D^{(k)}_{r_k, r_k}
\]
for all $\ell\in [q]$. If $\ell = j$, then by genericity of $\Lambda^{(k)}$ we have $r_k = j$ and $\alpha = D^{(k)}_{j,j}$. However, since $D^{(k)}_{j, j} = \pm 1$, this leads to the equality
\[
(I-\Lambda^{(0)})^{-1}_{\ell, j} = (I-\Lambda^{(k)})^{-1}_{\ell, j}
\]
for every $\ell \in [q]$, which implies by genericity that $\lambda^{(0)}_{i_k,m} = \lambda^{(k)}_{i_k,m}$ for every $m$, a contradiction.
\end{proof}

Proposition \ref{prop:rank1} recovers the target of each intervention. It also recovers the ancestors of each latent node. That is, it recovers the transitive closure $\overline{\Gcal}$, providing a simpler proof of the following result, proven without the non-Gaussian assumption in \cite[Theorem 1]{SSBU23:LinearCausalDisentanglementInterventions}.

\begin{corollary}
\label{cor:transitive_closure}
Consider LCD under Assumption~\ref{assumption:main}
    with one intervention (either perfect or soft) on each latent node. Then we can recover the transitive closure $\overline{\Gcal}$ of the latent DAG $\Gcal$.
\end{corollary}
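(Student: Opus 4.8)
The plan is to recover, for every latent node, both the context that intervenes on it and its set of ancestors, and then to assemble these ancestor sets into the transitive closure. By Propositions~\ref{prop:with_DP} and~\ref{prop:wlog_i} I would take as given the matrix $A^{(0)} = F(I-\Lambda^{(0)})^{-1}$, under the normalization $D^{(0)} = P^{(0)} = I$, together with the matrices $A^{(k)} D^{(k)} P^{(k)}$ for every context $k \in K$. The whole argument then runs through the column comparison of Proposition~\ref{prop:3cases_target_permutation}: for a fixed context $k$, I would compare each column of $A^{(0)}$ with the columns of $A^{(k)} D^{(k)} P^{(k)}$ and sort them into the three cases (i)--(iii).

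First I would identify the intervention target $i_k$. Since $A^{(0)}$ is known exactly, whenever a column $\mathbf{a}_j^{(0)}$ of $A^{(0)}$ is parallel to a column of $A^{(k)} D^{(k)} P^{(k)}$, the proportionality constant is precisely $D^{(k)}_{j,j}$ and can be read off. By the discussion following Proposition~\ref{prop:wlog_i}, $D^{(k)}_{j,j} = \pm 1$ for every $j \neq i_k$, whereas Assumption~\ref{assumption:main}(c) forces $D^{(k)}_{i_k,i_k} \neq \pm 1$. Proposition~\ref{prop:3cases_target_permutation} then guarantees that case (i) occurs for exactly one index, namely $j = i_k$: it is the unique column of $A^{(0)}$ that matches a column of the intervention matrix up to a scalar of absolute value different from $1$. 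This pins down $i_k$ with no prior knowledge of $P^{(k)}$.

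Having located $i_k$, I would read off its ancestors from the same comparison. By cases (i) and (ii) of Proposition~\ref{prop:3cases_target_permutation}, every column indexed by $j \notin \an(i_k)$ is parallel to some column of $A^{(k)} D^{(k)} P^{(k)}$, while by case (iii) the column indexed by any $j \in \an(i_k)$ is parallel to none of them. Hence $\an(i_k)$ is recovered as exactly the set of columns of $A^{(0)}$ having no parallel partner in $A^{(k)} D^{(k)} P^{(k)}$. Because there is one intervention on each latent node, the targets $\{ i_k : k \in K\}$ exhaust $[q]$, so ranging over all contexts yields $\an_\Gcal(j)$ for every $j \in [q]$. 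Since $\overline{\Gcal}$ has the edge $i \to j$ precisely when $i \in \an_\Gcal(j)$, knowing all ancestor sets is the same as knowing $\overline{\Gcal}$, which completes the recovery.

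The bulk of the work is already done in Proposition~\ref{prop:3cases_target_permutation}, so the only delicate point is the separation of cases (i) and (ii), that is, reliably distinguishing a scaling of $\pm 1$ from one that is not. This is exactly where the normalization $D^{(0)} = I$ and Assumption~\ref{assumption:main}(c) are essential: the former keeps the reference columns of $A^{(0)}$ unscaled, so that the observed proportionality constant equals $D^{(k)}_{j,j}$ exactly, and the latter guarantees that the target's cumulant, hence $D^{(k)}_{i_k,i_k}$, genuinely departs from $\pm 1$. I would finally remark that the argument is insensitive to whether the interventions are perfect or soft, since Proposition~\ref{prop:3cases_target_permutation} holds in both regimes (the contradiction in case (iii) needs only that the modified row $i_k$ of $\Lambda^{(k)}$ actually differs from that of $\Lambda^{(0)}$, which is automatic whenever $i_k$ has a parent, and case (iii) is vacuous otherwise).
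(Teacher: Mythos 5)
Your proposal is correct and follows essentially the same route as the paper: the corollary is drawn as an immediate consequence of Proposition~\ref{prop:3cases_target_permutation} (together with Propositions~\ref{prop:with_DP} and~\ref{prop:wlog_i}), reading off the intervention target as the unique column of $A^{(0)}$ matched up to a scaling other than $\pm 1$, the ancestor set $\an(i_k)$ as the columns with no parallel partner in $A^{(k)}D^{(k)}P^{(k)}$, and then ranging over all $q$ contexts to assemble $\overline{\Gcal}$. Your extra remarks (uniqueness of the parallel partner by genericity of the columns, and the observation that case (iii) is vacuous when $i_k$ has no parents, so the argument covers both perfect and soft interventions) only make explicit points the paper leaves implicit.
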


\begin{remark}[Paths in $\Gcal$]
\label{rmk:paths}
Entry $(i,j)$ of the matrix $(I-\Lambda^{(k)})^{-1}$ is a sum over all the paths $j \to \cdots \to i$ in $\Gcal$, where each path contributes the product $\lambda^{(k)}_{m,n}$ over all edges $n\to m$ in the path. For instance, for the DAG $3\to 2\to 1$ we have $(I-\Lambda^{(k)})^{-1}_{1,3} = \lambda^{(k)}_{1,2} \lambda^{(k)}_{2,3}$. Adding the edge $3\to 1$ gives $(I-\Lambda^{(k)})^{-1}_{1,3} = \lambda^{(k)}_{1,2} \lambda^{(k)}_{2,3} + \lambda^{(k)}_{1,3}$.
\end{remark}

Proposition~\ref{prop:3cases_target_permutation} partially recovers the permutation $P^{(k)}$, as it pairs all columns $j \notin \an(i_k)$. We can therefore assume without loss of generality that $i_k = k$ and that $P^{(k)}_{i,j} = \delta_{i,j}$ for every $j \not\in\an(k)$.
We are left to pair the columns of $\an(k)$.

\begin{proposition}\label{prop:rank1}
    For $j_1, j_2 \in \an(k)$, there exists $\alpha \in \RR$ such that 
    \[
    (I-\Lambda^{(0)})^{-1}_{i,j_1} - \left( (I-\Lambda^{(k)})^{-1} D^{(k)} \right)_{i,j_2} = \alpha \left( (I-\Lambda^{(0)})^{-1}_{i,k} - \left( (I-\Lambda^{(k)})^{-1} D^{(k)} \right)_{i,k} \right)
    \]
    for all $i \in [q]$, if and only if $j_1 = j_2$ and $D^{(k)}_{j_1,j_1} = 1$.
\end{proposition}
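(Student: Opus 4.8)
The plan is to analyze how a single intervention at node $k$ deforms the matrix $(I-\Lambda)^{-1}$, and to read off the pairing criterion from two structural facts. Write $B^{(0)} := (I-\Lambda^{(0)})^{-1}$ and $B^{(k)} := (I-\Lambda^{(k)})^{-1}$, so that by Remark~\ref{rmk:paths} the entry $B^{(0)}_{i,j}$ (resp.\ $B^{(k)}_{i,j}$) is the generating sum over directed paths $j\to\cdots\to i$. Recall that we have reduced to $i_k=k$, and that by Proposition~\ref{prop:with_DP} we have $D^{(k)}_{j,j}=\pm1$ for every $j\neq k$ while $D^{(k)}_{k,k}\neq\pm1$ (since $k$ is the intervention target). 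Denote the right-hand vector by $u_i := B^{(0)}_{i,k}-(B^{(k)}D^{(k)})_{i,k}$ and the left-hand vector by $w_i := B^{(0)}_{i,j_1}-(B^{(k)}D^{(k)})_{i,j_2}$; the statement asks exactly when $w=\alpha u$ for some scalar $\alpha$.

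First I would record two facts about the effect of the intervention. (1) The $k$-th columns agree, $B^{(k)}_{\cdot,k}=B^{(0)}_{\cdot,k}$: a soft or perfect intervention at $k$ alters only the incoming edges of $k$, i.e.\ only the $k$-th row of $\Lambda$, and by acyclicity a path leaving $k$ never uses an edge into $k$, so the generating sums over paths $k\to\cdots\to i$ coincide in both contexts. Consequently $u=(1-D^{(k)}_{k,k})B^{(0)}_{\cdot,k}$ is a \emph{nonzero} multiple of the $k$-th column, since $D^{(k)}_{k,k}\neq1$. (2) Because $\Lambda^{(k)}$ and $\Lambda^{(0)}$ differ only in row $k$, we have $\Lambda^{(k)}-\Lambda^{(0)}=e_k\,v\T$ for some row vector $v$; Sherman--Morrison (equivalently, splitting a path $j\to\cdots\to i$ according to whether it passes through $k$) shows that $B^{(k)}-B^{(0)}$ has rank one with column space spanned by $B^{(0)}_{\cdot,k}$. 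In particular, $B^{(k)}_{\cdot,j}-B^{(0)}_{\cdot,j}$ is a multiple of $B^{(0)}_{\cdot,k}$ for every $j$.

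These two facts dispatch the backward implication immediately: if $j_1=j_2=:j$ and $D^{(k)}_{j,j}=1$, then $w=B^{(0)}_{\cdot,j}-B^{(k)}_{\cdot,j}$ is a multiple of $B^{(0)}_{\cdot,k}$ by fact (2), hence a multiple of $u$ by fact (1), and the required $\alpha$ exists. For the forward implication I would evaluate the relation at the single index $i=j_1$. Since $j_1\in\an(k)$, acyclicity forbids a path $k\to\cdots\to j_1$, so $B^{(0)}_{j_1,k}=0$ and thus $u_{j_1}=0$; the relation $w=\alpha u$ therefore forces $w_{j_1}=0$, i.e.\ $D^{(k)}_{j_2,j_2}\,B^{(k)}_{j_1,j_2}=B^{(0)}_{j_1,j_1}=1$. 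If $j_1=j_2$ this reads $D^{(k)}_{j_1,j_1}=1$, as claimed. If instead $j_1\neq j_2$, then both nodes being strict ancestors of $k$ means no path $j_2\to\cdots\to j_1$ passes through $k$, so $B^{(k)}_{j_1,j_2}$ is a polynomial in the generic off-row-$k$ entries of $\Lambda^{(0)}$ that is either identically zero or a sum of monomials of degree $\geq1$; in either case it is generically different from $(D^{(k)}_{j_2,j_2})^{-1}=\pm1$, contradicting $D^{(k)}_{j_2,j_2}B^{(k)}_{j_1,j_2}=1$.

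The main obstacle is establishing the two structural facts, especially fact (2): one must argue that modifying only the incoming edges of $k$ perturbs every column of $(I-\Lambda)^{-1}$ along the single fixed direction $B^{(0)}_{\cdot,k}$. Once this is in place, the whole equivalence follows from the evaluation at $i=j_1$ together with a routine genericity argument, namely that a nonconstant polynomial in generic parameters avoids the finite set $\{\pm1\}$.
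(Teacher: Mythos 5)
Your proof is correct and takes essentially the same route as the paper's: the necessity direction is identical (evaluate the relation at $i = j_1$, where the right-hand side vanishes by acyclicity of the DAG, then use genericity to exclude $j_1 \neq j_2$ and $D^{(k)}_{j_2,j_2} = -1$). For sufficiency you obtain the key structural fact — that $(I-\Lambda^{(k)})^{-1} - (I-\Lambda^{(0)})^{-1}$ is a rank-one perturbation supported on the direction $(I-\Lambda^{(0)})^{-1}_{\cdot,k}$ — via Sherman--Morrison rather than the paper's path-splitting computation, a cosmetic difference; indeed the paper itself establishes exactly this fact by path arguments in Proposition~\ref{prop:difference_lambdas_rank1}.
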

\begin{proof}
    Fix $j:= j_1 = j_2$ and assume $D^{(k)}_{j_1,j_1} = 1$.
    The left hand side is a sum over all paths from $Z_{j}$ to $Z_i$, through $Z_k$, since the paths that do not go through $Z_k$ cancel:
    \begin{align*}
        (I-\Lambda^{(0)})^{-1}_{i, j} - (I-\Lambda^{(k)})^{-1}_{i, j} &= (I-\Lambda^{(0)})^{-1}_{i,k} (I-\Lambda^{(0)})^{-1}_{k, j} - (I-\Lambda^{(k)})^{-1}_{i,k} (I-\Lambda^{(k)})^{-1}_{k, j} \\
        &= (I-\Lambda^{(0)})^{-1}_{i,k} \left( (I-\Lambda^{(0)})^{-1}_{k, j} - (I-\Lambda^{(k)})^{-1}_{k, j} \right) \\
        &= \tfrac{ (I-\Lambda^{(0)})^{-1}_{k, j} - (I-\Lambda^{(k)})^{-1}_{k, j}}{1-D^{(k)}_{k,k}} \left( (I-\Lambda^{(0)})^{-1}_{i,k} - (I-\Lambda^{(k)})^{-1}_{i,k}D^{(k)}_{k,k} \right) ,
    \end{align*}
    where we used $(I-\Lambda^{(0)})^{-1}_{i,k} - (I-\Lambda^{(k)})^{-1}_{i,k} D^{(k)}_{k,k} = (I-\Lambda^{(0)})^{-1}_{i,k} (1-D^{(k)}_{k,k})$. This proves one direction.

    Assume conversely that the equality in the statement holds for some $j_1,j_2 \in \an(k)$, and let  $i = j_1$. Then
    \[
    (I-\Lambda^{(0)})^{-1}_{j_1,j_1} - \left( (I-\Lambda^{(k)})^{-1} D^{(k)} \right)_{j_1,j_2} = \alpha \left( (I-\Lambda^{(0)})^{-1}_{j_1,k} - \left( (I-\Lambda^{(k)})^{-1} D^{(k)} \right)_{j_1,k} \right).
    \]
    The right-hand side is zero since the latent graph is a DAG and $j_1 \in \an(k)$. Hence
    \[
    0 = (I-\Lambda^{(0)})^{-1}_{j_1,j_1} - \left( (I-\Lambda^{(k)})^{-1} D^{(k)} \right)_{j_1,j_2} = 1 \pm (I-\Lambda^{(0)})^{-1}_{j_1,j_2}
    \]
    where we used that there are no paths from $j_2$ to $k$ and $D^{(k)}_{j_2,j_2} = \pm 1$. Therefore, $ (I-\Lambda^{(0)})^{-1}_{j_1,j_2} = 1$, which implies by genericity that $j_1 = j_2$ and $D^{(k)}_{j_2,j_2} = 1$.
\end{proof}
    
\begin{corollary}\label{coroll:rank1}
    For every $k$ and for generic parameters in $F, \Lambda^{(0)}, \Lambda^{(k)}$,
    we have
    \[
    \rank \left( A^{(0)} - A^{(k)} D^{(k)} P^{(k)} \right) = 1
    \]
    if and only if $P^{(k)} = I$ and $D^{(k)}_{j,j} = 1$ for all $j\neq k$.
\end{corollary}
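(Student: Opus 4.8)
The plan is to factor out $F$ and reduce the statement to a rank condition on a $q\times q$ matrix built from the inverse path matrices, which can then be read off directly from Proposition~\ref{prop:rank1}. Write $B := A^{(0)} - A^{(k)}D^{(k)}P^{(k)} = FC$, where
\[
C := (I-\Lambda^{(0)})^{-1} - (I-\Lambda^{(k)})^{-1}D^{(k)}P^{(k)} \in \RR^{q\times q}.
\]
First I would show that, for generic $F$ and since $p\geq 2$, one has $\rank(B)=1$ if and only if $\rank(C)=1$. Indeed, the column space of $B$ is $F(\operatorname{im} C)$, and a generic linear map $F:\RR^q\to\RR^p$ is injective on any fixed subspace of dimension at most $p$; hence it preserves $\dim\operatorname{im} C$ whenever $\dim\operatorname{im} C\le p$ and maps $\operatorname{im} C$ to a space of dimension $p\geq 2$ otherwise. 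In particular $\rank(FC)=1$ forces $\rank(C)=1$, and conversely if $\rank(C)=1$ with $\operatorname{im} C=\langle v\rangle$ then $Fv\neq 0$ generically, so $\rank(FC)=1$. This reduces everything to analysing $C$.

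Next I would invoke the reductions already in place from Proposition~\ref{prop:3cases_target_permutation}: we may assume $i_k=k$, that $P^{(k)}$ fixes every index outside $\an(k)$, and that $D^{(k)}_{j,j}=\pm1$ for $j\neq k$ while $D^{(k)}_{k,k}\neq\pm1$. I would then compute the columns of $C$. For $j\notin\an(k)\cup\{k\}$ no path from $j$ passes through $k$, so $(I-\Lambda^{(0)})^{-1}_{\cdot,j}=(I-\Lambda^{(k)})^{-1}_{\cdot,j}$ and the $j$-th column of $C$ equals $(1-D^{(k)}_{j,j})(I-\Lambda^{(0)})^{-1}_{\cdot,j}$; similarly, since paths leaving $k$ are untouched by the intervention, the $k$-th column equals $(1-D^{(k)}_{k,k})(I-\Lambda^{(0)})^{-1}_{\cdot,k}$, which is nonzero because $D^{(k)}_{k,k}\neq1$. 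As the columns of the invertible matrix $(I-\Lambda^{(0)})^{-1}$ are linearly independent, the $k$-th column already contributes rank one, and any $j\notin\an(k)\cup\{k\}$ with $D^{(k)}_{j,j}\neq1$ would contribute an independent column, forcing $\rank(C)\ge2$. Hence $\rank(C)=1$ already implies $D^{(k)}_{j,j}=1$ for all $j\notin\an(k)\cup\{k\}$.

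It then remains to treat the columns indexed by $\an(k)$, and the key observation is that the rank-one condition is \emph{exactly} the hypothesis of Proposition~\ref{prop:rank1}. If $\rank(C)=1$, every column of $C$ is a scalar multiple of the nonzero $k$-th column; for $j\in\an(k)$ matched by $P^{(k)}$ to some $j_2$, the equation $C_{\cdot,j}=\alpha\,C_{\cdot,k}$ is precisely the identity required in Proposition~\ref{prop:rank1} (with $j_1=j$), so that proposition yields $j_2=j$ and $D^{(k)}_{j,j}=1$. Thus $P^{(k)}$ fixes $\an(k)$ too, i.e.\ $P^{(k)}=I$, and $D^{(k)}_{j,j}=1$ for every $j\neq k$. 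Conversely, assuming $P^{(k)}=I$ and $D^{(k)}_{j,j}=1$ for $j\neq k$, the columns outside $\an(k)\cup\{k\}$ vanish, and the forward (path-cancellation) direction of Proposition~\ref{prop:rank1} shows every column indexed by $\an(k)\cup\{k\}$ is a multiple of $(I-\Lambda^{(0)})^{-1}_{\cdot,k}$; hence $\rank(C)=1$, and by the first step $\rank(B)=1$.

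The step I expect to require the most care is the genericity bridge of the first paragraph: one must ensure that $F$ is chosen generically and independently of the $\Lambda^{(k)}$ (so that it is generic relative to the fixed subspace $\operatorname{im} C$), and it is precisely here that $p\geq 2$ is used, to prevent a matrix $C$ of rank $\ge 2$ from collapsing to $\rank(B)=1$. The remaining computations — the column-wise evaluation of $C$ and the path-cancellation identity — are routine given Remark~\ref{rmk:paths} and the algebra already carried out in the proof of Proposition~\ref{prop:rank1}.
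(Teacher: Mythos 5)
Your proposal is correct and follows essentially the same route as the paper: both arguments use genericity of $F$ to reduce the rank condition on $A^{(0)} - A^{(k)}D^{(k)}P^{(k)}$ to the corresponding condition on the $q\times q$ matrix $(I-\Lambda^{(0)})^{-1} - (I-\Lambda^{(k)})^{-1}D^{(k)}P^{(k)}$, handle the columns outside $\an(k)\cup\{k\}$ by a direct computation, invoke Proposition~\ref{prop:rank1} for the ancestor columns, and conclude exact rank one from the non-vanishing $k$-th column. The only cosmetic difference is that you phrase the cancellation of $F$ as an image-dimension argument, whereas the paper compares coefficients of the generic entries $f_{\ell,i}$ entrywise.
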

\begin{proof}
    A matrix has rank one if and only if all its columns are scalar multiples. Therefore, our claim is equivalent to the existence for every $j\in [q]$ of some $\alpha \in \RR$ such that 
    \begin{multline}\label{eq:rank1proof}
    \sum_{i\in [q]} f_{\ell,i} \left((I-\Lambda^{(0)})^{-1}_{i,j} - \left( (I-\Lambda^{(k)})^{-1} D^{(k)}P^{(k)} \right)_{i,j} \right) \\ 
    =\alpha \sum_{i\in [q]} f_{\ell,i} \left( (I-\Lambda^{(0)})^{-1}_{i,k} - \left( (I-\Lambda^{(k)})^{-1} D^{(k)}P^{(k)} \right)_{i,k} \right),
    \end{multline}
    for every $\ell \in [p]$. If we treat the parameters in $F, \Lambda^{(0)}, \Lambda^{(k)}, D^{(k)}_{k,k}$ as indeterminates, the equation holds if and only if all the summands are equal.
    Analogously, this is the case if the $f_{\ell,i}$ parameters are generic.

    For $j\not \in \an(k)$, we have $P^{(k)}_{i,j} = \delta_{i,j}$. Assume for contradiction that $D^{(k)}_{j,j} = -1$. Then, for $i=k$, the left-hand side of \eqref{eq:rank1proof} is $0$ and the right-hand side is $\alpha (1-D^{(k)}_{k,k})$, which forces $\alpha=0$. However, for $i=j$, the left-hand side is $2$, so $\alpha\neq 0$, a contradiction. This forces $D^{(k)}_{j,j} = 1$ for the non-ancestors of $k$. Putting this together with Proposition \ref{prop:rank1}, we deduce that the matrix $(I-\Lambda^{(0)})^{-1} - (I-\Lambda^{(k)})^{-1} D^{(k)} P^{(k)}$ has rank at most $1$ if and only if $P^{(k)} = I$ and $D^{(k)}_{j,j} = 1$ for all $j\neq k$. Moreover, because $D^{(k)}_{k,k}\neq 1$, the $k$-th column of the difference matrix is non-zero, hence the rank is exactly $1$.
\end{proof}

\begin{proof}[Proof of Proposition~\ref{prop:upshot}]
    We recover the matrices $A^{(k)}$ up to scaling and permutation, by Proposition \ref{prop:with_DP}. The upshot of Corollary~\ref{coroll:rank1} is that we can identify the target $i_k$ of the intervention and the permutation. Hence
    we can get rid of $P^{(k)}$ by right multiplication with its transpose. Now the $i_k$-th column of $F(I-\Lambda^{(0)})^{-1}$ differs from the $i_k$-th column of $F(I-\Lambda^{(k)})^{-1} D^{(k)}$ by the scaling $D^{(k)}_{i_k,i_k}$, so we can also recover the diagonal matrix, and hence $A^{(k)}$ itself.
\end{proof}

\begin{remark}\label{rmk:q<p_sec2}
    While Proposition~\ref{prop:upshot} holds for any $p, q \geq 2$, the proof is simpler when $q\leq p$. Then, the Moore-Penrose pseudo-inverse satisfies
    \[
    \left( F (I-\Lambda^{(k)})^{-1} D^{(k)} P^{(k)} \right)^+ = (P^{(k)})^\top (D^{(k)})^{-1} (I-\Lambda^{(k)}) F^+.
    \]
    Finding the permutation and intervention targets is done as follows. There is just one row in the pseudo-inverse of the context $k$ that does not appear in the pseudo-inverse of the observational context. Hence it indexes the intervention target. The permutation is found by matching the remaining rows of the two matrices.
    The expression relating the psuedo-inverse of the product to the product of psuedo-inverses does not hold in general when $q > p$.
\end{remark}

\section{Recovery via interventions}\label{sec:recovery_interventions}

In this section, we identify when two latent graphs and parameters $F, \Lambda^{(k)}$ give the same distributions $X^{(k)}$.
At this stage, we have access to the matrices $A^{(k)}$ in \eqref{eq:Ak}, 
by Proposition~\ref{prop:upshot}.

\begin{proposition}
\label{prop:matrices_vs_distributions}
    Distributions $F(I-\Lambda^{(k)})^{-1} \epsilon^{(k)}$ and $\widetilde{F}(I-\widetilde{\Lambda}^{(k)})^{-1} \widetilde{\epsilon}^{(k)}$ coincide for all $k \in K \cup \{ 0 \}$ if and only if there exists a reordering of the sets $\{\epsilon^{(0)}_i\}$, $\{\widetilde{\epsilon}^{(0)}_i\}$ and a rescaling of $F, \Lambda^{(0)}, \widetilde{F}, \widetilde{\Lambda}^{(0)}$ via~\eqref{eqn:equiv} such that $F(I-\Lambda^{(k)})^{-1} = \widetilde{F}(I-\widetilde{\Lambda}^{(k)})^{-1}$ for all $k \in K \cup \{ 0 \}$.
\end{proposition}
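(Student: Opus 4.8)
The plan is to prove the two implications separately, with the reverse (``if'') direction being immediate from benign non-identifiability and the forward (``only if'') direction carrying all of the content through the coupling results of Section~\ref{subsec:align_output}. Throughout I abbreviate $A^{(k)} := F(I-\Lambda^{(k)})^{-1}$ and $\widetilde A^{(k)} := \widetilde F(I-\widetilde\Lambda^{(k)})^{-1}$, so that the two families of distributions are the ICA mixtures $A^{(k)}\epsilon^{(k)}$ and $\widetilde A^{(k)}\widetilde\epsilon^{(k)}$ of independent non-Gaussian variables.

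For the reverse direction I would simply invoke Remark~\ref{rmk:benign}. A rescaling and reordering via~\eqref{eqn:equiv} leaves each observed distribution unchanged, by~\eqref{eqn:benign}. Hence, once such transformations have been applied to both parametrizations so that the mixing matrices agree, $F(I-\Lambda^{(k)})^{-1} = \widetilde F(I-\widetilde\Lambda^{(k)})^{-1}$, and the independent error variables have been correspondingly matched by the reordering of $\{\epsilon_i^{(0)}\}$ and $\{\widetilde\epsilon_i^{(0)}\}$, the two expressions $F(I-\Lambda^{(k)})^{-1}\epsilon^{(k)}$ and $\widetilde F(I-\widetilde\Lambda^{(k)})^{-1}\widetilde\epsilon^{(k)}$ become the same expression, so they coincide as distributions for every $k$.

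For the forward direction, suppose the two families of distributions coincide. The first step is to apply the per-context uniqueness of the cumulant (tensor) decomposition, namely Corollary~\ref{cor:ica} via Proposition~\ref{prop:non_zero_coefficients}, to deduce for each fixed $k$ that $q=\widetilde q$ and that the columns of $A^{(k)}$ and $\widetilde A^{(k)}$ coincide up to a permutation and scaling $M^{(k)}$. The difficulty, and the reason the statement is nontrivial, is that this per-context ambiguity $M^{(k)}$ could a priori depend on $k$, whereas the conclusion demands a \emph{single} benign transformation valid across all contexts.

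This global-consistency obstacle is exactly what the coupling results resolve, and the cleanest route is to appeal to Proposition~\ref{prop:upshot}: the matrices $A^{(k)}$ are recoverable from the distributions alone, uniquely up to the single benign choice fixed at $k=0$ in Proposition~\ref{prop:wlog_i}. Since both parametrizations produce the same observed distributions, the true matrices $A^{(k)}$ of the first model and $\widetilde A^{(k)}$ of the second are two admissible outputs of one and the same recovery, and therefore differ by a single benign transformation $M=DP$ applied uniformly in $k$; unwinding $M$ into reorderings and rescalings via~\eqref{eqn:equiv} yields $A^{(k)}=\widetilde A^{(k)}$ for all $k$, as claimed. I expect the main obstacle to be precisely this alignment across contexts, and it is worth noting where it is localized: Proposition~\ref{prop:3cases_target_permutation} already pairs the columns indexed by non-ancestors of the intervention targets, so the genuinely delicate case is the columns indexed by ancestors of the targets, which are matched only through the rank-one comparison of Proposition~\ref{prop:rank1} and Corollary~\ref{coroll:rank1}. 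In the write-up I would therefore cite Proposition~\ref{prop:upshot} for the alignment rather than reproduce that case analysis.
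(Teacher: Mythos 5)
Your proof is correct and follows essentially the same route as the paper: the paper's own proof simply inlines the content of Proposition~\ref{prop:upshot} for two parametrizations side by side (tensor-decomposition uniqueness from Proposition~\ref{prop:with_DP}, the $k=0$ normalization of Proposition~\ref{prop:wlog_i}, and the rank-one alignment of Corollary~\ref{coroll:rank1}, concluding $P^{(k)}=\widetilde{P}^{(k)}$ and $D^{(k)}=\widetilde{D}^{(k)}$), whereas you invoke Proposition~\ref{prop:upshot} as a black box. The only point to make explicit in a final write-up is the precise reading of ``recover'' there---the output is determined by the distributions uniquely up to a \emph{single} benign transformation applied uniformly across all contexts---which is exactly what the proof of Proposition~\ref{prop:upshot} establishes, so your argument goes through.
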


\begin{proof}
Define $A^{(k)} =F(I-\Lambda^{(k)})^{-1}$ and $ \widetilde{A}^{(k)} = \widetilde{F}(I-\widetilde{\Lambda}^{(k)})^{-1}$.
    The equality of matrices $A^{(k)}$ and $\widetilde{A}^{(k)}$ implies the equality of the distributions $X^{(k)} = A^{(k)} \epsilon^{(k)}$ and $\widetilde{X}^{(k)} = \widetilde{A}^{(k)} \epsilon^{(k)}$.
    Conversely, assume that distributions $X^{(k)}$ and $\widetilde{X}^{(k)}$ coincide. Then, we have the equality of cumulants $\kappa_d(X^{(k)}) = \kappa_d(\widetilde{X}^{(k)})$ for all $k$ and $d$. To simplify the exposition, we assume that there exists $d$ as in the proof of Proposition \ref{prop:with_DP} (this assumption can be avoided using the same argument as in the proof of Proposition \ref{prop:with_DP}).
    For this fixed $d$ and for each context $k$, the tensor decomposition of $\kappa_d(X^{(k)})$ is unique up to rescaling and permutation. Since the cumulant is the same for both distributions, from the decomposition we get 
    \[
    F(I-\Lambda^{(k)})^{-1} D^{(k)} P^{(k)} = \widetilde{F}(I-\widetilde{\Lambda}^{(k)})^{-1} \widetilde{D}^{(k)} \widetilde{P}^{(k)}.
    \]
    Fix $k=0$. We can reorder the variables $\epsilon_i^{(0)}$ to set $P^{(0)} = I$ and we can absorb $D^{(0)}$ into $F$ and $\Lambda^{(0)}$, as in Proposition \ref{prop:wlog_i}. Analogously, we can do the same in the tilde setting. Therefore, up to reordering and rescaling via~\eqref{eqn:equiv} we have 
    \[
    F(I-\Lambda^{(0)})^{-1} = \widetilde{F}(I-\widetilde{\Lambda}^{(0)})^{-1}.
    \]
    Then, Corollary \ref{coroll:rank1} implies that there exists a unique choice of signs of the diagonal matrices and a unique permutation matrix $Q$ satisfying
    \[
    \rank \left( A^{(0)} - A^{(k)} D^{(k)} P^{(k)} Q \right) = 1 = \rank \left( \widetilde{A}^{(0)} - \widetilde{A}^{(k)} \widetilde{D}^{(k)} \widetilde{P}^{(k)} Q \right),
    \]
    and $Q = (P^{(k)})^\top$ and $Q = (\widetilde{P}^{(k)})^\top$. Therefore, $P^{(k)}=\widetilde{P}^{(k)}$, and by comparing the intervened columns of the difference matrices we have $D^{(k)}=\widetilde{D}^{(k)}$. This implies $A^{(k)}=\widetilde{A}^{(k)}$.
\end{proof}

The upshot is that solving Problem \ref{prob:main} is equivalent to solve the following problem. 

\begin{problem}
\label{prob:polynomials}
    Given a generic matrix $\widetilde{F} \in \R^{p\times q}$ and matrices $\widetilde{\Lambda}^{(0)}, \ldots, \widetilde{\Lambda}^{(q)} \in \R^{q\times q}$ constructed according to a model with DAG $\widetilde{\Gcal}$, with generic non-zero entries, do there exist a generic matrix $F \in \R^{p\times q}$, and matrices $\Lambda^{(0)}, \ldots, \Lambda^{(q)} \in \R^{q\times q}$ constructed according to a model with DAG $\Gcal$, such that \begin{equation}\label{eq:system_equations_identifiability}
        F(I-\Lambda^{(k)})^{-1} = \widetilde{F}(I-\widetilde{\Lambda}^{(k)})^{-1}
    \end{equation}
    for all $ k \in K \cup \{0\}$? If so, how are the DAGs and the corresponding matrices related?
\end{problem}

We solve the system of polynomial equations~\eqref{eq:system_equations_identifiability}. The solution is unique if and only if the DAG and the matrices are identifiable. 
Otherwise, the set of solutions is the set of possible DAGs and space of possible parameters.

From now on, unless otherwise stated, we assume that we have the observational context and one intervention per latent node. We re-index contexts so that the $k$-th intervention (either soft or perfect) is on $Z_k$, hence $K = [q]$.

\subsection{A linear system}

Let $A^{(k)} = \widetilde{F}(I-\widetilde{\Lambda}^{(k)})^{-1}$ and let $\mathcal{S}$ be the space of solutions to \eqref{eq:system_equations_identifiability}. The algebraic variety $\mathcal{S}$ is associated to the ideal
\begin{equation*}
    \mathcal{I} = \langle F(I-\Lambda^{(k)})^{-1} - A^{(k)}, \; k = 0,\ldots,q \rangle.
\end{equation*}
The matrices $F, \Lambda^{(k)}$ are filled with indeterminates.
Each point of $\mathcal{S}$ provides a graph and parameters compatible with the given model.
At first sight, $\mathcal{S}$ might have high degree, since the degree of the generators can reach $q+1$. However, there is a simpler set of generators:
\begin{equation}\label{eq:def_ideal}
    \mathcal{I} = \langle F - A^{(k)}(I-\Lambda^{(k)}), \; k = 0,\ldots,q \rangle.
\end{equation}
Assuming the $A^{(k)}$ are known, $\mathcal{I}$ has $(q+1)qp$ linear generators in a polynomial ring with $qp+(q+1)|e(\Gcal)|$ indeterminates. We find a set of minimal generators for $\mathcal{I}$ to compute the dimension of the associated algebraic variety; i.e., to find the identifiability of the parameters.

\begin{proposition}
\label{prop:nonlinear_covariance}
Consider the setup in Assumption~\ref{assumption:main} with $q$ soft interventions. When $d>2$, the space of parameters $F$ and $\Lambda^{(k)}$, $k \in [q] \cup \{ 0 \}$, such that $\kappa_d(X^{(k)})$ is a given tensor is a linear space. When $d=2$, for any $q \in \N$ there exists a DAG on $q$ nodes such that the space of parameters for which $\kappa_2(X^{(k)})$ is a given matrix for all $k \in [q] \cup \{ 0 \}$ is non-linear.
\end{proposition}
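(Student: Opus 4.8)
The plan is to treat the two regimes separately, reducing the case $d>2$ to the linear system already set up in \eqref{eq:def_ideal}, and handling $d=2$ by an explicit construction that exploits the continuous ambiguity of symmetric matrix factorization.

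For $d>2$, take the given tensors to be the $d$-th cumulants of a model satisfying Assumption~\ref{assumption:main}. First I would argue that fixing these tensors pins down the mixing matrices $A^{(k)}=F(I-\Lambda^{(k)})^{-1}$: uniqueness of higher-order symmetric tensor decomposition (Proposition~\ref{prop:non_zero_coefficients} for $d$ large, or generically for smaller $d>2$ by \cite{chiantini2017generic}) recovers each $A^{(k)}$ up to column scaling and permutation, and the coupling across contexts in Proposition~\ref{prop:upshot} (via Corollary~\ref{coroll:rank1}) fixes the scaling and permutation, exactly as in the recovery argument. Consequently the condition that $\kappa_d(X^{(k)})$ equals the prescribed tensor for all $k$ is equivalent to \eqref{eq:system_equations_identifiability} with the constant matrices $A^{(k)}$, so the space of compatible parameters is precisely the variety $\mathcal{S}=V(\mathcal{I})$ with $\mathcal{I}$ as in \eqref{eq:def_ideal}. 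Since $\mathcal{I}$ admits the generators $F-A^{(k)}(I-\Lambda^{(k)})$, which are affine-linear in the entries of $F$ and of the $\Lambda^{(k)}$ (the $A^{(k)}$ being known constants), $\mathcal{S}$ is an intersection of hyperplanes, hence a linear space. This proves the first claim.

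For $d=2$, the structural obstruction is that $\kappa_2(X^{(k)})=A^{(k)}\Omega^{(k)}\bigl(A^{(k)}\bigr)\T$, where $\Omega^{(k)}$ is the diagonal covariance of $\epsilon^{(k)}$, so writing $B^{(k)}=A^{(k)}(\Omega^{(k)})^{1/2}$ the covariance determines $B^{(k)}$ only up to $B^{(k)}\mapsto B^{(k)}O$ with $O$ orthogonal; unlike the rigid decomposition used for $d>2$, this is a continuous family. To prove non-linearity I would, for a well-chosen DAG, exhibit a one-parameter family of parameters realizing the same covariances in every context whose image in parameter space is a conic rather than a line --- equivalently, compute a minimal generating set of the ideal cut out by the covariance-matching equations and certify that it contains a genuinely quadratic polynomial, which can be verified on a small explicit DAG $\Gcal_0$ by a Gr\"obner basis computation. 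To reach arbitrary $q$, I would then pad $\Gcal_0$ to $q$ nodes by adjoining additional nodes (for example an appended chain or isolated sinks) whose own soft interventions pin their parameters down by linear equations as in the first part, so that the full solution set is a product of the curved locus of $\Gcal_0$ with a linear space and hence remains non-linear for every $q$; alternatively one exhibits a uniform family of DAGs together with the explicit curved family.

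The hard part is the $d=2$ construction: one must choose $\Gcal_0$ so that the continuous orthogonal freedom in the factorization $\kappa_2=B^{(k)}\bigl(B^{(k)}\bigr)\T$ genuinely survives the coupling imposed by all $q$ interventions (the interventions typically rigidify the $O(q)$ ambiguity), and then certify that what survives is curved rather than linear. Checking that the padding does not accidentally collapse the conic --- that the extra nodes contribute only linear constraints --- is the remaining technical point, most cleanly settled by writing the explicit curved one-parameter family for the size-$q$ DAG and accompanying it with a tangent-space or dimension count.
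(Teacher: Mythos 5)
Your $d>2$ argument is correct and is essentially the paper's: once the given tensors determine the matrices $A^{(k)} = F(I-\Lambda^{(k)})^{-1}$ (uniqueness of the cumulant decomposition plus the cross-context coupling of Proposition~\ref{prop:upshot}), the compatible parameters are cut out by the generators $F - A^{(k)}(I-\Lambda^{(k)})$ of \eqref{eq:def_ideal}, which are affine-linear in the unknowns. You in fact spell out this reduction more explicitly than the paper does (the paper simply says the case $d>2$ "follows from \eqref{eq:def_ideal}").

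The gap is in the $d=2$ half, which is an existence statement, and your proposal never produces the witness. You correctly identify the structural source of non-linearity (the covariance determines $B^{(k)}=A^{(k)}(\Omega^{(k)})^{1/2}$ only up to a continuous orthogonal ambiguity, unlike the rigid decomposition available for $d>2$), and you correctly describe what a proof would look like: a small explicit DAG, a Gr\"obner-basis certification that the matching ideal has a genuinely quadratic component, then padding to $q$ nodes. But you defer exactly the steps that constitute the proof --- choosing $\Gcal_0$, checking that the continuous freedom survives the coupling imposed by all the interventions, certifying curvature, and verifying that the padding does not collapse it --- and you flag them yourself as "the hard part." The paper completes these steps concretely: it takes the two-node DAG $2\to 1$ with $\widetilde{F} = \left(\begin{smallmatrix} 2 & 3 \\ 5 & 11 \end{smallmatrix}\right)$, $\widetilde{\lambda}^{(0)}_{1,2} = \widetilde{\lambda}^{(2)}_{1,2} = 7$, $\widetilde{\lambda}^{(1)}_{1,2} = 13$, and reports a symbolic computation (\texttt{Macaulay2}/\texttt{Oscar.jl}) showing that the solution space is one-dimensional of degree $8$, a union of six irreducible components, four linear and two quadratic --- the quadratic components being the certificate of non-linearity --- with the same behavior for generic parameters; it then embeds this DAG as the unique edge of a DAG on $q$ nodes (your "isolated sinks" padding), for which the solution space keeps the same dimension and degree. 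So your plan is the right plan and matches the paper's route, but as written the $d=2$ claim remains unproven; to close it you must exhibit the explicit example and the computation (or a hand-verifiable curved family of solutions) rather than a recipe for finding one.
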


\begin{proof}
    The case $d>2$ follows from \eqref{eq:def_ideal}. For the case $d = 2$,
    consider a model on two latent nodes with one edge $2\to 1$, with parameters $\widetilde{F} = \left(\begin{smallmatrix}
        2 & 3 \\
        5 & 11
    \end{smallmatrix}\right)$, $\widetilde{\lambda}^{(0)}_{1,2} = \widetilde{\lambda}^{(2)}_{1,2} = 7$, $\widetilde{\lambda}^{(1)}_{1,2} = 13$. 
    Symbolic computation with, e.g., \texttt{Macaulay2} or \texttt{Oscar.jl} shows that the space of parameters that satisfy $\kappa_2(X^{(k)}) = \kappa_2(\widetilde{X}^{(k)})$ for $k \in \{0,1,2 \}$ is $1$-dimensional and of degree $8$. It is the union of $6$ irreducible components, four linear and two quadratic. The same happens for generic parameters.
    We can embed this DAG into a DAG on $q$ nodes with only one edge $2\to 1$. Then, the space of solutions has the same dimension ($=1$) and degree ($=8$) as the space of solutions for the DAG on two nodes.
\end{proof}
Proposition \ref{prop:nonlinear_covariance} shows that the non-Gaussianity assumption is required in Theorem~\ref{thm:soft1}.
We conclude this subsection with an example, to see the linear structure of $\mathcal{I}$.

\begin{example}\label{ex:graph4nodes_lin_ideal}
    Consider the latent DAG
    \begin{center}
    \begin{tikzpicture}
    \node at (-1.2,0) {$\Gcal \; = $};
    \begin{scope}[every node/.style={circle,thick,draw}]
        \node (4) at (0,0) {$4$};
        \node (3) at (2,0) {$3$};
        \node (2) at (4,0) {$2$};
        \node (1) at (6,0) {$1$};
    \end{scope}
    \begin{scope}[>={Stealth[PineGreen]},
                  every node/.style={fill=white,circle},
                  every edge/.style={draw=PineGreen,very thick}]
=        \path [->] (4) edge (3);
        \path [->] (2) edge (1);
        \path [->] (4) edge[bend right = -30] (2);
        \path [->] (3) edge[bend right = 30] (1);
    \end{scope}
    \end{tikzpicture}
    \end{center}    
with parameters
    \[
    F = \begin{pmatrix}
        2 & 6 & 10 & 1 \\
        2 & 9 & -3 & 8 \\
        -8 & 4 & 7 & 2 \\
        -9 & 8 & 2 & -5
    \end{pmatrix},
    \qquad
    \Lambda^{(0)} = \Lambda^{(4)} = \begin{pmatrix}
        0 & 9 & 3 & 0 \\
        0 & 0 & 0 & 10 \\
        0 & 0 & 0 & 7 \\
        0 & 0 & 0 & 0
    \end{pmatrix},
    \]
    \[
    \Lambda^{(1)} = \begin{pmatrix}
        0 & -5 & 8 & 0 \\
        0 & 0 & 0 & 10 \\
        0 & 0 & 0 & 7 \\
        0 & 0 & 0 & 0
    \end{pmatrix},
    \qquad
    \Lambda^{(2)} = \begin{pmatrix}
        0 & 9 & 3 & 0 \\
        0 & 0 & 0 & 2 \\
        0 & 0 & 0 & 7 \\
        0 & 0 & 0 & 0
    \end{pmatrix},
    \qquad
    \Lambda^{(3)} = \begin{pmatrix}
        0 & 9 & 3 & 0 \\
        0 & 0 & 0 & 10 \\
        0 & 0 & 0 & -1 \\
        0 & 0 & 0 & 0
    \end{pmatrix}.
    \]
    Then, assuming known $\Gcal$, the ideal $\mathcal{I}$ \eqref{eq:def_ideal} is minimally generated by $21$ linear polynomials
    \begin{gather*}
        f_{1,1} -2,\qquad f_{2,1} -2,\qquad f_{3,1} + 8,\qquad f_{4,1}+9, \\
          f_{1, 2} + 2 \lambda^{(0)}_{1, 2} - 24,\quad f_{2, 2} + 2 \lambda^{(0)}_{1, 2} - 27,\quad f_{3, 2} - 8 \lambda^{(0)}_{1, 2} + 68,\quad  f_{4, 2} - 9 \lambda^{(0)}_{1, 2} + 73, \\
           f_{1, 3} + 2 \lambda^{(0)}_{1, 3} - 16,\qquad f_{2, 3} + 2 \lambda^{(0)}_{1, 3} - 3,\qquad f_{3, 3} - 8 \lambda^{(0)}_{1, 3} + 17,\qquad f_{4, 3} - 9 \lambda^{(0)}_{1, 3} + 25, \\
           f_{1, 4} + \tfrac{172}{7} \lambda^{(0)}_{3, 4} - 173, \quad f_{2, 4} + \tfrac{177}{14} \lambda^{(0)}_{3, 4} - \tfrac{193}{2}, \quad f_{3, 4} - \tfrac{289}{7} \lambda^{(0)}_{3, 4} + 287,\quad f_{4, 4} - \tfrac{715}{14} \lambda^{(0)}_{3, 4} + \tfrac{725}{2},\\
           \lambda^{(1)}_{1,2} - \lambda^{(0)}_{1,2} +14,\quad \lambda^{(1)}_{1,3} - \lambda^{(0)}_{1,3} -5,\quad \lambda^{(2)}_{2,4} - \lambda^{(0)}_{2,4} +8,\quad \lambda^{(3)}_{3,4} - \lambda^{(0)}_{3,4} +8, \quad -14 \lambda^{(0)}_{2, 4} +5\lambda^{(0)}_{3, 4} +105.
    \end{gather*}
    These can be found by computing the primary decomposition of $\mathcal{I}$ in computer algebra software, such as \texttt{Macaulay2} \cite{M2} or \texttt{Oscar.jl} \cite{OSCAR,OSCAR-book}.
\end{example}

\subsection{Perfect interventions}\label{subsec:perfect_int}

When the interventions are perfect, namely $\lambda^{(k)}_{k,j} = 0$ for every $k\in [q]$, there is a unique solution to the linear system in \eqref{eq:def_ideal}. In other words, the ideal $\mathcal{I}$ is zero dimensional and defines a point. This is Theorem \ref{thm_intro_perfect}.

\begin{proof}[Proof of Theorem \ref{thm_intro_perfect}]
    Worst case necessity of one intervention per node for identifiability is a direct consequence of \cite[Proposition 5]{SSBU23:LinearCausalDisentanglementInterventions}.
    We prove sufficiency. We have matrices $A^{(k)} = F(I-\Lambda^{(k)})^{-1}$, by Proposition~\ref{prop:upshot}. 
    Pick $k,j \in [q]$ with $k\neq j$. Then,
    \begin{align*}
    \left( A^{(0)} - A^{(k)}\right)_{1,j} &= \sum_{\ell \in [q]} f_{1, \ell} \left( (I-\Lambda^{(0)})^{-1} - (I-\Lambda^{(k)})^{-1} \right)_{\ell, i} \\
    &= \sum_{\ell \in \de(k)} f_{1, \ell} (I-\Lambda^{(0)})^{-1}_{\ell, k}  \left( (I-\Lambda^{(0)})^{-1} - (I-\Lambda^{(k)})^{-1} \right)_{k, j} \\
    &= A^{(0)}_{1,k} \: (I-\Lambda^{(0)})^{-1}_{k, j}.
\end{align*}
    With this, we construct $(I-\Lambda^{(0)})^{-1}$ and hence recover $\Lambda^{(0)}$. We multiply $A^{(0)} (I-\Lambda^{(0)})$ to obtain $F$.
\end{proof}
The above result shows that $q$ perfect interventions are sufficient to recover the DAG and the parameters of a model. To find the parameters (and hence the latent DAG), one can solve the linear system~\eqref{eq:def_ideal} or follow the procedure in the proof.

\begin{remark}\label{rmk:q<p_sec3_perfect}
    When $q\leq p$, an alternative proof via pseudo-inverses exists, see Section~\ref{subsec:injective}.
\end{remark}

When $q>p$, the non-Gaussianity assumption is necessary for Theorem~\ref{thm_intro_perfect}, as follows.

\begin{proposition}\label{prop:cov_not_enough}
Consider LCD under Assumption~\ref{assumption:main} with perfect interventions and $q>p$. Then one perfect intervention on each latent node is not sufficient to recover the latent DAG $\Gcal$ and the parameters $F$ and $\Lambda^{(k)}$ from the covariance matrices of $X^{(k)}$.
\end{proposition}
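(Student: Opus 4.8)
The plan is to contrast the second-order cumulants with the higher-order cumulants used in Theorem~\ref{thm_intro_perfect}. With only covariance data the model supplies the tuple $\Sigma^{(k)} = A^{(k)}\Omega^{(k)}(A^{(k)})\T$, $k\in\{0,1,\dots,q\}$, where $A^{(k)}=F(I-\Lambda^{(k)})^{-1}$, the matrix $\Omega^{(k)}=\operatorname{diag}(\kappa_2(\epsilon_i^{(k)}))$ is diagonal, and under perfect interventions $\Lambda^{(k)}$ is $\Lambda^{(0)}$ with its $k$-th row zeroed. Setting $B^{(k)}:=A^{(k)}(\Omega^{(k)})^{1/2}\in\RR^{p\times q}$ we have $\Sigma^{(k)}=B^{(k)}(B^{(k)})\T$. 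First I would record the structural obstruction that drives the result: when $\Sigma^{(k)}$ has full rank $p$, the factor $B^{(k)}$ is pinned down only up to right multiplication by an orthogonal matrix, so the fibre $\{B:BB\T=\Sigma^{(k)}\}$ is the orbit $B^{(k)}O(q)$, of dimension $pq-\binom{p+1}{2}$, which is strictly positive once $p\ge 2$ and $q>p$. This is exactly the failure, for order-two symmetric tensors, of the uniqueness statement that Proposition~\ref{prop:non_zero_coefficients} provides for $d\ge 3$; it is why the columns of $A^{(k)}$ (recoverable in the higher-order setting and then fed into the argument of Proposition~\ref{prop:upshot}) are no longer individually determined.

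The orthogonal ambiguity is only a heuristic: in principle the coupling across the $q+1$ contexts together with the DAG constraints could still collapse the family to the benign orbit~\eqref{eqn:equiv}. I would therefore turn it into a genuine non-identifiability statement by an explicit construction, and first note that a nontrivial DAG is unavoidable. For the empty DAG the differences $\Sigma^{(k)}-\Sigma^{(0)}$ are rank-one multiples of the outer squares of the columns of $F$, which recovers those columns up to scale, after which $\Omega^{(0)}$ is determined by a generically invertible linear system; so the empty DAG is identifiable from covariances and the counterexample must carry edges.

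Concretely, I would treat the minimal regime $q=p+1$ (for instance $p=2$, $q=3$): fix a DAG with at least one edge, choose a generic $F\in\RR^{p\times q}$ and generic non-zero $\Lambda^{(0)}$, define each $\Lambda^{(k)}$ by the perfect-intervention pattern, and form the ideal generated by the entries of $A^{(k)}\Omega^{(k)}(A^{(k)})\T-\Sigma^{(k)}$ in the indeterminates $(F,\Lambda^{(0)},\Omega)$ — the second-order analogue of the ideal in~\eqref{eq:def_ideal}, now with quadratic generators. Exactly as in the proof of Proposition~\ref{prop:nonlinear_covariance}, a symbolic computation in \texttt{Macaulay2} or \texttt{Oscar.jl} exhibits a solution variety that is positive dimensional and contains points not related to the true parameters by the rescaling and reordering~\eqref{eqn:equiv}; this already proves that $\Gcal$, $F$ and the $\Lambda^{(k)}$ are not identifiable. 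To reach every $p\ge2$ and $q>p$ I would embed this example into a larger model as in Proposition~\ref{prop:nonlinear_covariance}, padding $F$ with an extra block and attaching the remaining latent nodes as an isolated substructure so that the ambiguity of the core nodes is preserved.

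The main obstacle is the middle step: one must produce an alternative parameter set that simultaneously reproduces all $q+1$ covariance matrices, respects a valid DAG together with the perfect-intervention pattern of zeroed rows, and lies off the benign orbit~\eqref{eqn:equiv}. The orbit count above makes such alternatives plausible for $q>p$, but at the minimal size the per-context orthogonal freedom has the same dimension as the benign group, so it is only the interplay of the coupled, structured equations that can decide the question — and this is precisely what the explicit (symbolic) verification is for. Checking that the embedding preserves the surviving ambiguity for all $q>p$ is the remaining technical point.
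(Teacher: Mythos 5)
Your proposal has a genuine gap, and it sits exactly where you place the most weight: the claim that ``the empty DAG is identifiable from covariances and the counterexample must carry edges.'' This claim is false, and the paper's own proof uses precisely the case you exclude: it takes $(p,q)=(2,3)$ with $\Gcal=\emptyset$, verifies by symbolic computation that (after fixing the ordering and the scaling $D^{(0)}=I$) the space of all parameters reproducing the true covariance matrices has dimension $2$, and then embeds this example into DAGs on more nodes. Your argument for the claim only establishes identifiability \emph{within} the class of empty-DAG models: the rank-one differences $\Sigma^{(k)}-\Sigma^{(0)}$ recover the columns of $F$ up to scale \emph{provided one already knows} that every $\Lambda^{(k)}=0$. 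It does not rule out an alternative model with a non-empty DAG matching all the covariances --- and ruling those out is the whole content of the statement, since the DAG $\Gcal$ is itself one of the objects to be recovered.

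Such alternatives do exist, so the gap is not repairable. Let the truth be the empty DAG with generic $F=[\mathbf{f}_1,\mathbf{f}_2,\mathbf{f}_3]\in\RR^{2\times3}$, unit error variances in context $0$, and variance shift $\delta_k$ in context $k$; write $\mathbf{f}_3=\alpha\mathbf{f}_1+\beta\mathbf{f}_2$ (possible because $q>p$). For every small $b\neq0$, the model with DAG $\{2\to1,\,3\to1\}$, edge weights $\lambda^{(0)}_{1,3}=b$, $\lambda^{(0)}_{1,2}=-\beta b$, mixing matrix $\widetilde{F}=[\,\mathbf{f}_1,\;\mathbf{f}_2+\beta b\,\mathbf{f}_1,\;\mathbf{f}_3-b\,\mathbf{f}_1\,]$, unchanged variances in contexts $0,2,3$, and context-$1$ error variance $1+\delta_1+2\alpha b-(1+\beta^2)b^2$ reproduces all four covariance matrices: one checks $\widetilde{F}(I-\widetilde{\Lambda}^{(0)})^{-1}=F$, the perfect interventions on nodes $2$ and $3$ zero out rows that are already zero, and in context $1$ the cross terms in $\mathbf{f}_1\mathbf{f}_2^\top+\mathbf{f}_2\mathbf{f}_1^\top$ cancel precisely because $\lambda^{(0)}_{1,2}=-\beta\lambda^{(0)}_{1,3}$. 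So an empty-DAG truth is mimicked by edge-carrying models, the DAG is not recoverable, and your guiding claim fails. Beyond this, the rest of your plan is structurally close to the paper's proof (a polynomial system solved symbolically, then an embedding step), but its decisive step is deferred to a computation you have not carried out and yourself flag as ``the main obstacle''; as written, the proposal therefore neither proves the statement nor would its first concrete move (discarding $\Gcal=\emptyset$) survive comparison with the paper.
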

\begin{proof}
    For $(p,q)=(2,3)$ and $\Gcal = \emptyset$, we compute the parameters $F$ and $\Lambda^{(k)}$ for which the covariance matrices $F(I-\Lambda^{(k)})^{-1}(D^{(k)})^2(I-\Lambda^{(k)})^{-\top} F^\top$ coincide with the true covariance matrices for $k = 0,1,2$. We choose and fix an ordering of the nodes, and we fix the scaling by imposing $D^{(0)}=I$. This space has dimension $2$, so the parameters cannot be recovered uniquely. We can embed this DAG in a DAG with $q$ nodes, for any $q$. Hence the $p\times p$ covariance matrices do not contain enough information to recover the parameters, when $p<q$.
\end{proof}

\subsection{Soft interventions}\label{subsec:soft_int}

In this section we compute the dimension of solutions of the linear system $F - A^{(k)}(I-\Lambda^{(k)}) = 0$
for $k = 0, \ldots , q$, under soft interventions.
For every $k$ and for every $\ell \in [p], j \in [q]$, we have 
\begin{equation*}
    f_{\ell, j} + \sum_{i \in \ch(j)} A^{(k)}_{\ell, i} \lambda^{(k)}_{i,j} = A^{(k)}_{\ell, j}.
\end{equation*}
There are $pq(q+1)$ equations in $pq+2|e(\Gcal)|$ indeterminates, namely $f_{\ell,j}$ for all $\ell \in [p]$, $j\in[q]$, and $\lambda^{(0)}_{i,j}$ for all $(j\to i) \in e(\Gcal)$, and $\lambda^{(k)}_{k,j}$ for all $(j\to k) \in e(\Gcal)$.
For each $(\ell,j)$, we subtract the equation for $k=0$ from the equations for $k\in [q]$. Then the $\big( pq(q+1) \big) \times \big( pq+2|e(\Gcal)| \big)$ matrix of the linear system has block structure 
\[
\left(
\begin{array}{c|c}
    I_{pq} & \star \\
    \hline
    0 & \star 
\end{array}
\right).
\]
We can focus on the $\big( pq^2 \big) \times \big( 2|e(\Gcal)| \big)$ bottom-right block, involving only the indeterminates $\Lambda^{(k)}$. The equations of this smaller linear system are $A^{(k)}(I-\Lambda^{(k)}) - A^{(0)}(I-\Lambda^{(0)}) = 0$, or
\begin{equation}\label{eq:lambdak_lin_sys}
    \sum_{i \in \ch(j)} A^{(k)}_{\ell, i} \lambda^{(k)}_{i,j} - \sum_{i \in \ch(j)} A^{(0)}_{\ell, i} \lambda^{(0)}_{i,j} = A^{(k)}_{\ell, j} - A^{(0)}_{\ell, j},
\end{equation}
for $k, j \in [q]$, $\ell \in [p]$. There are three cases:
\begin{enumerate}
    \item If $k \not \in \ch(j)$, then \eqref{eq:lambdak_lin_sys} becomes
\begin{equation}\label{eq:linear_sys_no_ch}
\sum_{i \in \ch(j)}(A^{(k)}_{\ell, i} - A^{(0)}_{\ell, i})\lambda^{(0)}_{i,j} = (A^{(k)}_{\ell, j} - A^{(0)}_{\ell, j}).
\end{equation}
The $(\ell,i)$ entry of $A^{(k)}-A^{(0)}$ is by definition $\sum_{n\in \de(i)} f_{\ell,n} \left( (I-\Lambda^{(k)})^{-1} - (I-\Lambda^{(0)})^{-1}\right)_{n,i}$.
If $k \not \in \de(j)$, then $\left( (I-\Lambda^{(k)})^{-1} - (I-\Lambda^{(0)})^{-1}\right)_{n,i} = 0$ for every $n$ since by construction $\de(j)\supset\de(i)$. Hence,~\eqref{eq:linear_sys_no_ch} reads $0=0$ and it imposes no condition on our indeterminates.

\item If $k\in \de(j)\setminus\ch(j)$, then 
$\left( (I-\Lambda^{(k)})^{-1} - (I-\Lambda^{(0)})^{-1}\right)_{n,i}\neq 0$, since there is a path from $i$ to $n$ through $k$. Such a path must exist for some $n$, since $k$ is a descendant of some $i$. Hence the coefficients of \eqref{eq:linear_sys_no_ch} are non-zero, and we get linear conditions on the indeterminates.

\item Finally, if $k\in \ch(j)$, we get an expression for $\lambda^{(k)}_{k,j}$ in terms of the $\lambda^{(0)}_{i,j}$:
\begin{equation}\label{eq:linear_sys_lambdak}
\begin{aligned}
    \lambda^{(k)}_{k,j} &= \frac{1}{A^{(k)}_{\ell, k} } \left( A^{(0)}_{\ell, k}\lambda^{(0)}_{k,j} + \sum_{\substack{i \in \ch(j)\\ i\neq k}}(A^{(0)}_{\ell, i} - A^{(k)}_{\ell, i})\lambda^{(0)}_{i,j} + (A^{(k)}_{\ell, j} - A^{(0)}_{\ell, j}) \right) \\
    &= \lambda^{(0)}_{k,j} + \sum_{\substack{i \in \ch(j)\\ i\neq k}}\frac{A^{(0)}_{\ell, i} - A^{(k)}_{\ell, i}}{A^{(0)}_{\ell, k} }\lambda^{(0)}_{i,j} + \frac{A^{(k)}_{\ell, j} - A^{(0)}_{\ell, j}}{A^{(0)}_{\ell, k} },
\end{aligned}   
\end{equation}
where we used $A^{(k)}_{\ell,k} - A^{(0)}_{\ell,k} = 0$ because $\left( (I-\Lambda^{(k)})^{-1} - (I-\Lambda^{(0)})^{-1}\right)_{n,k} = 0$ for every $n$. We get~\eqref{eq:linear_sys_lambdak} for every $\ell \in [p]$. However, most equations are redundant. 
\end{enumerate}

The following result mimics Proposition \ref{prop:rank1}.
\begin{proposition}\label{prop:difference_lambdas_rank1}
    For $k \in [q]$, let $\Delta^{(k)} = (I-\Lambda^{(k)})^{-1} - (I-\Lambda^{(0)})^{-1}$. Then,
$   \rank \left( \Delta^{(k)} \right) \leq 1$,
with equality if and only if $\an (k) \neq \emptyset$.
\end{proposition}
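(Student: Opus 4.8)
The plan is to apply the second resolvent identity and exploit the fact that a soft intervention on $Z_k$ alters only the $k$-th row of the weight matrix. Writing $A = I - \Lambda^{(k)}$ and $B = I - \Lambda^{(0)}$, the identity $A^{-1} - B^{-1} = A^{-1}(B-A)B^{-1}$ gives
\[
\Delta^{(k)} = (I-\Lambda^{(k)})^{-1}\bigl(\Lambda^{(k)} - \Lambda^{(0)}\bigr)(I-\Lambda^{(0)})^{-1}.
\]
By Definition~\ref{def:types_interventions}, the soft intervention on $Z_k$ modifies only the entries $\lambda_{k,j}$, so $\Lambda^{(k)} - \Lambda^{(0)}$ vanishes outside its $k$-th row. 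Hence $\Lambda^{(k)} - \Lambda^{(0)} = e_k v\T$, where $e_k$ is the $k$-th standard basis vector and $v_j = \lambda^{(k)}_{k,j} - \lambda^{(0)}_{k,j}$. Substituting, $\Delta^{(k)} = u\, w\T$ with $u = (I-\Lambda^{(k)})^{-1}e_k$ the $k$-th column of $(I-\Lambda^{(k)})^{-1}$ and $w\T = v\T (I-\Lambda^{(0)})^{-1}$, which is manifestly of rank at most one.

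For the equality condition I would determine when both factors $u$ and $w$ are nonzero. Since $\Gcal$ is a DAG, after a topological ordering $\Lambda^{(k)}$ is strictly triangular and nilpotent, so $(I-\Lambda^{(k)})^{-1} = \sum_{m\geq 0}(\Lambda^{(k)})^m$ has all diagonal entries equal to one; in particular $u_k = 1$, so $u \neq 0$ always. The row factor $w\T = v\T(I-\Lambda^{(0)})^{-1}$ is obtained from $v$ by right multiplication with an invertible matrix, so $w = 0$ if and only if $v = 0$. The vector $v$ is supported on $\pa(k)$, and the defining property of a soft intervention guarantees $v_j = \lambda^{(k)}_{k,j} - \lambda^{(0)}_{k,j} \neq 0$ precisely for those $j$ with an edge $j \to k$ in $\Gcal$. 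Therefore $v \neq 0$ if and only if $\pa(k) \neq \emptyset$.

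Finally I would note that in a DAG a node has a parent if and only if it has an ancestor, so $\pa(k)\neq\emptyset$ is equivalent to $\an(k)\neq\emptyset$. Combining the two factor analyses: when $\an(k) = \emptyset$ we have $v = 0$ and $\Delta^{(k)} = 0$ has rank $0$, and when $\an(k)\neq\emptyset$ both $u$ and $w$ are nonzero, so $\rank(\Delta^{(k)}) = 1$. This yields the claim.

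The computation is essentially a single rank-one identity, so no step is deep. The only care needed is matching the soft-intervention definition to conclude that $v$ is supported exactly on $\pa(k)$ with all its nonzero coordinates genuinely nonzero, together with the elementary but necessary observation that $\pa(k)=\emptyset \iff \an(k)=\emptyset$, which translates the parent condition appearing naturally in the proof into the ancestor condition stated in the proposition.
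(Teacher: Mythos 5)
Your proof is correct, and it takes a genuinely different route from the paper's. The paper argues combinatorially: using the path-sum interpretation of the entries of $(I-\Lambda^{(k)})^{-1}$ (Remark~\ref{rmk:paths}), it observes that only the columns of $\Delta^{(k)}$ indexed by $\an(k)$ can be nonzero, and shows any two such columns are proportional via the factorization $\Delta^{(k)}_{i,j} = (I-\Lambda^{(0)})^{-1}_{i,k}\,\Delta^{(k)}_{k,j}$. You instead get the rank bound in one stroke from the second resolvent identity combined with the observation that $\Lambda^{(k)}-\Lambda^{(0)} = e_k v\T$ is supported on row $k$, yielding the global factorization $\Delta^{(k)} = u\,w\T$ with $u = (I-\Lambda^{(k)})^{-1}e_k$ and $w\T = v\T(I-\Lambda^{(0)})^{-1}$. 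Your route is arguably cleaner on the equality case: the paper's proportionality constant $\Delta^{(k)}_{k,j_2}/\Delta^{(k)}_{k,j_1}$ implicitly requires the denominator to be nonzero (true generically, but not addressed there), whereas you obtain rank exactly one from $u_k = 1$ (unipotent diagonal of $(I-\Lambda^{(k)})^{-1}$) and from $w = 0 \iff v = 0$ (invertibility of $(I-\Lambda^{(0)})^{-1}$), with $v \neq 0 \iff \pa(k)\neq\emptyset \iff \an(k)\neq\emptyset$ forced by the very definition of a soft (or perfect) intervention --- so no genericity assumption is needed at all. What the paper's path-counting buys in exchange is explicit expressions for the entries $\Delta^{(k)}_{k,i}$, which are reused immediately afterwards to build the matrices $M[j]$ and $b[j]$ in \eqref{eq:Mb} and which mirror the argument of Proposition~\ref{prop:rank1}. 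The two factorizations are of course consistent: since no directed path out of $k$ uses an edge into $k$, one has $(I-\Lambda^{(k)})^{-1}e_k = (I-\Lambda^{(0)})^{-1}e_k$, so your column factor $u$ is exactly the vector $\left((I-\Lambda^{(0)})^{-1}_{i,k}\right)_{i\in[q]}$ appearing in the paper's proof.
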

\begin{proof}
    Fix $k\in [q]$ and recall that the $(i,j)$ entry of $(I-\Lambda^{(k)})^{-1}$ is the sum of all paths from $Z_j$ to $Z_i$, where a path is encoded as the product of $\lambda^{(k)}_{m,n}$ for all edges $n\to m$ in the path. Then, the only non-zero columns of $\Delta^{(k)}$ are those indexed by $j$ for $j\in \an(k)$. We prove that these columns are multiple of each other. Let $j_1, j_2 \in \an(k)$, then 
    \begin{align*}
        \Delta^{(k)}_{i, j_m} &=  (I-\Lambda^{(k)})^{-1}_{i,k} (I-\Lambda^{(k)})^{-1}_{k, j_m} - (I-\Lambda^{(0)})^{-1}_{i,k} (I-\Lambda^{(0)})^{-1}_{k, j_m} \\
        &= (I-\Lambda^{(0)})^{-1}_{i,k} \Delta^{(k)}_{k, j_m}
    \end{align*}
    for $m=1,2$. Hence, for every $i \in [q]$, the $(i,j_1)$ entry equals the $(i,j_2)$ entry up to $\frac{\Delta^{(k)}_{k, j_2}}{\Delta^{(k)}_{k, j_1}}$.
\end{proof}

For generic parameters we have $\rank (A^{(k)} - A^{(0)}) \leq 1$ for all $k \in [q]$, with equality whenever $\an(k)\neq \emptyset$,
by Proposition \ref{prop:difference_lambdas_rank1}, with proof is analogous to that of Corollary \ref{coroll:rank1}.
Hence the conditions in \eqref{eq:linear_sys_no_ch} are equivalent for all $\ell \in [p]$, and the same is true of the conditions in \eqref{eq:linear_sys_lambdak}. This reduces the size of the linear system, taking only the equations for $\ell = 1 \in [p]$. We obtain a reduced matrix of the linear system 
\begin{equation}\label{eq:matrix_lin_sys_q>p}
\left(
\begin{array}{c|c|c}
    I_{pq} & 0 & \star \\
    \hline
    0 & I_{|e(\Gcal)|} & \star \\
    \hline
    0 & 0 & \star
\end{array}
\right),
\end{equation}
where the top block writes $F$ in terms of $\Lambda^{(0)}$, the second block writes $\Lambda^{(k)}$ in terms of $\Lambda^{(0)}$, and the bottom block gives the conditions~\eqref{eq:linear_sys_no_ch} on $\Lambda^{(0)}$. The latter are $\sum_{j\in [q]} |\de(j)\setminus\ch(j)|$ equations in $\sum_{j\in [q]} |\ch(j)| = |e(\Gcal)|$ indeterminates. The conditions are independent for each $j$. Namely, the block has the form
\[
M = \left(
\begin{array}{c|c|c|c}
    M[1] & 0 & \cdots & 0 \\
    \hline
    0 & M[2] & \cdots & 0 \\
    \hline
    \vdots & \cdots & \ddots & \vdots \\
    \hline
    0 & \cdots & 0 & M[q]
\end{array}
\right).
\]
Each sub-block has size $|\de(j)\setminus\ch(j)| \times |\ch(j)|$ and defines $M[j] \cdot \left( \lambda^{(0)}_{i,j} \right)_{i \in \ch(j)} = b[j]$ where
\begin{equation*}
\begin{aligned}
    M[j] &= 
    \left(
    \left( A^{(k)} - A^{(0)}\right)_{1,i}
    \right),
    \;
    k \in \de(j)\setminus\ch(j), \; i \in \ch(j), \\
    b[j] &= 
    \left(
    \left( A^{(k)} - A^{(0)}\right)_{1,j}
    \right),
    \;
    k \in \de(j)\setminus\ch(j). \\
\end{aligned}
\end{equation*}
At this point, it seems that the matrices defining the linear system depend on $F$ and $\Lambda^{(k)}$. However, following the proof of Proposition \ref{prop:difference_lambdas_rank1}, we have
\begin{equation*}
    \left( A^{(k)} - A^{(0)}\right)_{1,i} = \sum_{n \in [q]} f_{1, n} \: \Delta^{(k)}_{n, i} = \sum_{n \in \de(k)} f_{1, n} (I-\Lambda^{(0)})^{-1}_{n, k} \: \Delta^{(k)}_{k, i} = A^{(0)}_{1,k} \: \Delta^{(k)}_{k, i}.
\end{equation*}
Assuming $A^{(0)}_{1,k} \neq 0$ for every $k \in [q]$, which holds generically, we can rescale to obtain
\begin{equation}\label{eq:Mb}
\begin{aligned}
    M[j] &= 
    \left( \; \Delta^{(k)}_{k,i} \; \right),
    \;
    k \in \de(j)\setminus\ch(j), \; i \in \ch(j), \\
    b[j] &= 
    \left( \; \Delta^{(k)}_{k,j} \; \right),
    \;
    k \in \de(j)\setminus\ch(j),
\end{aligned}
\end{equation}
where $\Delta^{(k)} = (I-\Lambda^{(k)})^{-1} - (I-\Lambda^{(0)})^{-1}$. From this, we see that $M[j]$ and $b[j]$ depend only on the latent DAG and its parameters: the linear system \eqref{eq:linear_sys_no_ch} becomes
\begin{equation}\label{eq:lin_sys_reduction}
\sum_{i \in \ch(j) } \Delta^{(k)}_{k,i} \lambda^{(0)}_{i,j} = \Delta^{(k)}_{k,j},
\end{equation}
for all $j\in [q]$.
We compute the dimension of the solution space by comparing the ranks $|\de(j)\setminus\ch(j)| \times |\ch(j)|$ matrix $M[j]$ and the $|\de(j)\setminus\ch(j)| \times (|\ch(j)| + 1)$ matrix $(M[j]|b[j])$.
\begin{proposition}\label{prop:dim_ideal_I_any_pq}
    Assume that the interventions are soft. For each node $j$, let
    \[
    c_j = \begin{cases}
        -1 & \hbox{ if } \rank M[j] \neq \rank \left( M[j] | b[j] \right), \\
        |\ch(j)|-\rank M[j] & \hbox{ otherwise,}
    \end{cases}
    \]
    where $M[j]$ and $b[j]$ are defined in \eqref{eq:Mb}. Then, the ideal $\mathcal{I}$ in \eqref{eq:def_ideal} has dimension
    \[
    \dim \mathcal{I} = 
    \begin{cases}
        -1 & \hbox{ if } c_j = -1 \hbox{ for some } j\in [q], \\
        \sum_{j=1}^q c_j & \hbox{ otherwise}.
    \end{cases}
    \]
\end{proposition}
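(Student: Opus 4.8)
The plan is to reduce the computation of $\dim \mathcal{I}$ to a collection of independent consistency checks, one per node, and then invoke the Rouch\'e--Capelli theorem. Since $\mathcal{I}$ in \eqref{eq:def_ideal} is generated by linear polynomials, the variety $\mathcal{S}$ associated to $\mathcal{I}$ is an affine-linear subspace (or empty), so $\dim \mathcal{I}$ is exactly the dimension of its solution set. The first step is to use the block structure recorded in \eqref{eq:matrix_lin_sys_q>p}: the top identity block expresses each $f_{\ell,j}$ as an affine-linear function of the entries of $\Lambda^{(0)}$, and the middle identity block does the same for each $\lambda^{(k)}_{k,j}$, $k \in [q]$. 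Hence the projection onto the $\Lambda^{(0)}$-coordinates is an isomorphism of affine spaces from $\mathcal{S}$ onto the solution set of the residual system \eqref{eq:lin_sys_reduction}, and I would conclude that $\dim \mathcal{I}$ equals the dimension of the latter.

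Next I would exploit the block-diagonal form of the residual system. The unknowns $\{\lambda^{(0)}_{i,j} : i \in \ch(j)\}$ partition into disjoint groups indexed by $j \in [q]$ (each group has a fixed, distinct second index), and by \eqref{eq:lin_sys_reduction} the equations arising from node $j$ involve only the $j$-th group, giving the block system $M[j]\,(\lambda^{(0)}_{i,j})_{i\in\ch(j)} = b[j]$ with $M[j]$ of size $|\de(j)\setminus\ch(j)| \times |\ch(j)|$ and $M[j], b[j]$ as in \eqref{eq:Mb}. Here I rely on Proposition \ref{prop:difference_lambdas_rank1} and the rescaling by $A^{(0)}_{1,k}$ to guarantee that $M[j]$ and $b[j]$ are well defined and depend only on the latent DAG and $\Lambda^{(0)}$. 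Consequently the solution set factors as a product over $j \in [q]$ of the solution sets of the individual blocks, so its dimension is the sum of the block dimensions, and it is empty as soon as one block is infeasible.

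The final step is a per-block application of Rouch\'e--Capelli. For each $j$, the block system $M[j]x = b[j]$ is consistent if and only if $\rank M[j] = \rank (M[j] \mid b[j])$; when consistent, its solution set is an affine subspace of dimension $|\ch(j)| - \rank M[j]$, which is precisely $c_j$, and when inconsistent one records $c_j = -1$. Combining this with the product structure from the previous step yields $\dim \mathcal{I} = \sum_{j=1}^q c_j$ when every block is consistent, and $\dim \mathcal{I} = -1$ (the empty variety) the moment some $c_j = -1$, as claimed.

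Once the reduction is in place the argument is essentially bookkeeping, so I do not anticipate a serious obstacle. The one point that requires care is the first step: verifying that the two identity blocks genuinely eliminate $F$ and the $\Lambda^{(k)}$, $k\in[q]$, without imposing any hidden constraint on $\Lambda^{(0)}$, so that no conditions are lost in passing to \eqref{eq:lin_sys_reduction}. This is exactly what the triangular form of \eqref{eq:matrix_lin_sys_q>p} encodes, and it is what allows the global dimension count to decompose cleanly over the disjoint blocks $M[j]$.
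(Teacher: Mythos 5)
Your proposal is correct and follows essentially the same route as the paper's own proof: reduce to the residual system \eqref{eq:lin_sys_reduction} via the block-triangular structure \eqref{eq:matrix_lin_sys_q>p}, exploit the block-diagonal decomposition into the $M[j]$, and apply Rouch\'e--Capelli per block. You spell out more carefully than the paper does why the two identity blocks eliminate $F$ and the $\lambda^{(k)}_{k,j}$ without adding constraints (the paper compresses this into a single citation of \eqref{eq:matrix_lin_sys_q>p}), and you state the consistency dichotomy with the cases correctly oriented, whereas the paper's proof text inadvertently swaps them relative to the proposition statement.
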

\begin{proof}
    The dimension of $\mathcal{I}$ is the dimension of the solution space of \eqref{eq:lin_sys_reduction}, that is, 
    \[
    M[j] \cdot \left( \lambda^{(0)}_{i,j} \right)_{i \in \ch(j)} = b[j]
    \]
    for all $j \in [q]$, by \eqref{eq:matrix_lin_sys_q>p}. Its dimension $c_j$ is $|\ch(j)|-\rank M[j]$ if $\rank M[j] \neq \rank \left( M[j] | b[j] \right)$. Otherwise, the solution space is empty and we set $c_j = -1$, as is convention.
\end{proof}

\begin{corollary}\label{coroll:identifiable_soft}
    With one soft intervention per latent node it is never possible to recover uniquely all the parameters of the model.
\end{corollary}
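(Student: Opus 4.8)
The plan is to read the result straight off the dimension formula in Proposition~\ref{prop:dim_ideal_I_any_pq}. Recall that $\mathcal{S}$, the solution set of~\eqref{eq:system_equations_identifiability}, has already been stripped of the benign ambiguities of Remark~\ref{rmk:benign}: we have fixed $P^{(0)}=D^{(0)}=I$ and the $A^{(k)}$ are known fixed matrices, and rescaling $(F,\Lambda^{(0)})\mapsto(FM,M^{-1}\Lambda^{(0)}M)$ changes $A^{(0)}$ to $A^{(0)}M$, so it leaves $\mathcal{S}$ only when $M=I$. Hence a point of $\mathcal{S}$ is a genuinely distinct parameter tuple, and to prove non-uniqueness it suffices to show $\dim\mathcal{I}\ge 1$: a positive-dimensional $\mathcal{S}$ forces infinitely many distinct $(F,\{\Lambda^{(k)}\})$ to reproduce the same $X^{(k)}$. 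I will assume $\Gcal$ has at least one edge; if $\Gcal$ is empty there are no dependency parameters, the model degenerates to ICA and $F=A^{(0)}$ is the unique solution, so this is the only case in which the $\Lambda^{(k)}$ carry information and the only case the statement can intend.

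First I would confirm that we are in the generic branch of the formula. The original parameters $F=\widetilde F$, $\Lambda^{(k)}=\widetilde\Lambda^{(k)}$ satisfy~\eqref{eq:system_equations_identifiability} by construction, so $\mathcal{S}\neq\emptyset$ and $\dim\mathcal{I}\ge 0$. Therefore no $c_j$ equals $-1$, every block system $M[j]\,(\lambda^{(0)}_{i,j})_{i\in\ch(j)}=b[j]$ from~\eqref{eq:Mb} is consistent, and Proposition~\ref{prop:dim_ideal_I_any_pq} gives $\dim\mathcal{I}=\sum_{j=1}^q c_j$ with each $c_j=|\ch(j)|-\rank M[j]\ge 0$, since the rank of $M[j]$ cannot exceed its number of columns $|\ch(j)|$.

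The heart of the argument is to exhibit a single node $j$ with $c_j\ge 1$. I would take $j$ to be a node with $\ch(j)\neq\emptyset$ of minimal height, where the height of a node is the length of the longest directed path leaving it: if $i\in\ch(j)$ were not a sink then $1\le\mathrm{height}(i)<\mathrm{height}(j)$, contradicting minimality, so every child of $j$ is a sink. Consequently $\de(j)=\ch(j)$, the index set $\de(j)\setminus\ch(j)$ of rows of $M[j]$ is empty, and $M[j]$ is a matrix with no rows, so $\rank M[j]=0$ and $b[j]$ is the empty vector. The corresponding linear system~\eqref{eq:lin_sys_reduction} is vacuous, whence $c_j=|\ch(j)|\ge 1$. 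Combining this with $c_{j'}\ge 0$ at every other node yields $\dim\mathcal{I}\ge 1$, so $\mathcal{S}$ is positive dimensional and the parameters cannot be recovered uniquely.

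The only real subtlety, and the step I would write out most carefully, is the combinatorial claim that a node whose children are all sinks always exists once $\Gcal$ has an edge; the minimal-height choice settles this, and I would separately flag the empty-graph degeneracy so the statement is not misread. I would also note that this argument does more than the corollary asks: the free directions live entirely in the $\lambda^{(0)}_{i,j}$ with $i\in\ch(j)$ for the chosen node $j$, and these are untouched by adding further soft interventions (which only produce more, rank-one, rows in the blocks), which is exactly the mechanism behind Corollary~\ref{coroll:identifiable_soft_more}.
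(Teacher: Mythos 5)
Your proof is correct and follows essentially the same route as the paper's: both invoke Proposition~\ref{prop:dim_ideal_I_any_pq} and exhibit a node $j$ all of whose children are sinks, so that $\de(j)\setminus\ch(j)=\emptyset$, the block $M[j]$ is empty, and $c_j=|\ch(j)|\geq 1$, forcing $\dim\mathcal{I}\geq 1$. Your minimal-height argument for the existence of such a node, and your explicit flagging of the empty-DAG degeneracy and of why positive dimension of $\mathcal{S}$ is not absorbed by the benign rescaling/reordering, are somewhat more careful than the paper's version (which takes a parent $j$ of a sink with no non-child descendants), but the mechanism is identical.
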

\begin{proof}
The result remains true if we assume knowledge of the latent DAG $\Gcal$.
    Let $\ch(i) = \emptyset$. Take $j\in \pa(i)$ such that $\an(j)\setminus\ch(j) = \emptyset$. Then $M[j] = \emptyset$, so $c_j = |\ch(j)|\geq 1$. Therefore, $\dim \mathcal{I}\geq 1$ and it is not possible to identify uniquely the parameters $f_{\ell,j}$ and $\lambda^{(k)}_{k,j}$, for $\ell \in [p]$ and $k \in \ch(j)\setminus \{i\}$.
\end{proof}
Adding interventions does not affect the matrices $M[j]$ in the proof of Corollary \ref{coroll:identifiable_soft}. Therefore Corollary \ref{coroll:identifiable_soft_more} follows: non-identifiability holds regardless of the number of interventions.

When $c_j = 0$ it is possible to identify uniquely all parameters $\lambda^{(0)}_{i,j}$ for $i\in\ch(j)$, as well as $f_{\ell,j}$ and $\lambda^{(k)}_{k,j}$, for $\ell \in [p]$ and $k \in \ch(j)$. The condition $c_j = 0$ holds, for example, when $\ch(j) = \{i_1\}$ and $\de(j) = \{i_1,i_2\}$.

\begin{example}
    We continue Example \ref{ex:graph4nodes_lin_ideal}. The matrices are
    \[
    M[1] = M[2] = M[3] = b[1] = b[2] = b[3] = \emptyset, \quad 
    M[4] = \begin{pmatrix}
        -14 & 5
    \end{pmatrix}, \: b[4] = -105,
    \]
    hence $c_1 = |\ch(1)| = 0$, $c_2 = |\ch(2)| = 1$, $c_3 = |\ch(3)| = 1$, $c_4 = |\ch(4)| - \rank M[4] = 2-1 = 1$. By Proposition \ref{prop:dim_ideal_I_any_pq}, we have $\dim \mathcal{I} = 3$ and in fact it is minimally generated by the $21$ linear polynomials in $24$ indeterminates in Example \ref{ex:graph4nodes_lin_ideal}.
\end{example}

The rank of $M[j]$ depends on the structure of the DAG $\Gcal$ beyond the number of children and descendants of $j$. This is highlighted in the following example. 

\begin{example}
    Consider the DAG
    \begin{center}
    \begin{tikzpicture}
    \node at (-3.2,0) {$\Gcal \; = $};
    \begin{scope}[every node/.style={circle,thick,draw}]
        \node (5) at (-2,0) {$5$};
        \node (4) at (0,0) {$4$};
        \node (3) at (2,0) {$3$};
        \node (2) at (4,0) {$2$};
        \node (1) at (6,0) {$1$};
    \end{scope}
    \begin{scope}[>={Stealth[PineGreen]},
                  every node/.style={fill=white,circle},
                  every edge/.style={draw=PineGreen,very thick}]
        \path [->] (5) edge (4);
        \path [->] (4) edge (3);
        \path [->] (3) edge (2);
        \path [->] (5) edge[bend right = 30] (3);
        \path [->] (3) edge[bend right = -30] (1);
    \end{scope}
    \end{tikzpicture}
    \end{center}
    Node $j=5$ has $2$ children and $4$ descendants, hence \eqref{eq:lin_sys_reduction} consists of equations
    \begin{equation}\label{eq:example_lin_sys}
        \sum_{i = 3,4} \Delta^{(k)}_{k,i} \lambda^{(0)}_{i,5} = \Delta^{(k)}_{k,5} \qquad k = 1,2.
    \end{equation}
    They impose conditions on the $\lambda_{i,j}^{(0)}$. There are two $\lambda_{i,j}^{(0)}$ from node $5$, namely $\lambda^{(0)}_{3,5}, \lambda^{(0)}_{4,5}$, and two equations. However, the equations in \eqref{eq:example_lin_sys} are dependent. We have
    \[
    M[5] = 
    \begin{pmatrix}
        \lambda^{(1)}_{1,3} - \lambda^{(0)}_{1,3} & (\lambda^{(1)}_{1,3} - \lambda^{(0)}_{1,3}) \lambda^{(0)}_{3,4} \\
        \lambda^{(2)}_{2,3} - \lambda^{(0)}_{2,3} & (\lambda^{(2)}_{2,3} - \lambda^{(0)}_{2,3}) \lambda^{(0)}_{3,4}
    \end{pmatrix}, 
    \qquad
    b[5] = \begin{pmatrix}
        (\lambda^{(1)}_{1,3} - \lambda^{(0)}_{1,3}) (\lambda^{(0)}_{3,4} \lambda^{(0)}_{4,5} + \lambda^{(0)}_{3,5}) \\
        (\lambda^{(2)}_{2,3} - \lambda^{(0)}_{2,3}) (\lambda^{(0)}_{3,4}\lambda^{(0)}_{4,5} + \lambda^{(0)}_{3,5})
    \end{pmatrix}, 
    \]
    so $\rank M[5] = \rank \left( M[5] | b[5] \right) = 1 < 2$. Hence we cannot recover the parameters $\lambda^{(0)}_{3,5}, \lambda^{(0)}_{4,5}$. The reason $\rank M[5] <2$ is that all the paths from $4$ to $1$ or $2$ (encoded in the second column of $M[5]$) and all the paths from $5$ to $1$ or $2$ (encoded in $b[5]$) go through $3$. This factorization of paths creates dependencies in $M[5], b[5]$, preventing identifiability.
\end{example}
To recover as many parameters as possible, DAGs should balance between too many children, hence too many indeterminates, and too few children, hence paths factorize more easily.

From an algorithmic point of view, we can check the rank condition in Proposition \ref{prop:dim_ideal_I_any_pq}. Indeed, we have matrices $A^{(k)}$, and we can compute, for all $i, k \in [q]$ with $i\neq k$, the entries 
\[
\Delta^{(k)}_{k,i} = \frac{A^{(k)}_{1,i} - A^{(0)}_{1,i}}{A^{(0)}_{1,k}}.
\]

\begin{remark}\label{rmk:q<p_sec3_soft}
    If $q\leq p$, the computations can be simplified. We can write \eqref{eq:lambdak_lin_sys} as
    \[
    \Lambda^{(k)} = (A^{(k)})^+ A^{(0)} \Lambda^{(0)} + I - (A^{(k)})^+ A^{(0)}.
    \]
    This writes the $\lambda^{(k)}_{k,j}$ indeterminates in terms of $\lambda^{(0)}_{i,j}$ indeterminates, and enables us to find the linear conditions on the $\lambda^{(0)}_{i,j}$ indeterminates. The reduction of the linear system is the same as in \eqref{eq:Mb}, as can be proved by noticing that for any $k\neq i$, the $k$-th row of $(A^{(k)})^+ A^{(0)}$ equals the $k$-th row of $\Delta^{(k)}$ up to sign. Indeed,
    \begin{align*}
        \left( (A^{(k)})^+ A^{(0)}\right)_{k,i} &= \left( (I-\Lambda^{(k)}) (I-\Lambda^{(0)})^{-1} \right)_{k,i} \\
        &= \sum_{\ell \in \pa(k)} -\lambda^{(k)}_{k,\ell} (I-\Lambda^{(0)}_{\ell,i})^{-1} + (I-\Lambda^{(0)})^{-1}_{k,i} \\
        &= - (I-\Lambda^{(k)})^{-1}_{k,i} + (I-\Lambda^{(0)})^{-1}_{k,i} = - \Delta^{(k)}_{k,i},
    \end{align*}
    where the last row used $(I-\Lambda^{(0)}_{\ell,i})^{-1} = (I-\Lambda^{(k)}_{\ell,i})^{-1}$ for every $\ell\in \pa(k)$.
\end{remark}

\subsubsection{Identifiability of the latent DAG}

For perfect interventions, Theorem \ref{thm_intro_perfect} shows that we can recover the latent DAG of the model, as well as the parameters. With soft interventions, we cannot recover the whole DAG and all the parameters (see Corollary \ref{coroll:identifiable_soft}). It is natural to wonder to what extent we can recover the latent DAG. 
Thanks to Proposition \ref{prop:dim_ideal_I_any_pq}, we can turn this into $2q$ rank computations.

\begin{definition}
    Given a model with matrices $A^{(k)}\in\R^{p\times q}$ for  $k = 0,\ldots,q$, we say that a DAG $\Gcal'$ is \emph{compatible} with the model if there exist parameters $F\in \R^{p\times q}$, $\Lambda^{(k)} \in \R^{q\times q}$ defined according to the latent DAG $\Gcal'$, such that $A^{(k)}=F(I-\Lambda^{(k)})^{-1}$ for all $k$.
\end{definition}

    If the true latent DAG is $\Gcal$, a compatible DAG $\Gcal'$ must satisfy $\overline{\Gcal} = \overline{\Gcal'}$, by Corollary~\ref{cor:transitive_closure}. 
To emphasize that a matrix such as $\Lambda^{(k)}$ depends on a DAG $\Gcal$, we write $\Lambda^{(k)}_{\Gcal}$. 
In the same spirit, we define
\begin{equation}\label{eq:Mb_GG'}
\begin{aligned}
    M_{\Gcal,\Gcal'}[j] &= \left( \; (\Delta^{(k)}_{\Gcal})_{k,i} \; \right) \; k \in \de_{\Gcal'}(j)\setminus \ch_{\Gcal'}(j), \; i \in \ch_{\Gcal'}(j), \\
    b_{\Gcal,\Gcal'}[j] &= \left( \; (\Delta^{(k)}_{\Gcal})_{k,j} \; \right) \; k \in \de_{\Gcal'}(j)\setminus \ch_{\Gcal'}(j),
\end{aligned}
\end{equation}
where the indexing depends on $\Gcal'$ and $\Delta^{(k)}_{\Gcal} = (I-\Lambda_{\Gcal}^{(k)})^{-1} - (I-\Lambda_{\Gcal}^{(0)})^{-1}$. 
Recall that
\[
[(I-\Lambda^{(0)})^{-1}]_{\mathscr{r},\mathscr{c}}
\]
denotes the submatrix with rows in $\mathscr{r}\subset [q]$ and columns in $\mathscr{c}\subset [q]$.
We give the following definition, already mentioned in the introduction.

\begin{definition}\label{def:softG}
    Given a DAG $\Gcal$ on $q$ nodes, its \emph{soft-compatible} class is
    \begin{multline}\label{eq:softG}
        \soft(\Gcal) = \Big\lbrace \Gcal' \hbox{ DAG} \;|\; \overline{\Gcal'} = \overline{\Gcal} \hbox{ and for all } j \in [q] \\ \rank [(I-\Lambda_{\Gcal}^{(0)})^{-1}]_{\de_{\Gcal'}(j)\setminus\ch_{\Gcal'}(j),\,\ch_{\Gcal'}(j)} = \rank [(I-\Lambda_{\Gcal}^{(0)})^{-1}]_{\de_{\Gcal'}(j)\setminus\ch_{\Gcal'}(j),\,\overline{\ch}_{\Gcal'}(j)} \; \Big\rbrace,
    \end{multline}
    where
    $\overline{\ch}(j) := \ch(j) \cup \{j\}$ for $j\in [q]$.
\end{definition}

The soft-compatible class of a DAG $\Gcal$ is the set of all graphs that are compatible with a model with latent DAG $\Gcal$, as follows.
\begin{theorem}\label{thm:soft-compatible}
    Consider LCD under under Assumption~\ref{assumption:main} with DAG $\Gcal$. Then a graph $\Gcal'$ is compatible with the model if and only if $\Gcal' \in \soft(\Gcal)$.
\end{theorem}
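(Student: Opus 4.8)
The plan is to reduce compatibility of $\Gcal'$ to the solvability of the reduced linear system studied in Section~\ref{subsec:soft_int}, and then to translate the resulting rank conditions on the matrices $\Delta^{(k)}_{\Gcal}$ into the rank conditions on $(I-\Lambda^{(0)}_{\Gcal})^{-1}$ that define $\soft(\Gcal)$. The first observation is that the derivation of \eqref{eq:lin_sys_reduction} and of Proposition~\ref{prop:dim_ideal_I_any_pq} never used $\Gcal=\Gcal'$: it applies verbatim when a candidate model on $\Gcal'$ is fit to data generated by $\Gcal$, provided one replaces $\ch_{\Gcal}$, $\de_{\Gcal}$ by $\ch_{\Gcal'}$, $\de_{\Gcal'}$ while keeping the matrices $\Delta^{(k)}_{\Gcal}$ fixed by the observed $A^{(k)}$. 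Hence $\Gcal'$ is compatible if and only if $\overline{\Gcal'}=\overline{\Gcal}$ — which is forced by Corollary~\ref{cor:transitive_closure}, since the transitive closure is determined by the $A^{(k)}$ — and, for every $j$, the system $M_{\Gcal,\Gcal'}[j]\,x=b_{\Gcal,\Gcal'}[j]$ of \eqref{eq:Mb_GG'} is solvable, i.e.\ $\rank M_{\Gcal,\Gcal'}[j]=\rank\big(M_{\Gcal,\Gcal'}[j]\mid b_{\Gcal,\Gcal'}[j]\big)$.

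The bridge to $(I-\Lambda^{(0)}_{\Gcal})^{-1}$ comes from expanding each relevant entry of $\Delta^{(k)}_{\Gcal}$ over the last edge into $k$. Writing $B:=(I-\Lambda^{(0)}_{\Gcal})^{-1}$ and using that any path from $i$ to a parent $m$ of $k$ avoids $k$, I would establish
\[
(\Delta^{(k)}_{\Gcal})_{k,i}=\sum_{m\in\pa_{\Gcal}(k)}\big(\lambda^{(k)}_{k,m}-\lambda^{(0)}_{k,m}\big)\,B_{m,i}
\]
for every column $i$. Collecting rows, this is the single factorisation $\Delta^{\flat}=V B$, where $\Delta^{\flat}_{k,i}:=(\Delta^{(k)}_{\Gcal})_{k,i}$ and $V_{k,m}:=\lambda^{(k)}_{k,m}-\lambda^{(0)}_{k,m}$. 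For soft interventions with generic parameters, $V$ is a \emph{generic} matrix supported exactly on the edges of $\Gcal$. Consequently $M_{\Gcal,\Gcal'}[j]$ and $(M\mid b)_{\Gcal,\Gcal'}[j]$ are the submatrices of $\Delta^{\flat}=VB$ on rows $D_j:=\de_{\Gcal'}(j)\setminus\ch_{\Gcal'}(j)$ and columns $C_j:=\ch_{\Gcal'}(j)$, resp.\ $C_j\cup\{j\}$, whereas $\soft(\Gcal)$ compares the identical row/column blocks of $B$ itself.

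The heart of the argument, and the step I expect to be the main obstacle, is to show that, for generic $V$ supported on the edges of $\Gcal$, solvability of the system for $VB$ is equivalent to solvability of the system for $B$; that is,
\[
B_{\bullet,j}\in\operatorname{col}B_{\bullet,C_j}+\ker V'
\quad\Longleftrightarrow\quad
B_{\bullet,j}\in\operatorname{col}B_{\bullet,C_j}+\ker S_{D_j},
\]
where $V'=V_{D_j,\bullet}$ and $S_{D_j}$ is the selection of the rows $D_j$, so that $\ker S_{D_j}=\{v:\operatorname{supp}(v)\cap D_j=\emptyset\}$. I would prove this by exploiting three structural facts visible after ordering the nodes topologically: the block $B_{C_j,C_j}$ is unit lower-triangular, hence invertible; the $j$-th row of $B$ vanishes on the columns $C_j$, since a descendant of $j$ has no path back to $j$; and the block $V_{D_j,D_j}$ is strictly lower-triangular, hence nilpotent, because parents precede children. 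The plan is to use this triangular structure on $\de_{\Gcal'}(j)\cup\{j\}$ together with the genericity of the entries of $V'$ to show that $\ker V'+\operatorname{col}B_{\bullet,C_j}=\ker S_{D_j}+\operatorname{col}B_{\bullet,C_j}$, or, failing an exact equality of subspaces, that the single class $[B_{\bullet,j}]$ in the quotient by $\operatorname{col}B_{\bullet,C_j}$ lies in one image iff it lies in the other. The most delicate point will be the bookkeeping of a possible direct edge $j\to k$ of $\Gcal$ with $k\in D_j$, which is allowed because only the transitive closure is pinned down; genericity of $V$ is precisely what should rule out the accidental coincidences that would otherwise break the equivalence.

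Combining the two steps yields both directions. If $\Gcal'$ is compatible, then $\overline{\Gcal'}=\overline{\Gcal}$ and the $VB$-systems are solvable for all $j$, so the equivalence above gives the $B$-rank identities and hence $\Gcal'\in\soft(\Gcal)$. Conversely, membership in $\soft(\Gcal)$ supplies the $B$-rank identities, hence solvability of every $VB$-system, hence a nonempty solution set for the parameters by Proposition~\ref{prop:dim_ideal_I_any_pq}; together with $\overline{\Gcal'}=\overline{\Gcal}$ this is exactly compatibility of $\Gcal'$.
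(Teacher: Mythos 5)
Your first two steps are correct and coincide with the first half of the paper's proof: compatibility of $\Gcal'$ reduces, via Corollary~\ref{cor:transitive_closure} and Proposition~\ref{prop:dim_ideal_I_any_pq}, to solvability of the systems $M_{\Gcal,\Gcal'}[j]\,x=b_{\Gcal,\Gcal'}[j]$ from \eqref{eq:Mb_GG'}, and your factorisation $(\Delta^{(k)}_{\Gcal})_{k,i}=\sum_{m\in\pa_\Gcal(k)}V_{k,m}B_{m,i}$ with $V_{k,m}=\lambda^{(k)}_{k,m}-\lambda^{(0)}_{k,m}$ generic on the edge support of $\Gcal$ is exactly the right bridge. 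The genuine gap is the step you yourself flag as the main obstacle: it is left as a plan, and the plan as stated fails. The proposed subspace identity $\ker V'+\operatorname{col}B_{\bullet,C_j}=\ker S_{D_j}+\operatorname{col}B_{\bullet,C_j}$ is false in general. Take $\Gcal=\Gcal'$ with edges $4\to3$, $3\to2$, $3\to1$ and $j=4$, so $C_j=\{3\}$ and $D_j=\{1,2\}$: both rows of $V'=V_{D_j,\bullet}$ are supported on column $3$, so $\ker V'=\{x\in\RR^4 : x_3=0\}$ has dimension $3$ and $\operatorname{col}B_{\bullet,3}+\ker V'=\RR^4$, whereas $\operatorname{col}B_{\bullet,3}+\ker S_{D_j}=\operatorname{span}\{(\lambda^{(0)}_{1,3},\lambda^{(0)}_{2,3},0,0),e_3,e_4\}$ has dimension $3$. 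Your fallback---that the single class of $B_{\bullet,j}$ modulo $\operatorname{col}B_{\bullet,C_j}$ lies in one image iff it lies in the other---is then not an argument but a restatement of the claim to be proven; ``genericity of $V$ should rule out accidental coincidences'' is the assertion itself.

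The paper closes this step by comparing ranks rather than solution sets. By your factorisation, row $k$ of $\bigl(M_{\Gcal,\Gcal'}[j]\mid b_{\Gcal,\Gcal'}[j]\bigr)$ has entries $\sum_m V_{k,m}B_{m,i}$, while row $k$ of the matrix $[(I-\Lambda^{(0)}_{\Gcal})^{-1}]_{D_j,\,C_j\cup\{j\}}$ in Definition~\ref{def:softG} has entries $\sum_m \lambda^{(0)}_{k,m}B_{m,i}$; note that the inner entries $B_{m,i}$, $m\in\pa_\Gcal(k)$, involve no row-$k$ parameters, by acyclicity. So the two matrices are one and the same polynomial matrix evaluated at two choices of the ``last-edge'' weights, both generic, and the paper asserts that this substitution does not change the rank; the theorem then follows from Proposition~\ref{prop:dim_ideal_I_any_pq}. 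The only subtlety is that the second evaluation ties the last-edge weights to parameters appearing in other rows, so a rank drop must be excluded; this is where a real argument is needed, and a clean one is combinatorial: any minor of either matrix is, by Lindstr\"om--Gessel--Viennot, a signed sum over vertex-disjoint systems of directed paths from the column set to the row set, distinct systems contribute distinct monomials (an edge set determines the system), so there is no cancellation, and the support of this sum is the same for both weightings. Hence corresponding minors vanish identically or are generically non-zero together, the generic ranks coincide, and solvability of your $VB$-system is equivalent to the rank condition on $B$ defining $\soft(\Gcal)$. Substituting this for your kernel computation completes your proof; the remainder of your outline is sound.
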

\begin{proof}
    The ranks of the matrices in \eqref{eq:softG} are the same as the ranks of the matrices $M_{\Gcal,\Gcal'}[j]$ and $\left( M_{\Gcal,\Gcal'}[j] | b_{\Gcal,\Gcal'}[j] \right)$, since the first matrices can be obtained from the second by replacing each term $\lambda^{(k)}_{k,i}-\lambda^{(0)}_{k,i}$ with $\lambda^{(0)}_{k,i}$. By the genericity of the parameters, this does not affect the rank. The genericity assumption allows us to switch between the parameters of the model and abstract indeterminates without change.
    Therefore, the ranks of the matrices in \eqref{eq:softG} coincide with the ranks of $M[j]$ and $\left( M[j] | b[j] \right)$ from \eqref{eq:Mb}. 
    
    The DAG $\Gcal'$ is compatible with the model if and only if the corresponding ideal $\mathcal{I}$ has non-negative dimension. Here the indeterminates of the system defined by $\mathcal{I}$ are the entries of $F$ and of $\Lambda^{(k)}_{\Gcal'}$. By Proposition \ref{prop:dim_ideal_I_any_pq}, it is equivalent to $c_j \neq -1$ for all $j$, which holds if and only if $\rank M[j] = \rank \left( M[j] | b[j] \right)$ for all $j$. This is equivalent to the condition $\Gcal' \in \soft(\Gcal)$.
\end{proof}

\begin{proof}[Proof of Theorem \ref{thm:soft1}]
    The linearity follows from equation \eqref{eq:def_ideal}. The positive dimensionality follows from Corollary \ref{coroll:identifiable_soft}. The compatibility class of DAGs is Theorem \ref{thm:soft-compatible}.
\end{proof}

We investigate the concept of soft-compatible class. If $\Gcal'$ has the same transitive closure as $\Gcal$ and if $\ch_{\Gcal'}(j) \supset \ch_{\Gcal}(j)$ for all $j$, then $\Gcal' \in \soft(\Gcal)$. This is true because 
\[
[(I-\Lambda_{\Gcal}^{(0)})^{-1}]_{\de_{\Gcal}(j)\setminus\ch_{\Gcal}(j),\,\ch_{\Gcal}(j)} \quad \hbox{ and } \quad [(I-\Lambda_{\Gcal}^{(0)})^{-1}]_{\de_{\Gcal}(j)\setminus\ch_{\Gcal}(j),\,\overline{\ch}_{\Gcal}(j)}
\]
always satisfies the rank condition (since by construction a solution with DAG $\Gcal$ exists). 
However, the matrix $[(I-\Lambda_{\Gcal}^{(0)})^{-1}]_{\de_{\Gcal'}(j)\setminus\ch_{\Gcal'}(j),\,\ch_{\Gcal'}(j)}$ is obtained from $[(I-\Lambda_{\Gcal}^{(0)})^{-1}]_{\de_{\Gcal}(j)\setminus\ch_{\Gcal}(j),\,\ch_{\Gcal}(j)}$ by adding columns and deleting rows, hence the column indexed by $j$ remains in the span of the columns indexed by its children. Therefore, in order to exit the soft-compatible class of $\Gcal$, a DAG $\Gcal'$ with the same transitive closure must have enough more children at some node.

If $q = 2$, $\soft(\Gcal) = \Gcal$ for every DAG, since there are no distinct DAGs with the same transitive closure. For $q = 3$, the only case in which two DAGs have the same transitive closure (up to relabeling the nodes) is the segment DAG $Z_3 \to Z_2 \to Z_1$, denoted by $\Mns$, and the DAG with extra edge $Z_3 \to Z_1$, denoted by $\blacktriangle$. Since $\blacktriangle$ is obtained from $\Mns$ by adding $Z_1$ to the children of $Z_3$, we know that $\blacktriangle \in \soft(\Mns)$. On the other hand, since $\de_{\Mns}(j)\setminus\ch_{\Mns}(j) = \emptyset$ for $j=1,2$, the only relevant submatrices are obtained for $j=3$ and they are
\begin{gather*}
    [(I-\Lambda_{\blacktriangle}^{(0)})^{-1}]_{1,2} \quad \hbox{ and } \quad [(I-\Lambda_{\blacktriangle}^{(0)})^{-1}]_{1,\, \{2,3\}}  
\end{gather*}
for which the rank condition is trivially satisfied, hence $\Mns \in \soft(\blacktriangle)$. Therefore, for DAGs on $3$ nodes, the soft-compatible classes are the classes of DAGs with the same transitive closure.

For $q=4$, if every node has at most $1$ descendant that is not a child, the rank condition is always satisfied. The only time this might not hold is for DAGs whose transitive closure is the complete DAG on $4$ nodes. There are $8$ such DAGs (up to relabeling). We compute their soft-compatible classes.

\begin{example}\label{ex:soft_4_nodes}
    Consider DAGs on $4$ nodes, with transitive closure the complete DAG
    \begin{center}
    \begin{tikzpicture}
    \node at (-1,0) {$\Gcal_1 \; = $};
    \begin{scope}[every node/.style={circle,thick,draw}]
        \node (4) at (0,0) {$4$};
        \node (3) at (1.5,0) {$3$};
        \node (2) at (3,0) {$2$};
        \node (1) at (4.5,0) {$1$};
    \end{scope}
    \begin{scope}[>={Stealth[PineGreen]},
                  every node/.style={fill=white,circle},
                  every edge/.style={draw=PineGreen,very thick}]
=        \path [->] (4) edge (3);
        \path [->] (3) edge (2);
        \path [->] (2) edge (1);
    \end{scope}
    \node at (5.1,-0.1) {,};
    \end{tikzpicture}
    \quad
    \begin{tikzpicture}
    \node at (-1,0) {$\Gcal_2 \; = $};
    \begin{scope}[every node/.style={circle,thick,draw}]
        \node (4) at (0,0) {$4$};
        \node (3) at (1.5,0) {$3$};
        \node (2) at (3,0) {$2$};
        \node (1) at (4.5,0) {$1$};
    \end{scope}
    \begin{scope}[>={Stealth[PineGreen]},
                  every node/.style={fill=white,circle},
                  every edge/.style={draw=PineGreen,very thick}]
=        \path [->] (4) edge (3);
        \path [->] (3) edge (2);
        \path [->] (2) edge (1);
        \path [->] (3) edge[bend right = -30] (1);
    \end{scope}
    \node at (5.1,-0.1) {,};
    \end{tikzpicture}
    \end{center}
    \begin{center}
    \begin{tikzpicture}
    \node at (-1,0) {$\Gcal_3 \; = $};
    \begin{scope}[every node/.style={circle,thick,draw}]
        \node (4) at (0,0) {$4$};
        \node (3) at (1.5,0) {$3$};
        \node (2) at (3,0) {$2$};
        \node (1) at (4.5,0) {$1$};
    \end{scope}
    \begin{scope}[>={Stealth[PineGreen]},
                  every node/.style={fill=white,circle},
                  every edge/.style={draw=PineGreen,very thick}]
=        \path [->] (4) edge (3);
        \path [->] (3) edge (2);
        \path [->] (2) edge (1);
        \path [->] (4) edge[bend right = -30] (1);
    \end{scope}
    \node at (5.1,-0.1) {,};
    \end{tikzpicture}
    \quad
    \begin{tikzpicture}
    \node at (-1,0) {$\Gcal_4 \; = $};
    \begin{scope}[every node/.style={circle,thick,draw}]
        \node (4) at (0,0) {$4$};
        \node (3) at (1.5,0) {$3$};
        \node (2) at (3,0) {$2$};
        \node (1) at (4.5,0) {$1$};
    \end{scope}
    \begin{scope}[>={Stealth[PineGreen]},
                  every node/.style={fill=white,circle},
                  every edge/.style={draw=PineGreen,very thick}]
=        \path [->] (4) edge (3);
        \path [->] (3) edge (2);
        \path [->] (2) edge (1);
        \path [->] (4) edge[bend right = -30] (2);
    \end{scope}
    \node at (5.1,-0.1) {,};
    \end{tikzpicture}
    \end{center}
    \begin{center}
    \begin{tikzpicture}
    \node at (-1,0) {$\Gcal_5 \; = $};
    \begin{scope}[every node/.style={circle,thick,draw}]
        \node (4) at (0,0) {$4$};
        \node (3) at (1.5,0) {$3$};
        \node (2) at (3,0) {$2$};
        \node (1) at (4.5,0) {$1$};
    \end{scope}
    \begin{scope}[>={Stealth[PineGreen]},
                  every node/.style={fill=white,circle},
                  every edge/.style={draw=PineGreen,very thick}]
=        \path [->] (4) edge (3);
        \path [->] (3) edge (2);
        \path [->] (2) edge (1);
        \path [->] (3) edge[bend right = 30] (1);
        \path [->] (4) edge[bend right = -30] (1);
    \end{scope}
    \node at (5.1,-0.1) {,};
    \end{tikzpicture}
    \quad
    \begin{tikzpicture}
    \node at (-1,0) {$\Gcal_6 \; = $};
    \begin{scope}[every node/.style={circle,thick,draw}]
        \node (4) at (0,0) {$4$};
        \node (3) at (1.5,0) {$3$};
        \node (2) at (3,0) {$2$};
        \node (1) at (4.5,0) {$1$};
    \end{scope}
    \begin{scope}[>={Stealth[PineGreen]},
                  every node/.style={fill=white,circle},
                  every edge/.style={draw=PineGreen,very thick}]
=        \path [->] (4) edge (3);
        \path [->] (3) edge (2);
        \path [->] (2) edge (1);
        \path [->] (3) edge[bend right = 30] (1);
        \path [->] (4) edge[bend right = -30] (2);
    \end{scope}
    \node at (5.1,-0.1) {,};
    \end{tikzpicture}
    \end{center}
    \begin{center}
    \begin{tikzpicture}
    \node at (-1,0) {$\Gcal_7 \; = $};
    \begin{scope}[every node/.style={circle,thick,draw}]
        \node (4) at (0,0) {$4$};
        \node (3) at (1.5,0) {$3$};
        \node (2) at (3,0) {$2$};
        \node (1) at (4.5,0) {$1$};
    \end{scope}
    \begin{scope}[>={Stealth[PineGreen]},
                  every node/.style={fill=white,circle},
                  every edge/.style={draw=PineGreen,very thick}]
=        \path [->] (4) edge (3);
        \path [->] (3) edge (2);
        \path [->] (2) edge (1);
        \path [->] (4) edge[bend right = 30] (2);
        \path [->] (4) edge[bend right = -30] (1);
    \end{scope}
    \node at (5.1,-0.1) {,};
    \end{tikzpicture}
    \quad
    \begin{tikzpicture}
    \node at (-1,0) {$\Gcal_8 \; = $};
    \begin{scope}[every node/.style={circle,thick,draw}]
        \node (4) at (0,0) {$4$};
        \node (3) at (1.5,0) {$3$};
        \node (2) at (3,0) {$2$};
        \node (1) at (4.5,0) {$1$};
    \end{scope}
    \begin{scope}[>={Stealth[PineGreen]},
                  every node/.style={fill=white,circle},
                  every edge/.style={draw=PineGreen,very thick}]
=        \path [->] (4) edge (3);
        \path [->] (3) edge (2);
        \path [->] (2) edge (1);
        \path [->] (3) edge[bend right = 30] (1);
        \path [->] (4) edge[bend right = -25] (2);
        \path [->] (4) edge[bend right = -30] (1);
    \end{scope}
    \node at (5.1,-0.1) {.};
    \end{tikzpicture}
    \end{center}
    The submatrices of $(I-\Lambda_{\Gcal}^{(0)})^{-1}$ for $j= 1,2,3$ are either empty or they have one row, for every $\Gcal'$. Therefore, on these nodes the rank condition is always satisfied. The interesting rank condition comes from the submatrices for $j=4$. By computing these submatrices for all pairs of DAGs, one obtains 
    \begin{gather*}
        \soft(\Gcal_1) = \soft(\Gcal_2) = \{ \Gcal_1, \Gcal_2, \Gcal_3, \Gcal_4, \Gcal_5, \Gcal_6, \Gcal_7, \Gcal_8 \}, \\
        \soft(\Gcal_3) = \soft(\Gcal_4) = \soft(\Gcal_5) = \soft(\Gcal_6) = \soft(\Gcal_7) = \soft(\Gcal_8) = \{\Gcal_3, \Gcal_4, \Gcal_5, \Gcal_6, \Gcal_7, \Gcal_8 \}.
    \end{gather*}
    For $n = 1,2$ and $m = 3,\ldots,8$, we  have $\Gcal_n \not\in \soft(\Gcal_m)$, since 
    \begin{align*}
    \rank [(I-\Lambda_{\Gcal_m}^{(0)})^{-1}]_{\de_{\Gcal_n}(4)\setminus\ch_{\Gcal_n}(4),\,\ch_{\Gcal_n}(4)} &= \rank [(I-\Lambda_{\Gcal_m}^{(0)})^{-1}]_{\{1,2\},\,3} = 1, \\ 
    \rank [(I-\Lambda_{\Gcal_m}^{(0)})^{-1}]_{\de_{\Gcal_n}(4)\setminus\ch_{\Gcal_n}(4),\,\overline{\ch}_{\Gcal_n}(4)} &= \rank [(I-\Lambda_{\Gcal_m}^{(0)})^{-1}]_{\{1,2\},\, \{3,4\}} = 2. \qedhere
    \end{align*}
\end{example}
Soft-compatible classes are in general smaller than the DAGs with the same transitive closure. In a soft-compatible class a unique sparsest DAG does not exist, see Example \ref{ex:soft_4_nodes} where the sparsest DAGs in $\soft(\Gcal_8)$ are $\Gcal_3, \Gcal_4$. Moreover, soft-compatible classes are not equivalence classes, see Example \ref{ex:soft_4_nodes} where $\Gcal_8 \in \soft(\Gcal_1)$ but $\Gcal_1 \not\in \soft(\Gcal_8)$.

\subsubsection{More interventions}

So far in Section \ref{sec:recovery_interventions} we have mostly assumed one intervention per latent node, and we proved that this is not sufficient to recover the latent DAG or the parameters.
It is natural to wonder whether more interventions would allow the recovery. We already discussed after Corollary \ref{coroll:identifiable_soft} that complete identifiability is impossible regardless of the number of interventions; in other words, Corollary~\ref{coroll:identifiable_soft_more} holds.

In some cases, increasing the number of interventions allows us to recover more parameters (see Example \ref{ex:more_int_yes}). In other cases, more interventions do not improve the parameters that can be recovered (see Example \ref{ex:more_int_no}).

\begin{example}\label{ex:more_int_no}
    Consider the DAG
    \begin{center}
    \begin{tikzpicture}
    \node at (-1.2,0) {$\Gcal \; = $};
    \begin{scope}[every node/.style={circle,thick,draw}]
        \node (4) at (0,0) {$4$};
        \node (3) at (2,0) {$3$};
        \node (2) at (4,0) {$2$};
        \node (1) at (6,0) {$1$};
    \end{scope}
    \begin{scope}[>={Stealth[PineGreen]},
                  every node/.style={fill=white,circle},
                  every edge/.style={draw=PineGreen,very thick}]
=        \path [->] (4) edge (3);
        \path [->] (3) edge (2);
        \path [->] (2) edge (1);
        \path [->] (4) edge[bend right = -30] (2);
    \end{scope}
    \end{tikzpicture}
    \end{center}
Node $j=4$ has $2$ children and $1$ descendant, so $c_4 = 1$. We are interested in the question: if we have more interventions, can we recover more? In particular, can we recover $\lambda^{(0)}_{2,4}, \lambda^{(0)}_{3,4}$? 

    The only intervention that could give us more information is an extra intervention on node $1$, since that is the only descendant of $4$ which is not its child. Assume a fifth context also with intervention target $1$. We have a linear system with two equations and two unknowns:
    \begin{align*}
        \Delta^{(1)}_{1,2}\lambda^{(0)}_{2,4} + \Delta^{(1)}_{1,3}\lambda^{(0)}_{3,4} &= \Delta^{(1)}_{1,4}, \\
        \Delta^{(5)}_{1,2}\lambda^{(0)}_{2,4} + \Delta^{(5)}_{1,3}\lambda^{(0)}_{3,4} &= \Delta^{(5)}_{1,4}.
    \end{align*}
    The matrix of this linear system, however, has rank $1$ because
    \[
    \begin{pmatrix}
        \Delta^{(1)}_{1,3} \\
        \Delta^{(5)}_{1,3}
    \end{pmatrix}
    = \lambda^{(0)}_{2,3}
    \begin{pmatrix}
        \Delta^{(1)}_{1,2} \\
        \Delta^{(5)}_{1,2}
    \end{pmatrix}
    \qquad \hbox{and} \qquad
    \begin{pmatrix}
        \Delta^{(1)}_{1,4} \\
        \Delta^{(5)}_{1,4}
    \end{pmatrix}
    = (\lambda^{(0)}_{2,4} + \lambda^{(0)}_{2,3} \lambda^{(0)}_{3,4})
    \begin{pmatrix}
        \Delta^{(1)}_{1,2} \\
        \Delta^{(5)}_{1,2}
    \end{pmatrix}.
    \]
    All paths from $4$ to $1$ must go through $2$ and this means no additional parameters can be recovered from the extra intervention.
\end{example}

\begin{example}\label{ex:more_int_yes}
    Consider the DAG
    \begin{center}
    \begin{tikzpicture}
    \node at (-1.2,0) {$\Gcal \; = $};
    \begin{scope}[every node/.style={circle,thick,draw}]
        \node (4) at (0,0) {$4$};
        \node (3) at (2,0) {$3$};
        \node (2) at (4,0) {$2$};
        \node (1) at (6,0) {$1$};
    \end{scope}
    \begin{scope}[>={Stealth[PineGreen]},
                  every node/.style={fill=white,circle},
                  every edge/.style={draw=PineGreen,very thick}]
=        \path [->] (4) edge (3);
        \path [->] (3) edge (2);
        \path [->] (2) edge (1);
        \path [->] (4) edge[bend right = -30] (2);
        \path [->] (3) edge[bend right = 30] (1);
    \end{scope}
    \end{tikzpicture}
    \end{center}
    and focus on the node $j=4$. The linear system is as above, but now
    \[
    \begin{pmatrix}
        \Delta^{(1)}_{1,3} \\
        \Delta^{(5)}_{1,3}
    \end{pmatrix}
    \nparallel
    \begin{pmatrix}
        \Delta^{(1)}_{1,2} \\
        \Delta^{(5)}_{1,2}
    \end{pmatrix},
    \]
    so the matrix is full rank and we can recover the parameters $\lambda^{(0)}_{2,4}, \lambda^{(0)}_{3,4}$.
\end{example}

To conclude, more interventions do not necessarily allow us to recover the DAG or the parameters. Indeterminates $\lambda^{(k)}_{i,j}$ with $\ch(i) = \emptyset$ are not the only ones that cannot be recovered regardless of how many soft interventions we have - this is also the case for $\lambda^{(k)}_{2,4}$ and $\lambda^{(k)}_{3,4}$ in Example \ref{ex:more_int_no}. But there are examples where more interventions reduce the dimension of the solution space - we can recover $\lambda^{(k)}_{2,4}$ and $\lambda^{(k)}_{3,4}$ in Example~\ref{ex:more_int_yes}.

\section{Algorithm}\label{sec:algorithms}

Coupled tensor decomposition of higher-order cumulants enables us to recover parameters in an LCD model. 
In this section, we explain how to turn our results into a numerical algorithm for LCD. 
The algorithm has two main steps. The first (see Section~\ref{subsec:algo_any_pq}) is to identify intervention targets, permutation, and scaling. This turns the results of Section~\ref{sec:tensor_dec} into an algorithm, and works for both perfect and soft interventions. The second step (see Section~\ref{subsec:parameters}) is to recover the parameters of the model. We do this step for perfect interventions (following Theorem~\ref{thm_intro_perfect}) and not for soft interventions, in light of Theorem~\ref{thm:soft1}. Both steps simplify when $q \leq p$; that is, when the number of latent variables is at most the number of observed variables, see Section~\ref{subsec:injective}. We test our algorithms on synthetic data in Section~\ref{subsec:experiments}.  

\subsection{Recovery of intervention targets, permutation, and scaling}\label{subsec:algo_any_pq}

The algorithm input is the $d$-th order cumulants $\kappa_d(X^{(k)})$ as in~\eqref{eqn:cumulant_decomp2}, for $k \in K \cup \{0 \}$ ranging over contexts and a fixed $d \geq 3$. 
The input tensors are either exact (population cumulants) or approximate (sample cumulants). 

For our fixed $d$, we assume 
that the decomposition in~\eqref{eqn:cumulant_decomp2} is unique, that $\kappa_d(\epsilon^{(k)}_i)\neq 0$ for all $k$ and all $i \in [q]$, and that $\kappa_d(\epsilon^{(i_k)}_{i_k}) \neq \pm1$ for all $k$.
This is the same assumption as appears temporarily 
in the proof of Proposition \ref{prop:with_DP}.
The assumption is not necessary,
since one can combine information from multiple higher-order cumulants, 
but it helps our exposition, as in the proof of Proposition \ref{prop:with_DP}. 
In our experiments, we consider $d=3$ and $d=4$.

Tensor decomposition recovers the matrices $A^{(k)}$ up to permutation and scaling, by Proposition~\ref{prop:non_zero_coefficients}. 
Tensor decomposition thus recovers a set of matrices $\{ A^{(k)} D^{(k)} P^{(k)} : k \in K \cup \{ 0 \} \}$ for unknown scaling matrices $D^{(k)}$ and unknown permutations $P^{(k)}$.
In practice, any numerical tensor decomposition algorithm can be used for this step. We use the subspace power method~\cite{kileel2019subspace} or simultaneous diagonalization~\cite{harshman70} when $q \leq p$. 
We can assume without loss of generality that $D^{(0)} = P^{(0)} = I$, see Proposition \ref{prop:wlog_i}. Then the other scaling matrices $D^{(k)}$ have all entries $\pm 1$ except one, by~\eqref{eqn:formula_for_D} of Proposition~\ref{prop:with_DP}.
There is one entry that is not $\pm1$, which corresponds to the intervention target of context $k$.

To find the permutation, we consider the difference $A^{(0)} - A^{(k)} D^{(k)} P^{(k)}$, as $D^{(k)}$ varies over diagonal matrices with diagonal $\pm 1$ and $P^{(k)}$ varies over permutation matrices. 
That is, the product $D^{(k)} P^{(k)}$ is a signed permutation matrix.
The rank of the difference equals one if and only if we have the correct sign and order, see Corollary \ref{coroll:rank1}. 
This suggests an algorithm: for all $q \times q$ signed permutation matrices $Q$ (of which there are $2^q \times q!$) compute $A^{(k)} D^{(k)} P^{(k)} Q$ and compute the  second largest eigenvalue of the matrix 
$A^{(0)} - A^{(k)} D^{(k)} P^{(k)} Q$.
Choose $Q$ for which this eigenvalue is smallest. This $Q$ is $(P^{(k)})^\top$, up to sign, where the sign gives all but entry  $D^{(k)}_{i_k,i_k}$ of $D^{(k)}$. See Algorithm \ref{alg:permutation}.

To find the remaining entry of $D^{(k)}$, we compare the columns of $A^{(0)}$ and $A^{(k)} D^{(k)}$. The only column that differs between the two matrices is the $i_k$-th column. The $i_k$-th columns of the two matrices are collinear, with scaling $D^{(k)}_{i_k,i_k}$. This recovers the intervention target $i_k$ and the scaling matrix $D^{(k)}$. See Algorithm \ref{alg:target}.

A faster way approach to find the intervention targets, permutation, and scaling could be to implement Proposition \ref{prop:3cases_target_permutation}. One  can compare each column of $A^{(0)}$ to each column of $A^{(k)} D^{(k)} P^{(k)}$, e.g. by projecting a column $\mathbf{v}_1$ of one matrix onto another column $\mathbf{v}_2$ of the other. Each time, the residue of the projection $\| \mathbf{v}_1 - \pi_{\langle \mathbf{v}_2 \rangle}(\mathbf{v}_1)\|$ and the scaling $\frac{1}{\| \mathbf{v}_2\|} \|\pi_{\langle \mathbf{v}_2\rangle }(\mathbf{v}_1)\|$ can be stored and thresholds can be used to decide which numerical values are zero or $\pm1$. Such a procedure is numerically sensitive and influenced by the threshold. 
The threshold determines the assignment of intervention target, and we want the $|K|$ intervention targets to cover all latent nodes. One must choose a threshold that gives such an assignment. We leave this for future work.

\subsection{Recovery of parameters}
\label{subsec:parameters}

Once the intervention targets, permutation, and scaling are recovered, using Section~\ref{subsec:algo_any_pq}, we have matrices $A^{(k)} = F(I-\Lambda^{(k)})^{-1}$ for $k = 0,\ldots,q$. We can relabel contexts so that the intervention target of the $k$-th context is $k$. We construct $(I-\Lambda^{(0)})^{-1}$ as 
\begin{equation*}
    (I-\Lambda^{(0)})^{-1}_{i,j} = 
    \begin{cases}
        1 & i = j, \\
        \frac{A^{(0)}_{1,j} - A^{(i)}_{1,j}}{A^{(0)}_{1,i}} & i \neq j,
    \end{cases}
\end{equation*}
following the proof of Theorem \ref{thm_intro_perfect}.
We invert this matrix to find $\Lambda^{(0)}$. This is Algorithm \ref{alg:Lambda}. Finally, we recover $F$ using 
$F = A^{(0)} (I-\Lambda^{(0)})$.
One can compare the products $A^{(k)} (I-\Lambda^{(k)})$ for different contexts $k$ to test the goodness of fit of the LCD model. In theory, these should all return the same mixing matrix $F$.

\subsection{The injective case}\label{subsec:injective}

Restricting to the case $q\leq p$ allows for simplifications, cf. Remarks \ref{rmk:q<p_sec2}, \ref{rmk:q<p_sec3_perfect}, and \ref{rmk:q<p_sec3_soft}. 
First, the tensor decomposition step can achieved using simultaneous diagonalization~\cite{harshman70}.
We explain how to recover the intervention targets, permutation, and scaling in this setting.
When $q\leq p$ the Moore-Penrose pseudo-inverse satisfies 
\begin{equation}
\label{eqn:C-prod}
        C^{(k)} := \left( F (I-\Lambda^{(k)})^{-1} D^{(k)} P^{(k)} \right)^+ = (P^{(k)})^\top (D^{(k)})^{-1} (I-\Lambda^{(k)}) H,
\end{equation}
where $H = F^+$.
In particular, $C^{(0)} = (I-\Lambda^{(0)}) H$. 
Let $(\mathbf{c}^{(k)})^\ell$ denote the $\ell$-th row of $C^{(k)}$. Then we have the following result, in the same spirit as Proposition \ref{prop:3cases_target_permutation}.

\begin{proposition}\label{prop:int_target}
    Consider LCD under Assumption~\ref{assumption:main} where $q\leq p$. Fix $k\in K$ and let $\sigma$ be the permutation associated to the permutation matrix $P^{(k)}$. Then
    \[
    (\mathbf{c}^{(0)})^\ell = (\mathbf{c}^{(k)})^{\sigma(\ell)}
    \]
    if and only if $\ell \neq i_k$, where $i_k$ is the target of the $k$-th intervention. 
\end{proposition}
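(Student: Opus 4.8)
The plan is to unwind both $C^{(0)}$ and $C^{(k)}$ into their rows using the factorization~\eqref{eqn:C-prod} and then compare rows one at a time. Writing $H = F^+$, we have $C^{(0)} = (I-\Lambda^{(0)})H$ and $C^{(k)} = (P^{(k)})^\top (D^{(k)})^{-1}(I-\Lambda^{(k)}) H$. Reading off rows, $(\mathbf{c}^{(0)})^\ell = (I-\Lambda^{(0)})_{\ell,\cdot}\,H$, while the role of $\sigma$ (the permutation attached to $P^{(k)}$) is precisely to place the rescaled $\ell$-th row of $(I-\Lambda^{(k)})H$ into position $\sigma(\ell)$, so that $(\mathbf{c}^{(k)})^{\sigma(\ell)} = \tfrac{1}{D^{(k)}_{\ell,\ell}}(I-\Lambda^{(k)})_{\ell,\cdot}\,H$. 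Thus the whole statement reduces to comparing the two length-$q$ covectors $(I-\Lambda^{(0)})_{\ell,\cdot}$ and $\tfrac{1}{D^{(k)}_{\ell,\ell}}(I-\Lambda^{(k)})_{\ell,\cdot}$ after right-multiplication by $H$.

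Next I would record the one structural fact that makes ``after right-multiplication by $H$'' harmless: since $q\le p$ and $F$ is generic, $F$ has full column rank $q$, so $H=F^+$ has full row rank $q$ and the map $v \mapsto v\,H$ on row vectors $v \in \R^q$ is injective. Hence $(\mathbf{c}^{(0)})^\ell = (\mathbf{c}^{(k)})^{\sigma(\ell)}$ if and only if $(I-\Lambda^{(0)})_{\ell,\cdot} = \tfrac{1}{D^{(k)}_{\ell,\ell}}(I-\Lambda^{(k)})_{\ell,\cdot}$ as covectors, reducing everything to a comparison of structural-equation rows.

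For the ``if'' direction ($\ell \neq i_k$), I would invoke that a single-node intervention at $i_k$ only alters the $i_k$-th row of the weight matrix (Definition~\ref{def:types_interventions} and the setup), so for $\ell \neq i_k$ the $\ell$-th rows of $\Lambda^{(k)}$ and $\Lambda^{(0)}$ agree, and therefore so do the $\ell$-th rows of $I-\Lambda^{(k)}$ and $I-\Lambda^{(0)}$; together with $D^{(k)}_{\ell,\ell}=1$ for $\ell\ne i_k$ (from~\eqref{eqn:formula_for_D} and the normalization in Proposition~\ref{prop:wlog_i}) the two covectors coincide. For the ``only if'' direction I would compare the single diagonal entry in position $i_k$: because $\Lambda$ has zero diagonal, $(I-\Lambda^{(0)})_{i_k,i_k}=(I-\Lambda^{(k)})_{i_k,i_k}=1$, so the $i_k$-th entries of the two covectors are $1$ and $1/D^{(k)}_{i_k,i_k}$ respectively; Assumption~\ref{assumption:main}(c) guarantees $D^{(k)}_{i_k,i_k}\ne\pm1$, hence $1/D^{(k)}_{i_k,i_k}\ne 1$, the covectors differ already in this entry, and so the rows $(\mathbf{c}^{(0)})^{i_k}$ and $(\mathbf{c}^{(k)})^{\sigma(i_k)}$ differ.

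The step needing the most care is the bookkeeping of $D^{(k)}$: by~\eqref{eqn:formula_for_D} the off-target diagonal entries are only determined up to sign ($\pm1$) when even-order cumulants are used, so strictly the exact equality asserted holds once these signs are fixed to $+1$ (automatic for odd $d$, and pinned down in general by the rank-one matching of Corollary~\ref{coroll:rank1}); I would state this normalization explicitly so that ``$D^{(k)}_{\ell,\ell}=1$ for $\ell\ne i_k$'' is available. The genericity of $F$ (ensuring rank $q$) and the precise permutation convention for $\sigma$ are the only other points to pin down, and both are routine.
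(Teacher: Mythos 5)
Your proof is correct and follows essentially the same route as the paper's: both read off the rows of $C^{(0)}$ and $C^{(k)}$ from the factorization~\eqref{eqn:C-prod}, use that a single-node intervention leaves the rows $\ell\neq i_k$ of $\Lambda^{(k)}$ and the corresponding entries $D^{(k)}_{\ell,\ell}=1$ unchanged, and conclude that the rows at $\ell=i_k$ must differ. Your version is slightly more explicit where the paper simply appeals to genericity: you isolate the full row rank of $H=F^+$ (so right-multiplication by $H$ is injective), compare the diagonal entries $1$ versus $1/D^{(k)}_{i_k,i_k}\neq 1$ via Assumption~\ref{assumption:main}(c), and correctly flag the $\pm1$ sign normalization of the off-target entries of $D^{(k)}$ for even $d$, all of which the paper leaves implicit.
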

\begin{proof}
    We have formulae
    \[
    (\mathbf{c}^{(0)})^\ell = \mathbf{h}^\ell-\sum_{j\in \text{pa}(\ell)}\lambda_{\ell, j}^{(0)} \mathbf{h}^j,\qquad 
    (\mathbf{c}^{(k)})^{\sigma(\ell)} = \frac{1}{D^{(k)}_{\ell, \ell }} \left(\mathbf{h}^\ell -\sum_{j\in \text{pa}(\ell )}\lambda^{(k)}_{\ell, j}\mathbf{h}^j\right),
    \]
    by~\eqref{eqn:C-prod}.
    If $\ell \neq i_k$, then $D_{\ell, \ell}^{(k)} = 1$ and $\lambda_{\ell, j}^{(k)} = \lambda_{\ell, j}^{(0)}$ for all $j \in [q]$. Hence these two expressions coincide. If $\ell = i_k$ then under the genericity assumption we have $(\mathbf{c}^{(0)})^\ell \neq (\mathbf{c}^{(k)})^{\sigma(\ell)}$.
\end{proof}
Proposition~\ref{prop:int_target} enables us to find the intervention target $i_k$ and the permutation matrix $P^{(k)}$, as follows. 
For sufficiently general parameters $\Lambda^{(0)}$ and $\Lambda^{(k)}$ and error distribution $\epsilon^{(k)}$, the rows $({\bf c}^{(0)})^{i_k}$ and $({\bf c}^{(k)})^{\sigma(i_k)}$ differ. Hence, $i_k$ is the index of the row of $C^{(0)}$ without a match in $C^{(k)}$. We recover $P^{(k)}$ by matching the remaining rows: if $\ell \neq i_k$, then 
$P^{(k)}_{\ell, j}=1$, where
$(\mathbf{c}^{(0)})^\ell = (\mathbf{c}^{(k)})^j$. This finds all but one row of $P^{(k)}$. It has a unique completion to a permutation matrix: $P^{(k)}_{i_k, j}=1$, where $(\mathbf{c}^{(k)})^j$ is the row of $C^{(k)}$ without a match in $C^{(0)}$. 
See Algorithms \ref{alg:intervention_target_q<p} and \ref{alg:permutation_q<p}.
Algorithm \ref{alg:scaling_q<p} recovers the scalings $D^{(k)}$.

We now explain how to recover the parameters.
In light of the above, we have matrices $(A^{(k)})^+ = (I-\Lambda^{(k)}) H$. We find $H = F^+$ as follows. The $i_k$-th row of $(A^{(k)})^+$ has entries\[
(A^{(k)})^+_{i_k,j} = \sum_{\ell \in [q]} (I-\Lambda^{(k)})_{i_k,\ell} H_{\ell,j} = H_{i_k,j}.
\]
This is Algorithm \ref{alg:F_q<p}. To find $\Lambda^{(0)}$, we use
\[
(A^{(0)})^+ H^+ = (I-\Lambda^{(0)}) H H^+ = (I-\Lambda^{(0)}).
\]
Following the initial tensor decomposition, the time complexity of this algorithm is determined by the time required for the alignment and to calculate the pseudo-inverses of the products. The former takes time $O(q^3p)$ and the latter $O(p^2q^2)$, so the overall runtime is $O(q^2p\max(p,q))$.
This improves on the algorithm in  \cite{SSBU23:LinearCausalDisentanglementInterventions} provided 
we ignore the time taken to construct and decompose the higher-order cumulants.

\subsection{Numerical experiments}\label{subsec:experiments}

We test our algorithms on synthetic data. 
The general procedure for any $(p,q)$ is implemented in Algorithms \ref{alg:permutation}-\ref{alg:Lambda} in Appendix~\ref{sec:Appendix} and the injective case ($q\leq p$) is implemented in Algorithms \ref{alg:intervention_target_q<p}-\ref{alg:F_q<p}. 
For the general setting, we use $d=4$, since we use the subspace power method for tensor decomposition and it requires input of even order. 
For the injective case, we study $d=3$.  

We sample graphs using the Python package \verb|causaldag| \cite{squires2018causaldag}. It extends the Erd\H{o}s–R\'enyi model \cite{renyi1959random} to DAGs: given an edge density $\rho$, the edge $i\rightarrow j$ is added to the graph with probability $\rho$, and if and only if $i>j$. We fix $\rho=0.75$, sample the entries of $F^+$ independently from \texttt{Unif}$([-2,2])$, and the non-zero entries of $\Lambda^{(0)}$ independently from \texttt{Unif}$(\pm[0.25,1])$,
as in \cite{SSBU23:LinearCausalDisentanglementInterventions}. We fix $p=5$ and vary $q$ from $2$ to $7$. For each value of $q$, we generate $500$ models and calculate the relative Frobenius error for the recovery of $F$ and $\Lambda^{(0)}$, which is
$\frac{1}{\|M\|}\|  \widetilde{M} - M \|$,
where $\|\cdot \|$ denotes the Frobenius norm, $M$ is the true matrix, and $\widetilde{M}$ is the recovered matrix. We also calculate DAG recovery error, as follows. A penalty of $1$ is incurred if the algorithm recovers a non-existent edge or misses an existing one, while recovering an edge in the wrong direction incurs a penalty of $2$. Then we sum the penalty over all edges. 

\begin{figure}[!ht]
    \centering
    \includegraphics[width = 0.49\textwidth]{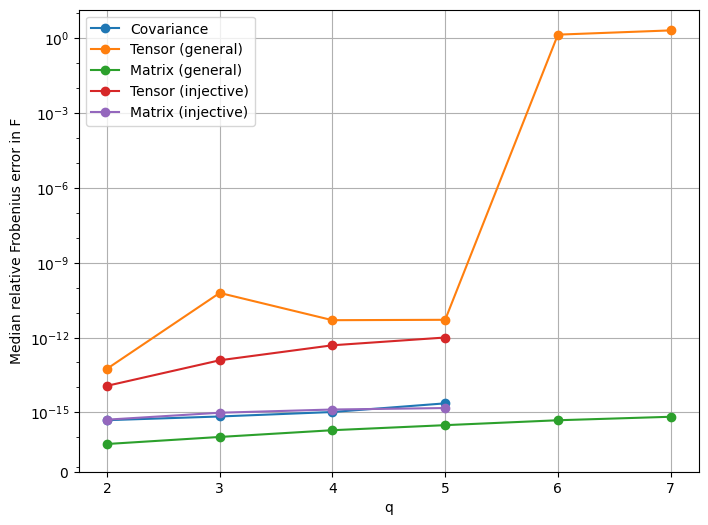}
    \:
    \includegraphics[width = 0.49\textwidth]{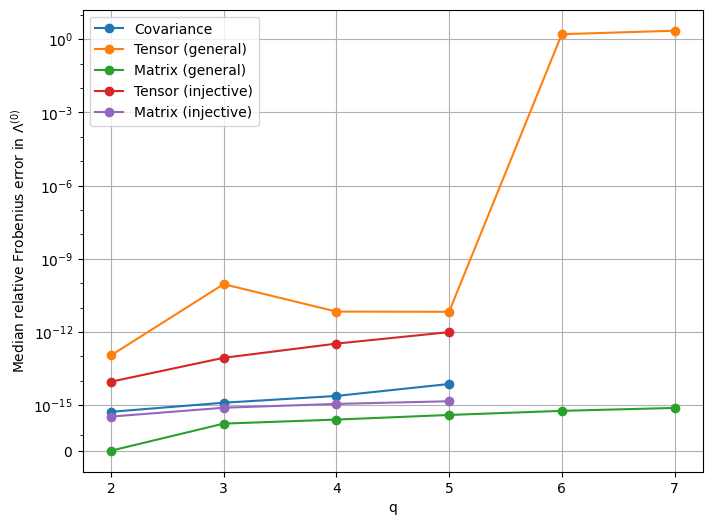}
    \caption{Median relative Frobenius error in the recovery of $F$ (left) and $\Lambda^{(0)}$ (right) when $p=5$. Note the logarithmic scale on the $y$-axis for all positive $y$-coordinates. The five algorithms are: (i) Covariance, the algorithm used in \cite{SSBU23:LinearCausalDisentanglementInterventions} to recover the parameters from the covariance matrices of $X^{(k)}$ (blue), (ii) Tensor (general), the general algorithm with cumulants as input (orange), (iii) Matrix (general), the general algorithm with factor matrices as input (green), (iv) Tensor (injective), the injective algorithm with cumulants as input (red), and (v) Matrix (injective), the injective algorithm with factor matrices as input (purple). For DAG recovery, all methods recovered the correct DAG every time, except the general tensor method when $q \geq 6$, which had a median DAG error of 3.6 for $q=6$ and 4.1 for $q=7$.} 
    \label{fig:recovery_F}
\end{figure}
We plot the median error in recovering $F$, $\Lambda^{(0)}$ in Figure \ref{fig:recovery_F}. A significant portion of the error is due to the tensor decomposition step, so we display the error of the recovery starting from the cumulants $\kappa_d(X^{(k)})$ as well as directly from the factor matrices $A^{(k)} D^{(k)} P^{(k)}$, as if these had been recovered perfectly from tensor decomposition. We used the same 500 models for 
algorithms (ii), (iii), (iv), and (v), but a different set of 500 models for algorithm (i), since model generation is part of the pipeline in \cite{SSBU23:LinearCausalDisentanglementInterventions}.

We can push our computation further when specializing to the injective case $q\leq p$, as shown in Figure \ref{fig:recovery_injective}. Here we fix $p=10$ and plot again the median error in recovering $F, \Lambda^{(0)}$, using the same approach as in the general case.
\begin{figure}
    \centering
    \includegraphics[width = 0.49\textwidth]{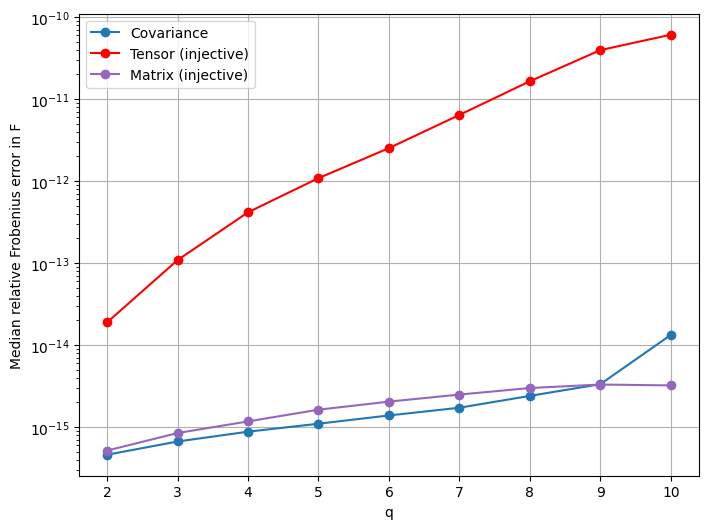}
    \:
    \includegraphics[width = 0.49\textwidth]{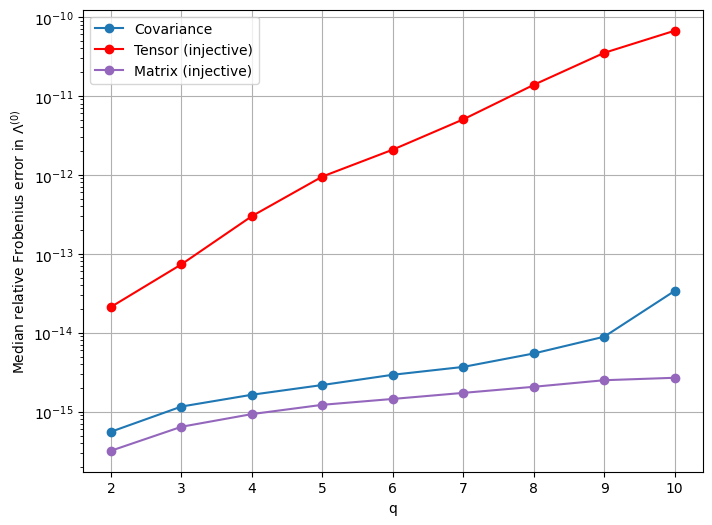}
    \caption{Median relative Frobenius error in the recovery of $F$ (left) and $\Lambda^{(0)}$ (right) when $p=10$ and $q\leq p$. Note the logarithmic scale on the $y$-axis for all positive $y$-coordinates. The three algorithms are: (i) Covariance, the algorithm used in \cite{SSBU23:LinearCausalDisentanglementInterventions} to recover the parameters from the covariance matrices of $X^{(k)}$ (blue), (ii) Tensor (injective), the injective algorithm with cumulants as input (red), and (iii) Matrix (injective), the injective algorithm with factor matrices as input (purple). For DAG recovery, all methods recovered the correct DAG every time.}
    \label{fig:recovery_injective}
\end{figure}

\section{Outlook}\label{sec:outlook}

We have studied the identifiability of linear causal disentanglement using tensor decomposition of higher-order cumulants. We view the parameters compatible with a given model as the solution space to a system of equations. Identifiability holds when the space has dimension zero, and can be achieved using perfect interventions. Here, we give an algorithm to recover the parameters. For soft interventions, we recover a compatibility class of graphs and parameters.
We conclude with some open problems for future investigation.

On the theoretical side, the first question is of combinatorial nature. The definition of $\soft(\Gcal)$ in Definition~\ref{def:softG} involves ranks of matrices. These rank conditions encode information about the paths in~$\Gcal$. This suggests the following problem.
\begin{problem}
    Find a combinatorial description of $\soft(\Gcal)$, based on the structure of $\Gcal$.
\end{problem}

Higher-order cumulants can reduce the degree of the solution space of parameters, as compared to covariance matrices, see Proposition \ref{prop:nonlinear_covariance}. An open problem is whether they restricts the set of compatible DAGs. 
\begin{problem}
    Let $\soft_2(\Gcal)$ denote the graphs $\Gcal'$ for which there exist parameters $F, \Lambda^{(k)}$ defined according to $\Gcal'$ such that the covariance matrix coincides with the covariance matrix of a model with latent DAG $\Gcal$. Are the following containments strict in general
    \[
    \soft(\Gcal) \subset \soft_2(\Gcal) \subset \{\Gcal' \,|\, \overline{\Gcal'} = \overline{\Gcal} \}?
    \]
\end{problem}

Under soft interventions, the space of parameters compatible with a given model is linear and positive dimensional, by Theorem \ref{thm:soft1}. It is then natural to ask for the best solution in this space, for an appropriate notion of best. This would give a choice of unique parameters under soft interventions.

Finally, our assumptions require that \emph{all} the latent error distributions are non-Gaussian. The results may extend to the case where some are Gaussian, cf. \cite{wang2024identifiability}.

On the algorithmic side, there are multiple possible improvements. 
The tensor decomposition contributes significantly to the error in the recovered parameters, see Figure~\ref{fig:recovery_F}. Other tensor decomposition algorithms might give more accurate output. 
One could test our algorithm starting from the factor matrices plus random noise, to study the extent to which our algorithm would work with a sufficiently accurate tensor decomposition. 
Next, one could implement a greedy search over permutations and signs to speed up the recovery of $P^{(k)}$ and $D^{(k)}$. 
Finally, it would be interesting to study the robustness to non-linearity in the latent space (e.g., $Z = (I-\Lambda)^{-1} \epsilon + \alpha \epsilon^2$ for small $\alpha \in \R$) or in the mixing map (e.g., $X = F Z + \alpha Z^2$, where $Z^2$ is a vector with entries $Z_i Z_j$ for all $i,j\in [q]$ and $\alpha \in \R$ small).

\bibliographystyle{alpha}
\bibliography{biblio}

\appendix
\section{Pseudocode}\label{sec:Appendix}
We provide pseudocode for the algorithms in Section \ref{sec:algorithms}. Their implementations are available at \url{https://github.com/paulaleyes14/linear-causal-disentanglement-via-cumulants}.
Below, the $i$-th row of a matrix $M$ is denoted by $\mathbf{m}^i$ and the $i$-th column by $\mathbf{m}_i$.

Algorithms \ref{alg:Lambda} and \ref{alg:F_q<p} below work if the set of intervention targets coincides with the set of latent variables. In theory, this is true by our assumptions. In practice, the algorithm could assign the wrong target to an intervention due to numerical errors. When implementing the algorithm, we force the interventions to be on distinct nodes.

\subsection{General case}

\phantom{a}

\begin{breakablealgorithm} 
\caption{Recovery of the permutation matrix (recover\_perm)}
\label{alg:permutation}
    \begin{algorithmic}[1]
        \State Input: $M = A^{(0)}$ and $M^{(k)} = A^{(k)} D^{(k)} P^{(k)}$.
        \State Output: $P^{(k)}$, the permutation matrix encoding the relabeling of the latent nodes in the context corresponding to an intervention at node $i_k$.
        \vspace{0.5cm}
        \State $q\gets$number of columns of $M$
        \State perm $\gets $ set of all $q\times q$ permutation matrices with entries $\pm 1$
        \State $\sigma \gets $ maximum float 
        \State $P \gets $ None
        \For{ mat in perm}
        \State newmat $\gets M-M^{(k)}\cdot \text{mat}$
        \State $ss \gets $ second largest abs(singular value) of newmat
        \If{$ss<\sigma$}
        \State $\sigma \gets ss$
        \State $P \gets \text{mat}^\top$
        \Else
        \State continue
        \EndIf
        \EndFor
        \State \Return $P$
    \end{algorithmic}
\end{breakablealgorithm}
~\\

\begin{breakablealgorithm} 
\caption{Recovery of the intervention target and scaling (recover\_target\_scaling)}
\label{alg:target}
    \begin{algorithmic}[1]
        \State Input: $M = A^{(0)}$, $M^{(k)} = A^{(k)} D^{(k)} P^{(k)}$ and a threshold \texttt{thr}.
        \State Output: the target of the $k$-th intervention $i_k$ and the diagonal matrix $D^{(k)}$.
        \vspace{0.5cm}
        \State $q\gets $ number of columns of $M$
        \State $D\gets I_{q\times q}$
        \State $P \gets $ recover\_perm$(M,M^{(k)})$
        \State $N \gets M^{(k)} P^\top$
        \State scalings $\gets $ list()
        \State indices $\gets $ list()
        \For{$i = 1$ to $q$}
        \State $v \gets $ project $\mathbf{n}^i$ onto $\mathbf{m}^i$
        \If{$|v - \mathbf{n}^i| < $ \texttt{thr}}
        \State Add $v[1] / \mathbf{m}^i[1]$ to scalings 
        \State Add $i$ to indices 
        \EndIf
        \EndFor
        \State $d\gets $ largest entry of scalings
        \State $i_k \gets $ indices(index of $d$)
        \State $D[i_k,i_k] \gets d$
        \State \Return $i_k$, $D$
    \end{algorithmic}
\end{breakablealgorithm}
~\\

\begin{breakablealgorithm} 
\caption{Recovery of $\Lambda^{(0)}$ (recover\_Lambda)}
\label{alg:Lambda}
    \begin{algorithmic}[1]
        \State Input: $M = A^{(0)}$, $M^{(k)} = A^{(k)} D^{(k)} P^{(k)}$ for all $k\in[q]$, and a threshold \texttt{thr}.
        \State Output: $\Lambda^{(0)}$.
        \vspace{0.5cm}
        \State $q\gets $ number of columns of $M$
        \State $L \gets I_{q\times q}$        
        \For{$k=1$ to $q$}
        \State $P \gets $ recover\_perm$(M,M^{(k)})$
        \State $(i_k, D) \gets $ recover\_target\_scaling$(M,M^{(k)}, \texttt{thr})$
        \State $A = M^{(k)} \; P^\top$ inverse$(D)$
        \For{$j=1$ to $q$}
        \If{$j\neq i_k$}
        \State $L[i_k,j] \gets \frac{M[1,j] - A[1,j]}{M[1,i_k]}$
        \EndIf
        \EndFor
        \EndFor
        \State \Return  $I_{q\times q} - \hbox{inverse}(L)$
    \end{algorithmic}
\end{breakablealgorithm}
~\\

\subsection{Injective case}

\phantom{a}

\begin{breakablealgorithm} 
\caption{Recovery of the intervention target (recover\_target)}
\label{alg:intervention_target_q<p}
\begin{algorithmic}[1]
    \State Input: $C = C^{(0)}$ and $C^{(k)}=(P^{(k)})^\top (D^{(k)})^{-1}(I-\Lambda^{(k)})H$.
    \State Output: ($i_k$, $j_k$) such that $i_k$ is the intervention target of the $k$-th context, and $P^{(k)}_{i_k, j_k}=1$.
    \vspace{0.5cm}
    \State $q \gets $number of rows of $C$
    \State $\text{matched}_\text{obs} \gets$ set$()$
    \State $\text{matched}_\text{int} \gets$ set$()$
    \For{$i=1$ to $q$}
    \If{$\mathbf{c}^i$ has matching row in $C^{(k)}$}
    \State $j\gets$ index of matching row
    \State Add $i$ to $\text{matched}_\text{obs}$
    \State Add $j$ to $\text{matched}_\text{int}$
    \EndIf
    \EndFor
    \State $i_k\gets $ $[q]\backslash \text{matched}_\text{obs}$
    \State $j_k\gets$ $[q]\backslash \text{matched}_\text{int}$
    \State \Return $(i_k,j_k)$
\end{algorithmic}
\end{breakablealgorithm}
~\\

\begin{breakablealgorithm} 
\caption{Recovery of the permutation matrix (recover\_perm)}
\label{alg:permutation_q<p}
    \begin{algorithmic}[1]
        \State Input: $C = C^{(0)}$ and $C^{(k)}$, recovered from the context with intervention target $i_k$.
        \State Output: the permutation matrix $P^{(k)}$.
        \vspace{0.5cm}
        \State $q\gets$number of rows of $C$
        \State $P\gets \mathbf{0}_{q\times q}$
        \State $(i_k,j_k)\gets $ recover\_target($C,C^{(k)}$)
        \State $P[i_k,j_k]\gets1$
        \For{$i=1$ to $q$}
        \If{$i=i_k$}
        \State continue
        \Else
        \State $j\gets$ index of matching row in $C^{(k)}$ to $\mathbf{c}^i$
        \State $P[i,j]\gets 1$
        \EndIf
        \EndFor
        \State \Return $P$
    \end{algorithmic}
\end{breakablealgorithm}
~\\

\begin{breakablealgorithm} 
\caption{Recovery of the scaling (recover\_scaling)}
\label{alg:scaling_q<p}
    \begin{algorithmic}[1]
        \State Input: $C = C^{(0)}$ and $C^{(k)}$.
        \State Output: the diagonal matrix $D^{(k)}$.
        \vspace{0.5cm}
        \State $q\gets$number of rows of $C$
        \State $D\gets I_{q\times q}$
        \State $(i_k,j_k)\gets $ recover\_target($C,C^{(k)}$)
        \State $P\gets $ recover\_perm($C,C^{(k)}$)
        \State $B^{(0)} \gets $ pseudoinverse($P C^{(0)}$)
        \State $B^{(k)} \gets $ pseudoinverse($P C^{(k)}$)
        \State $v \gets $ project $\mathbf{b^{(k)}}_{i_k}$ onto $\mathbf{b^{(0)}}_{i_k} $
        \State $D[i_k,i_k] \gets v[1]/\mathbf{b^{(0)}}_{i_k}[1]$ 
        \State \Return $D$
    \end{algorithmic}
\end{breakablealgorithm}
~\\

\begin{breakablealgorithm} 
\caption{Recovery of $F = H^+$ (recover\_F)}
\label{alg:F_q<p}
    \begin{algorithmic}[1]
        \State Input: $C = C^{(0)}$ and $C^{(k)}$ for all $k\in [q]$.
        \State Output: $F$ such that $C^{(0)}=(I-\Lambda^{(0)})H$ and $H = F^+$.
        \vspace{0.5cm}
        \State $q\gets$ number of rows in $C$
        \For{$k=1$ to $q$}
        \State $(i_k,j_k)\gets $ recover\_target($C,C^{(k)}$)
        \State $P\gets$ recover\_perm($C,C^{(k)}$)
        \State $D\gets$ recover\_scaling($C,C^{(k)}$)
        \State $A = P D C^{(k)}$
        \State $\mathbf{h}^{i_k} = \mathbf{a}^{i_k}$
        \EndFor
        \State \Return pseudoinverse($H$)
    \end{algorithmic}
\end{breakablealgorithm}
~\\

\end{document}